\documentclass[11pt]{article}

\usepackage[preprint]{acl}

\usepackage{hyperref}  
\usepackage{times}
\usepackage{latexsym}
\usepackage{amssymb}
\usepackage{amsthm}
\usepackage[T1]{fontenc}

\usepackage[utf8]{inputenc}

\usepackage{microtype}

\usepackage{inconsolata}

\usepackage{graphicx}
\usepackage{float}  
\usepackage{xspace}
\usepackage{enumitem}
\usepackage{booktabs}
\usepackage{multirow}
\usepackage{pifont}
\usepackage{amsmath}
\usepackage{bm}
\usepackage{xcolor}
\usepackage{soul}  
\usepackage{textcomp}
\usepackage{placeins}  

\newcommand{\method}{{\sc eXTC}\xspace}
\newcommand{\spo}{{\sc SPO4SOP}\xspace}

\usepackage[most]{tcolorbox}
\usepackage{needspace}                       
\definecolor{promptgreen}{HTML}{8EC4B0}      
\definecolor{promptbg}{HTML}{FFFFFF}         
\newtcolorbox{prompttemplatebox}[2][]{
  enhanced,
  colback=promptbg,
  colframe=promptgreen,
  colbacktitle=promptgreen,
  coltitle=white,
  fonttitle=\bfseries,
  title={#2},
  boxrule=0.9pt,
  arc=2pt,
  left=2mm, right=2mm, top=1.5mm, bottom=1.5mm,
  #1
}

\usepackage{algorithm}
\usepackage{algorithmic}

\newtheorem{problem}{Problem}
\newtheorem{definition}{Definition}
\newtheorem{lemma}{Lemma}

\newcommand{\cmark}{\ding{51}}%
\newcommand{\pmark}{(\ding{51})}

\newcommand{\btheta}{\bm{\theta}}%

\newcommand{\cotho}{{\sc CoT}\xspace}

\newcommand{\mipro}{{\sc MIPROv2}\xspace}

\newcommand{\gepa}{{\sc GEPA}\xspace}

\makeatletter
\newcommand\footnoteref[1]{\protected@xdef\@thefnmark{\ref{#1}}\@footnotemark}
\makeatother

\newcommand{\algrule}[1][.5pt]{\par\vskip.5\baselineskip\hrule height #1\par\vskip.5\baselineskip}

\newcommand{\cbit}{\begin{compactitem}}
\newcommand{\ceit}{\end{compactitem}}
\newcommand{\cben}{\begin{compactenum}}
\newcommand{\ceen}{\end{compactenum}}

\newcommand{\beq}{\begin{equation}}
	\newcommand{\eeq}{\end{equation}}

\definecolor{darkgreen}{RGB}{41,166,41}

\newcommand{\bit}{\begin{itemize}}
	\newcommand{\eit}{\end{itemize}}
\newcommand{\ben}{\begin{enumerate}}
	\newcommand{\een}{\end{enumerate}}

\newcounter{x}

\newcommand{\bx}{\mathbf{x}}
\newcommand{\br}{\mathbf{r}}

\definecolor{celadon}{rgb}{0.67, 0.88, 0.69}
\definecolor{carolinablue}{rgb}{0.6, 0.73, 0.89}

\soulregister{\textbf}{1}  
\newcommand{\hly}[1]{{\sethlcolor{yellow!30}\hl{#1}}}
\newcommand{\hlg}[1]{{\sethlcolor{green!30}\hl{#1}}}
\newcommand{\hlr}[1]{{\sethlcolor{red!25}\hl{#1}}}      
\newcommand{\hlbest}[1]{{\sethlcolor{green!40}\hl{#1}}}   
\newcommand{\hlrunup}[1]{{\sethlcolor{orange!20}\hl{#1}}}   

\definecolor{aliceblue}{rgb}{0.867, 0.917, 0.964}
\definecolor{aliceyellow}{rgb}{0.999, 0.945, 0.796}
\definecolor{alicegray}{rgb}{0.844, 0.867, 0.898}

\newcommand{\fmodel}{f_{\text{model}}}
\newcommand{\fgrad}{f_{\text{grad}}}
\newcommand{\fup}{f_{\text{update}}}

\newcommand{\mFn}{\mathcal{F}^{-}} 
\newcommand{\mFc}{\mathcal{F}^{+}} 

\definecolor{dodgerblue}{rgb}{0.12,0.565,1}

\newcommand{\mX}{\mathcal{X}}
\newcommand{\mY}{\mathcal{Y}}
\newcommand{\mD}{\mathcal{D}}
\newcommand{\mR}{\mathcal{R}}
\newcommand{\mC}{\mathcal{C}}
\newcommand{\mG}{\mathcal{G}}
\newcommand{\mS}{\mathcal{S}}
\newcommand{\mP}{\mathcal{P}}

\newcommand{\Dbatch}{\mD_{\text{batch}}}

\newcommand{\nld}{{\Theta}_{\text{lang}}}

\newcommand{\grad}{\texttt{Gradient LLM}\xspace}
\newcommand{\up}{\texttt{Update LLM}\xspace}

\title{Structured Prompt Optimization Meets Reinforcement Learning \\for Global and Local Interpretability over Complex Text}

\author{
  \textbf{Tianyang Zhou}\textsuperscript{1} \quad
  \textbf{Wenbo Chen}\textsuperscript{2,\,$\dagger$} \quad
  \textbf{Pierre Jinghong Liang}\textsuperscript{1} \quad
  \textbf{Leman Akoglu}\textsuperscript{1} \\
  \textsuperscript{1}Carnegie Mellon University \quad
  \textsuperscript{2}Amazon \\
  \texttt{tzhou3@andrew.cmu.edu} \quad
  \texttt{wbchen@amazon.com} \quad
  \texttt{liangj@tepper.cmu.edu} \quad
  \texttt{lakoglu@andrew.cmu.edu}
}

\begin{document}
\maketitle
\renewcommand{\thefootnote}{\fnsymbol{footnote}}
\footnotetext[2]{This work was conducted outside the author's role at Amazon.}
\renewcommand{\thefootnote}{\arabic{footnote}}
\setcounter{footnote}{0}
\begin{abstract}

LLMs have advanced text classification, yet existing paradigms face a trade-off:
  supervised (label only) fine-tuning is scalable but offers limited reasoning on complex text and lacks broader model transparency, while discrete prompt optimization offers human-readable instructions but struggles with performance and scalability.
We introduce \method (\textsc{eX}plainable \textsc{T}ext \textsc{C}lassifier) with three progressive stages:
(1) learning a Standard Operating Procedure (SOP, or rulebook) in natural language via a new Structured Prompt Optimization algorithm; (2) SOP-grounded reasoning distillation from a large teacher LLM into a compact LM; and (3) expanding reasoning capabilities beyond the initial SOP via reinforcement learning. 
This design enables \method to provide (i) fast inference via a compact LM, with (ii) inference-time local reasoning traces, alongside a global, modular explanation of its learned domain rules, while (iii) significantly outperforming existing paradigms across diverse benchmarks in both classification performance and explanation quality,
with stage-by-stage gains.





\end{abstract}

\section{Introduction}
\label{sec:intro}

%


Text classification finds widespread applications in finance (e.g., market sentiment) \cite{DBLP:journals/mansci/DasC07}, healthcare (e.g., clinical note triage) \cite{jagannatha2016bidirectional}, law (e.g., case outcome prediction) \cite{chalkidis2020legal}, cybersecurity (e.g., spam/phishing/malware detection) \cite{salloum2022systematic}. With the rise of {LLM-generated} text, it has found critical applications in safety and reliability; e.g., detecting LLM-vs-human content \cite{DBLP:conf/icml/Mitchell0KMF23}, hallucination \cite{ji2023survey}, and harmful/unsafe generations \cite{gehman2020realtoxicityprompts}.


Despite its popularity, text classification remains challenging due to (i) the high dimensionality of textual input, (ii) complex and nuanced semantics that extend beyond surface-level words, and (iii) long-range dependencies across text. Moreover, many applications require (iv) robust high-level discrimination rather than reliance on spurious low-level signals, as well as (v) interpretable explanations for individual decisions and global understanding of what the model has learned. 

\begin{figure*}[!t]
\vspace{-0.1in}
  \centering
  {\setlength{\fboxsep}{2pt}%
  \begin{minipage}[c]{0.62\linewidth}
    \fbox{\parbox[t]{\dimexpr\linewidth-4pt\relax}{\small\sloppy\emergencystretch=2em
      \textbf{(a) Example rule from the SOP (excerpt):} \hfill \textcolor{purple!70!black}{\textsf{[Global Explanation]}}\\
      \textbf{Trigger Pattern:} Apply \texttt{readmit} when discharge happens before \hly{pending biopsy / pathology} comes back AND specialty follow-up is scheduled within days to weeks. \ldots\\
      \textbf{Exceptions:} Pending pathology \hlg{without confirmed disease}. \ldots\\
      \textbf{Examples:} \ldots\vspace*{1pt}%
    }}%
  \end{minipage}\hfill
  \begin{minipage}[c]{0.36\linewidth}
    \centering
    \small
    \renewcommand{\arraystretch}{0.88}%
    \begin{tabular}{lrr}
      \toprule
      \textbf{(c) Method} & mF1 & Judge \\
      \midrule
      Zero-shot CoT       & 0.517 & 3.58 \\
      \cmidrule(lr){1-3}
      Stage I (SPO4SOP)   & 0.597 & 3.65 \\
      Stage I+II (R-SFT)  & \colorbox{orange!30}{\textbf{0.620}} & \colorbox{orange!30}{\textbf{4.29}} \\
      Stage I+II+III (\method) & \colorbox{green!40}{\textbf{0.652}} & \colorbox{green!40}{\textbf{4.31}} \\
      \bottomrule
    \end{tabular}
  \end{minipage}\\[1pt]
  \fcolorbox{gray!50}{gray!8}{\parbox[t]{\dimexpr\linewidth-5pt\relax}{\small\sloppy\emergencystretch=2em
    \textbf{(b) MIMIC Readmission Prediction.}\,\,
    \textit{Discharge summary:} Patient has liver spots suspicious for cancer. A biopsy was taken; patient was discharged stable \emph{while awaiting results}, with an \emph{outpatient} oncology clinic visit in 10 days.
  }}\\[1pt]
  \begin{minipage}[c]{0.46\linewidth}
    \fbox{\parbox[t][2.1cm][c]{\dimexpr\linewidth-4pt\relax}{\small\setlength{\fboxsep}{1pt}%
      \textbf{Zero-shot CoT reasoning:}\\
      ``Suspicious for metastatic disease with \hly{pending biopsy} and planned oncology follow-up indicate serious malignancy concern likely needing chemotherapy/radiation soon. High readmission risk.''\\
      \textit{Predicted:} \texttt{readmitted}\,\,\ding{55}
    }}%
  \end{minipage}\hfill
  \textbf{\large v.s.}\hfill
  \begin{minipage}[c]{0.46\linewidth}
    \fbox{\parbox[t][2.1cm][c]{\dimexpr\linewidth-4pt\relax}{\small\setlength{\fboxsep}{1pt}%
      \textbf{\method (RL) reasoning:} \hfill \textcolor{purple!70!black}{\textsf{[Local Explanation]}}\\
      ``\hly{Biopsy done; results pending} but \hlg{no indication of active malignancy}; no intensive treatment in progress, patient stable and ambulatory at discharge.''\\
      \textit{Predicted:} \texttt{not\_readmitted} (\checkmark)
    }}%
  \end{minipage}}%
  \vspace{-0.1in}
  \caption{\textbf{\method at a glance on MIMIC Readmission.}
  \textbf{(a)} a rule excerpt from our \textit{learned} rulebook (or SOP); 
  \textbf{(b)} on a test case: Zero-shot CoT over-fires on the surface trigger, whereas \method\ correctly invokes the rule's \emph{exception} (no confirmed disease yet); \textbf{(c)} both metrics improve monotonically across our three Stages. See Case Study in \S\ref{ssec:study}.}
  \label{fig:crownjewel}
  \vspace{-0.2in}
\end{figure*}


Traditional solutions that rely on many textual features \cite{joachims1998text}, even high-level topics, struggle to address these challenges as features provide fragmented representations that fail to capture rich semantics, long-range context, robust high-level reasoning, and coherent explanations.
Explanations based on feature importance and saliency often fail to reflect the deeper rationale underlying classification decisions \cite{lipton2018mythos}.


Large language models (LLMs) have transformed many areas of natural language processing (NLP), with major implications for text classification. In fact, BERT \cite{devlin2019bert}, as one of the first Transformer-based LMs, already significantly improved classification performance across a wide range of NLP benchmarks, with many follow-up models building on this success \cite{vajjala2025text}.
The attractiveness of LLMs for text classification is evident from the advantages they offer, including ingrained world and domain knowledge, holistic contextual comprehension, natural language interface, high-level reasoning that is less reliant on low-level spurious signals, and reasoning capabilities that support more interpretable predictions. 

However, existing solutions for LLM-driven classification remain limited in key aspects. In general, these approaches fall into two main categories: (1) parameter-efficient (supervised) fine-tuning (PEFT/SFT) of pretrained backbones \cite{DBLP:journals/air/WangCJPCYY25}, and (2) prompt optimization (PO), which learns human-readable commands in context that guide a frozen backbone \cite{DBLP:conf/emnlp/PryzantI0L0023,DBLP:conf/iclr/WangLW0LZJXH24} to generate the correct label tokens as outputs.
SFT, and in particular PEFT, is scalable, 
however, it does not readily provide reasoning traces or a global explanation of the underlying model. Chain-of-Thought prompting \cite{DBLP:conf/nips/Wei0SBIXCLZ22} of models fine-tuned only on labels does not readily produce competitive reasoning, especially for complex text.
In contrast, hard (discrete) prompt optimization (PO)  can yield human-readable prompts that serve as global explanations, which, however, are often unstructured. In addition, hard PO
is computationally demanding, making it less scalable for large-corpus tasks. (See Table \ref{tab:related-work}, and Apdx. \ref{app:related} for detailed related work.)

This work bridges these gaps by introducing \method, an effective and efficient LLM-powered
\textsc{eX}plainable \textsc{T}ext \textsc{C}lassifier
with human-readable global and local explanations.
In a multi-stage framework (Fig.~\ref{fig:frame}), \method bridges structured prompt optimization, reasoning-based SFT, and reinforcement learning; producing a small distilled LM classifier that also offers reasoning (local explanation), and a structured rulebook in the form of a decision set (global explanation). The following summarizes our main contributions.



\vspace{-0.05in}
\begin{itemize}[noitemsep, topsep=0pt, leftmargin=1em]
    \item \textbf{Reasoning-centric Text Classification:} We introduce \method, a new LLM-powered approach to \textit{classification and reasoning over complex text} that provides both global and local explanations.
(See example in Fig. \ref{fig:crownjewel} (a) and (b).)
    \item \textbf{Formulation and Algorithm Innovations:} \method has a multi-stage framework: (i) \textbf{SOP Learning} learns a rulebook (SOP) via a new structured prompt optimization (SPO) algorithm, (ii) \textbf{SOP-Grounded Distillation} transfers SOP-guided reasoning traces from a large LM onto a small LM, which (iii) further improves through reinforcement learning \textbf{Beyond SOP}, with a curriculum designed to increase performance on the hard (i.e., teacher-failed) cases. (See Fig. \ref{fig:crownjewel} (c).)

    \item \textbf{Desirable Properties:} \method is (i) \textit{effective}, capitalizing on SPO, SFT and RL; (ii) \textit{efficient}, with a compact 4B-parameter student model obtained via distillation; and (iii) \textit{doubly-interpretable}; rulebook constitutes the global ``SOP'', and small LM's reasoning at inference serves as a local explanation per input.

    \item \textbf{Evaluation:} Extensive experiments on three real-world benchmarks with ground-truth evidence for labels show that \method outperforms baselines in terms of both classification performance and explanation quality, with results often improving progressively from SPO to SFT to RL across stages, demonstrating the effectiveness of the proposed multi-stage learning paradigm.
\end{itemize}
\vspace{-0.025in}

\vspace{-0.025in}
\section{Problem and Preliminaries}
\label{sec:prelim}

\begin{figure*}[!t]
 \vspace{-0.1in}
  \centering
  \includegraphics[width=\linewidth]{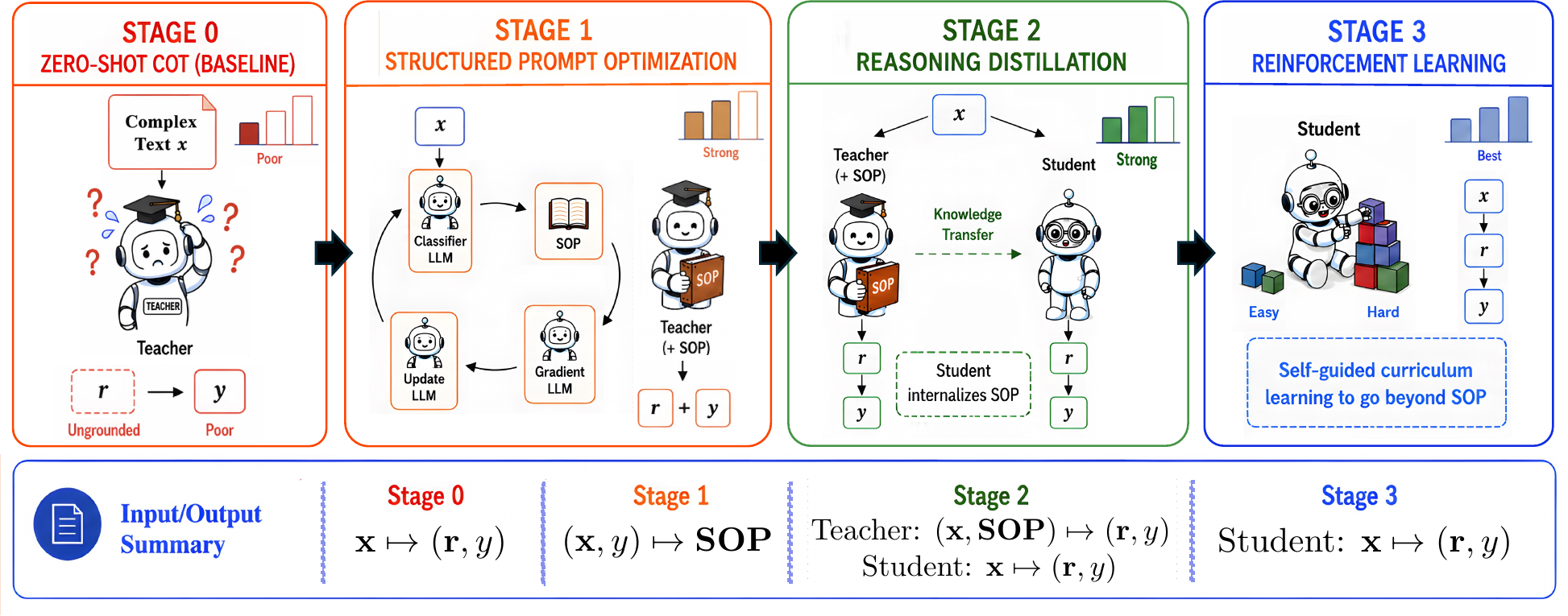}
  \vspace{-0.3in}
  \caption{\method Overview: \textbf{Stage I}: SPO learns a rulebook (or SOP) that \textit{grounds} \textbf{Stage II}: teacher-reasoning and distillation; \textbf{Stage III} employs RL to increase hit-rate on hard (i.e. teacher-failed) cases. SOP serves as the global explanation, while
   the small (student) LM is  the text classifier with local explanations (i.e. instance-level reasoning).}
  \label{fig:frame}
  \vspace{-0.215in}
\end{figure*}

\vspace{-0.05in}
\subsection{Problem Statement}
\vspace{-0.015in}

\noindent\textbf{Typical Text Classification:}
Let $\mX$ denote the input space with domain $\nld$ representing natural language, where $\bx_i \in \mX$ represents a text document; i.e., a sequence of tokens.
Let $\mY$, $|\mY|=C$, denote the set of possible classes where $y_i \in \mY$ is the $i$th example's class label.

Typical text classification aims to learn a model $\fmodel(\cdot;\btheta)$, 
given training set $\mD = \{(\bx_i,y_i)\}_{i=1}^n$, that achieves low classification error.

\noindent\textbf{Decision Sets: }
A decision set $\mR$ consists of an unordered
collection of rules $\{R_1,\ldots,R_K\}$ that are \textit{stand-alone}, i.e., not linked via ‘else’ statements.
This makes them easier to interpret as compared to hierarchical decision lists where a rule deep in a decision list is in fact a long rule and applies only when none of the preceding rules apply.

\vspace{-0.05in}
\begin{definition}[\textbf{Decision Composition}]
\label{def:composition}
Each rule $R_j \in \mR$ has a target label $l_j \in \mY$ and either ``\emph{fires}'' for an input $\bx$ (predicting $l_j$) or ``\emph{abstains}''. Given a priority order $p: \mY $$\to$$ \{1,\ldots,|\mY|\}$ with default $y_{\text{def}} := \arg\min_{l} p(l)$, a decision set's prediction is

\vspace{-0.23in}
\begin{equation}
\fmodel(\bx; \mR) \;:=\; R_{\arg\max_j\, p(R_j(\bx))}(\bx)
\end{equation}
\vspace{-0.20in}

\noindent
where $R_j(\bx) := y_{\text{def}}$ when $R_j$ abstains, so if no rule fires, $\fmodel(\bx; \mR) = y_{\text{def}}$. In the binary case $(\mY=\{0,1\})$, this reduces to the standard max-pool: $\fmodel(\bx; \mR) = 1$ iff at least one rule fires.
\end{definition}

Decision sets have been
effectively used for explainable ML for tabular data \cite{conf/kdd/LakkarajuBL16,macha2018explaining}, where each rule corresponds to an itemset; i.e., a conjunction of predicates over numerical and/or categorical features (e.g., \texttt{age}$>65$ $\wedge$ \texttt{location}$=$California).
Different from prior literature, our work introduces decision sets of rules in \textit{natural language}. This allows us to utilize them (1) as a structured prompt of an LLM-powered text classifier, and (2) as human-readable explanations behind decisions.

\vspace{-0.05in}
\begin{problem}[\textbf{Explainable Text Classification}]
    Given a labeled text corpus $\mD = \{(\bx_i,y_i)\}_{i=1}^n$, $\bx_i \in \nld$, estimate a text classification model 
    $\fmodel(\cdot;\btheta): \bx \mapsto \{\br, y\}$ that produces {\textbf{\textit{(1) instance-level local explanation}}} $\br \in \nld$ (in the form of reasoning), and {\textbf{\textit{(2) model-level global explanation}}} $\mR$$=$$\{R_k \in \nld\}$ (in the form of a decision set of natural language rules).
\end{problem}
\vspace{-0.05in}

Our problem statement differs from traditional text classification by explicitly requiring both local and global explanations in natural language, motivating a language-model-driven approach. Further, the global explanation is structured and decomposable, enabling readability and modular inspection. Such explanations provide a more coherent and richer reasoning than fragmented feature importance scores or saliency maps, which often only identify portions of the input text most associated with the predicted label, rather than offering
a deeper semantic interpretation of the content.



\vspace{-0.05in}
\subsection{Verbalized Machine Learning}
\vspace{-0.05in}

Traditional neural models for text classification exhibit numerical parameters $\btheta$ in continuous space.
Motivated by modern LLMs' prowess in many NLP tasks, recent ``verbalized'' machine learning approaches \cite{xiao2024verbalized,DBLP:journals/nature/YuksekgonulBBLLHGZ25}, employ a pretrained (frozen) LLM
 as a function approximator that is parameterized by its (text) prompt. In essence, both the input data $\mD $$\in$$ \nld$ and the model parameters $\btheta $$\in$$ \nld$ (i.e., the prompt) are represented in natural language.
 LLMs are further employed for prompt optimization, to derive ``textual gradients'' that guide learning the prompt through reasoning and reflection on model mispredictions across training epochs.


\vspace{-0.075in}
\section{Proposed Framework}
\label{sec:method}

\vspace{-0.05in}
\subsection{Motivation and Overview}
\vspace{-0.05in}

Fig. \ref{fig:frame} illustrates the three stages of \method: \textbf{I. SOP Learning} employs structured prompt optimization (SPO) to learn a rulebook or SOP (Standard Operating Procedure); \textbf{II. SOP-Grounded Distillation} collects grounded reasoning traces and performs distillation; and \textbf{III. Beyond SOP} further improves reasoning beyond the learned SOP through reinforcement learning.
{Stages I and II access the large LM only as a black box; the small LM is trained in Stages II and III.}
We provide an overview motivating each stage's role within the overall pipeline.

\noindent\textbf{Stage I. SOP Learning (\S\ref{ssec:spo})}
\textit{Motivation:} Given $\langle$text, label$\rangle$ training pairs $\mD $$=$$ \{(\bx_i,y_i)\}_{i=1}^n$, a basic approach would be SFT/PEFT an existing language model (LM) to map input text to its label. However, this model would not readily produce individual explanations/reasoning $\br_i$'s.
CoT-prompting \cite{DBLP:conf/nips/Wei0SBIXCLZ22} does not necessarily induce effective reasoning, especially for complex text, as we show in \S\ref{ssec:main_results} where direct CoT yields poor classification and explanation quality.

\textit{Overview:} As with RAG-based generations for knowledge-intensive tasks \cite{DBLP:conf/nips/LewisPPPKGKLYR020}, having access to additional information in the context, such as a task-specific SOP (Standard Operating Procedure) or a rulebook could help obtain enriched reasoning.
To this end, we learn a rulebook comprising a structured prompt in the form of
 stand-alone rules (i.e., a decision set) in natural language. We propose a new \textit{structured prompt optimization} (\textbf{SPO}) algorithm based on verbalized machine learning (VML) \cite{xiao2024verbalized} to learn such a task-specific rulebook, which we call Standard Operating Procedure (SOP).










\noindent\textbf{Stage II. SOP-Grounded Distillation (\S\ref{ssec:rsft})}
\textit{Motivation and Overview:} Given the learned SOP, a teacher LLM is prompted with the enriched and grounded context $\langle \text{SOP}, \bx \rangle$ to generate reasoning and label pairs $\langle \br, y \rangle$. Retaining only the teacher-generated traces whose predicted labels agree with the ground truth, we distill triplets ${(\bx_i, \br_i, y_i)} \subset \mD$ onto a smaller (student) LM via reasoning-supervised fine-tuning (R-SFT), training the model to map input text to both reasoning and label outputs. This distillation serves two purposes: (i) equipping the student model with \textit{SOP-free} reasoning capability, and (ii) producing a smaller and more efficient inference model. The former enables the model to move beyond the SOP through subsequent reinforcement learning (RL), while the latter facilitates efficient RL rollouts.

\noindent\textbf{Stage III. Beyond SOP (\S\ref{ssec:rl})}
\textit{Motivation:}
SPO requires costly iterative optimization over multiple LLM calls; therefore, SOP is learned from a relatively small subset of training data, limiting the teacher model's reasoning capability and coverage. In turn, using only those teacher-generated reasoning traces with correct predictions limits distillation to a subset of the training set.

\textit{Overview:}
To further improve reasoning capability, we employ reinforcement learning (RL); enabling the student-LM to generate multiple traces per input within each batch. The training set naturally decomposes into easy and hard examples, reflecting the teacher model's varying accuracy, and our goal is to rapidly improve performance on the harder cases without imposing an explicit manual curriculum.
To this end, we over-sample candidate inputs and filter out both overly easy and overly hard examples whose rollouts yield zero GRPO (Group Relative Policy Optimization) advantage \cite{DBLP:journals/corr/abs-2402-03300}. Constructing batches from examples with non-zero advantage promotes more informative learning signals, thereby improving RL's sample-efficiency and effectiveness.





\vspace{-0.05in}
\subsection{I: SOP Learning via SPO}
\label{ssec:spo}
\vspace{-0.05in}

The first goal is to construct a task-specific rulebook to be placed in-context with each test sample to \textit{ground} a language model's reasoning and classification.
The problem can be viewed as prompt optimization (PO), specifically as \textit{structured} PO, where the rulebook is to comprise \textit{standalone} decision rules expressed in natural language.
Since the rulebook governs the LM output, we also refer to it as a Standard Operating Procedure (SOP).

Algo. \ref{algo:spo4sop} presents \spo, our proposed structured prompt optimization approach to learning a task-specific SOP (See diagram in Apdx. \ref{app:flow} Fig. \ref{fig:flow}).
In essence, it employs verbalized ML and text gradients to update the SOP by orchestrating a Classifier LLM, Gradient LLM, and Update LLM.

\vspace{-0.015in}
\noindent\textbf{\textit{Algorithm at a glance.}} \spo\ starts \emph{cold} with an empty decision set $\mR_0 = \emptyset$ and iterates over $T$ batches drawn from $\mD_{\text{train}}$. Each iteration handles two disjoint error modes. (i) \textbf{False coverage} (a rule fires on $\bx_i$ but its label $l_j$ disagrees with ground truth $y_i$): the Gradient LLM proposes \emph{exceptions} from the ``offending'' example, and the Update LLM rewrites the rule into a more precise candidate. (Lines~\ref{lin:fc-start}--\ref{lin:fup-rev}) (ii) \textbf{Blind spots} (no rule fires and the default label $y_{\text{def}}$ is wrong): the Gradient LLM extracts an \emph{error pattern} per missed sample, and the Update LLM synthesizes new candidate rules grouped by target label $l \in \mY$. Candidates from each iteration accumulate in a \emph{persistent pool} $\mP_t$ throughout the iterations, such that a rule generated early but not initially selected remains available for later subset search. (Lines~\ref{lin:bs-start}--\ref{lin:expand})

\vspace{-0.015in}
\noindent\textbf{\textit{Subset selection.}} From $\mP_t$ we select a small active set $\mR_t \subseteq \mP_t$ ($|\mR_t| $$\le$$ K$) that maximizes macro-$F_1$ on $\mD_{\text{val}}$ with an AIC-style sparsity penalty $\lambda \cdot |\mR_t| / |\mD_{\text{val}}|$. Because each rule's per-sample prediction is cached (Line~\ref{lin:classify}), evaluating any candidate subset reduces to composing cached predictions, thanks to the composability of decisions across rules (Definition~\ref{def:composition}). As such, the LLM cost is \textit{linear} in the pool size $|\mP_t|$ rather than exponential in subset count. We approximate the argmax with beam search (Line~\ref{lin:subselect}); with details in Apdx.\ \ref{app:subselect}. After $T$ iterations, \spo\ returns the final $\mR_T$ as the learned SOP.
Importantly, it serves as a global explanation of what the model has learned.

\renewcommand\algorithmicrequire{\textbf{Given:}}

\renewcommand\algorithmicensure{\textbf{Output:}}

\begin{algorithm}[!t]
	\caption{\spo:  \textbf{Structured Prompt Optimization for Standard Operating Procedure}\label{algo:spo4sop}}
	\small{
	\begin{algorithmic}[1]
		\REQUIRE  $\mD_{\text{train}} = \{(\bx_i,y_i)\}_{i=1}^n$, $\mD_{\text{val}}$, max iter.s $T$, max subset size $K$, sparsity penalty $\lambda$, priority $p: \mY \to \{1,\ldots,|\mY|\}$ (default  $y_{\text{def}} := \arg\min_{l} p(l)$)
		\ENSURE SOP (i.e. decision set $\mR$ in natural language)
		\algrule
		
\STATE Init. SOP, pool,  candidates $\mR_0 $$=$$ \mP_0 $$=$$ \mC $$:=$$ \emptyset$  \COMMENT{cold start}

 \FOR[		\textcolor{dodgerblue}{ (Iterative Optimization)
		\textit{Revise rules \& rule set w.r.t. $\mD_{\text{val}}$ error}
		}]{$t=1,\ldots,T$, batch $\mD_{\text{batch}} \subseteq \mD_{\text{train}}$}
        
\FOR{$R_j\in \mR_{t-1}$   } \label{lin:fc-start}
\STATE $\mFc_j := \{\bx_i\in \mD_{\text{batch}} \;|\; \fmodel(\bx_i; R_j) = l_j \neq  y_i  \}$
		\COMMENT{false coverage by Rule $R_j$}
  \ENDFOR  

\textcolor{dodgerblue}{$\blacktriangleright$
		\textit{\textbf{Handling False Coverage} — Update mis-``firing'' rules with new exceptions}
		}
\FOR{$R_j\in \mR_{t-1}$} 
\STATE  $\mG_j := \emptyset$
\FOR{$\bx_i \in \mFc_j$} 
\STATE $\mG_j := \mG_j\; \cup$ $\fgrad(\bx_i, y_i; l_j, R_j)$ \COMMENT{add Exception by  \texttt{Gradient\_LLM};  
Apdx. \ref{ssec:fgrad}}
\ENDFOR
\STATE  $\;\mC :=\mC \cup \fup(R_j, \mG_j)$ \label{lin:fup-rev}
\COMMENT{update $R_j$ based on Exceptions via \texttt{Update\_LLM}; Apdx. \ref{ssec:fup1}}
\ENDFOR  
	
\textcolor{dodgerblue}{$\blacktriangleright$	\textit{\textbf{Handling Misclassifications} — Add new rules}}
\STATE $\mFn := \{\bx_i \in \mD_{\text{batch}} \;|\; \fmodel(\bx_i; \mR_{t-1}) $$=$$ y_{\text{def}} \neq y_i\}$ \label{lin:bs-start}
\COMMENT{blind spot; uncovered pattern}
\FOR{$\bx_i \in \mFn$}
\STATE $g_i :=\fgrad(\bx_i, y_i; \{R\in \mR_{t-1} \;|\; l = y_i\})$  
\COMMENT{Error-Pattern by \texttt{Gradient\_LLM}; Apdx. \ref{ssec:fgrad}}
\ENDFOR
\FOR{ $l \in \mY$ s.t. $\mG_l := \{g_i : \bx_i \in \mFn,\, y_i = l\} \neq \emptyset$}
\STATE $\mC := \mC \cup \fup(\mG_l;\, l)$ \label{lin:fup-new}
\COMMENT{cand.s for $l$; Apdx. \ref{ssec:fup2}}
\ENDFOR
\STATE	Expand persistent pool; $\mP_t := \mP_{t-1} \cup \mC$ \label{lin:expand}

			\algrule
		\textcolor{dodgerblue}{$\blacktriangleright$  {(Rule Subset Selection)}: \textit{Select few rules on $\mD_{\text{val}}$}}
            \STATE Classify $\mD_{\text{val}}$ by \texttt{Classifier\_LLM} (Apdx. \ref{ssec:fmodel}) against new candidate rules; $\;\widehat{y}_{ij} = \fmodel(\bx_i \in \mD_{\text{val}};  R_j $$\in$$ \mC)$ \label{lin:classify}
  \STATE	$\mR_{t} $$:=$$ \arg\max_{\mathcal{S} \subseteq \mP_t,\, |\mathcal{S}| \leq K} \big[\, M(\mathcal{S}; \mD_{\text{val}}) $$-$$ \lambda |\mathcal{S}| / |\mD_{\text{val}}|\, \big]$ \label{lin:subselect}
  \COMMENT{beam search; Apdx. \ref{app:subselect} ($M(\cdot\;;\;\cdot)$ is {macro-}$F_1$)}

   \ENDFOR
		\STATE \textbf{Return} $\mR_T$ \label{lin:return}
        %
	\end{algorithmic}
	}
\end{algorithm}
 \setlength{\textfloatsep}{0.05in}

\vspace{-0.015in}
\noindent\textbf{\textit{Efficiency.}~}
\spo's key compute cost is the number of required LLM API calls, which may directly translate into monetary cost in practical deployments. Therefore, we analyze the time complexity of \method in terms of the \#LLM-calls.

\vspace{-0.05in}
\begin{lemma}[Scalable subset search]
\spo's rule subset search scales linearly with the number of candidate rules $m$, rather than with the exponential number of possible rule subsets. This is enabled by decision composition (Definition~\ref{def:composition}), which requires only a linear number of LLM calls in $m$ to obtain decisions for arbitrary rule subsets.
\end{lemma}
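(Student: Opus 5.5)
The plan is to count LLM calls made during the subset-selection phase (Lines~\ref{lin:classify}--\ref{lin:subselect} of Algo.~\ref{algo:spo4sop}). The argument has three steps: bound the calls needed to cache all per-rule decisions, show that the value of any subset can be computed from that cache with no further calls, and conclude that the search itself adds nothing to the call count.

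\emph{Step 1: caching cost.} For each candidate rule $R_j$ in the pool, with $|\mP_t| = m$, and each validation example $\bx_i \in \mD_{\text{val}}$, one call to the Classifier LLM produces the single-rule decision $\widehat{y}_{ij} = \fmodel(\bx_i; R_j)$. This decision is either $l_j$ (the rule fires) or $y_{\text{def}}$ (it abstains). Each rule is classified once, in the iteration where it enters $\mC$, and its column of predictions is cached. The persistent pool carries these columns forward, so the cumulative number of calls is exactly $m \cdot |\mD_{\text{val}}| = O(m)$ for a fixed validation set.

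\emph{Step 2: composition without further calls.} Fix any subset $\mS \subseteq \mP_t$. By Definition~\ref{def:composition},
\[
\fmodel(\bx_i; \mS) = R_{\arg\max_{j \in \mS} p(R_j(\bx_i))}(\bx_i).
\]
The right-hand side depends only on the cached values $\widehat{y}_{ij}$ for $j \in \mS$ and on the fixed priority map $p$. So $\fmodel(\bx_i;\mS)$ is obtained by a max over at most $|\mS| \le K$ cached entries, with zero LLM calls. Consequently $M(\mS;\mD_{\text{val}})$ (macro-$F_1$) and the penalized objective $M(\mS;\mD_{\text{val}}) - \lambda|\mS|/|\mD_{\text{val}}|$ are pure arithmetic on the cache.

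\emph{Step 3: search cost.} The beam search, or even exhaustive enumeration of all $\sum_{k\le K}\binom{m}{k}$ subsets, only evaluates such objectives. It therefore adds local computation but no LLM calls, and the total LLM cost of subset search is $m\,|\mD_{\text{val}}|$, linear in $m$. A naive approach that prompts the LLM with each candidate subset as a joint SOP would need a number of calls proportional to the number of evaluated subsets, which is exponential in the exhaustive case.

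The main obstacle is justifying Step 2. It requires that a set's LLM-produced decision truly equals the composition of the individual rules' decisions, rather than an emergent joint judgment. I would handle this by making it definitional: the classifier used in subset selection is \emph{defined} as in Definition~\ref{def:composition}, that is, as the composition of stand-alone rules. The stand-alone, unordered nature of a decision set is exactly what makes this legitimate, since each rule's firing does not depend on the other rules.
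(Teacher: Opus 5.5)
Your proposal is correct and follows essentially the same argument as the paper: cache the $nm$ per-rule Classifier-LLM decisions, each computed once as the rule enters the pool, then evaluate any subset by composing the cached predictions via Definition~\ref{def:composition}, with no further LLM calls. The paper also quantifies the beam search's own work as $O(bKm)$ subset evaluations, each using an $O(1)$ per-sample incremental max update; your Step~3 covers this only qualitatively, which leaves the linear-in-$m$ LLM-call bound unaffected.
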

\vspace{-0.175in}

\begin{proof}
\textit{See Apdx.\ \ref{app:efficiency}}.
\end{proof}
\vspace{-0.1in}



\begin{figure}[t]
 
  \hspace{-0.15in} \includegraphics[width=1.05\linewidth]{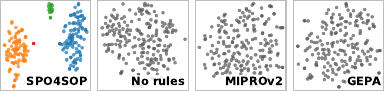}
  \vspace{-0.25in}
  \caption{t-SNE of PO-classifier reasoning on MIMIC: 
  \spo's embeddings form distinct clusters per rule, while those of baselines are diffused and unstructured.}
  \label{fig:tsne}
\end{figure}

As a qualitative study, Fig.~\ref{fig:tsne} shows 2D embedding of PO-classifier reasonings on MIMIC dataset. 
\spo's reasonings form well-separated clusters corresponding to individual rules, while those from baseline PO methods are diffused and unstructured, showing that \spo's modular standalone rules induce distinct reasoning.


\vspace{-0.1in}
\subsection{II: SOP-Grounded Distillation}
\label{ssec:rsft}
\vspace{-0.05in}

\noindent\textbf{\textit{Teacher trace generation and rejection sampling.}}
To absorb SOP's task knowledge into a compact student model, we collect SOP-grounded reasoning traces from a teacher LLM. A natural choice is to use the same LLM as the Stage~I Classifier, ensuring consistent SOP interpretation across stages. 
We ground the context by concatenating the learned SOP with each training example as $\langle \text{SOP}, \bx_i \rangle$, and prompt the teacher for a reasoning-label pair. We draw at most $M$$=$$4$ candidate traces per example and accept the first whose predicted label matches ground truth as the distillation trace $(\br_i, y_i)$ for $\bx_i$. Examples with all $M$ mispredictions are denoted as \emph{hard} and excluded from R-SFT, while the other \emph{easy} ones form the distillation set. This hard/easy partition further seeds the RL curriculum in \S\ref{ssec:rl}.

\noindent\textbf{\textit{Reasoning-supervised fine-tuning (R-SFT).}}
Letting $\mD_{\text{rsft}} = \{(\bx_i, \br_i, y_i)\}$ denote the accepted (easy) triplets, we fine-tune a smaller student LM with LoRA \cite{hu2022lora} via next-token cross-entropy on $[\br_i; y_i]$ conditioned on $\bx_i$. To give the subsequent RL stage a stable starting point, we apply class-balanced upsampling on $\mD_{\text{rsft}}$: each epoch traverses the majority class once while minority classes are oversampled to match. This prevents the student from initializing as a majority-biased predictor that would destabilize early policy updates. Training details are in Apdx.~\ref{app:config}.

After R-SFT, the student maps input text directly to both reasoning and label outputs without needing the SOP in-context at inference: i.e., SOP's task knowledge has been internalized into the model's parameters. Student's per-example reasoning then serves as a \textit{local explanation} of its prediction, complementing the SOP's \textit{global explanation} (\S\ref{ssec:spo}).

\vspace{-0.075in}
\subsection{III: Beyond SOP via RL}
\label{ssec:rl}
\vspace{-0.05in}

\begin{table}[t]
\centering
\caption{Distribution (\%) of rollout outcomes within each hard/easy group at the start of RL ($G$=8), shown for ICLR Review as a representative example; the other two datasets follow the same pattern (see Apdx.~\ref{app:rldyn}, Tab.~\ref{tab:hardness_bucket_full}). Only the \colorbox{yellow!30}{$1$\textendash $7/G$} column carries non-zero GRPO advantages and provides gradient signal.}
\vspace{-0.1in}
\label{tab:hardness_bucket}
\setlength{\tabcolsep}{4pt}
\setlength{\fboxsep}{2pt}%
\small
\begin{tabular}{llrrr|r}
\toprule
 &  & \multicolumn{3}{c}{Rollouts correct (\%)} & \\
\cmidrule(lr){3-5}
Dataset & Diff. & 0/G & \colorbox{yellow!30}{$1$\textendash $7/G$} & G/G & \% of band \\
\midrule
ICLR Review & hard & 54.0 & \colorbox{yellow!30}{43.6} & 2.4 & \colorbox{yellow!30}{29.5} \\
 & easy & 0.7 & \colorbox{yellow!30}{18.1} & 81.2 & \colorbox{yellow!30}{70.5} \\
 & all & 8.6 & \colorbox{yellow!30}{21.9} & 69.5 & (14.8\% nat.) \\
\bottomrule
\end{tabular}
\end{table}

\noindent\textbf{\textit{Why standard GRPO underperforms on classification.}}
Stage~II equips the student with SOP-grounded reasoning but leaves the \emph{hard} examples (\S\ref{ssec:rsft}) unaddressed. We turn to RL to extend coverage by letting the student self-explore. While GRPO-style RL has been most actively studied on verifiable reasoning tasks such as math and code, where rollouts span correct and incorrect outcomes \cite{DBLP:journals/corr/abs-2402-03300,DBLP:journals/corr/abs-2503-14476}, text classification with a small and often imbalanced label space exposes a distinct failure mode: \textit{rollout homogeneity}. Within a group of $G$ rollouts on the same input, all rollouts often share the same predicted label, yielding a zero-advantage group, hence no gradient signal. Concretely, each rollout $i$ receives a binary correctness reward $R_i = 2\cdot\mathbf{1}[\widehat{y}_i = y] - 1 \in \{-1, +1\}$, from which GRPO computes a within-group normalized advantage
\begingroup
\setlength{\abovedisplayskip}{1pt}\setlength{\belowdisplayskip}{1pt}
\begin{equation}\small
\hat{A}_i \;=\; \frac{R_i - \mathrm{mean}(\{R_j\}_{j=1}^{G})}{\mathrm{std}(\{R_j\}_{j=1}^{G}) + \varepsilon},
\label{eq:adv}
\end{equation}
\endgroup
and optimizes the clipped surrogate

\vspace{-0.075in}
\begingroup
\setlength{\abovedisplayskip}{1pt}\setlength{\belowdisplayskip}{1pt}
\setlength{\jot}{-2pt}
\begin{multline}\small
\mathcal{J}_{\text{GRPO}}(\theta) = \mathbb{E}\Big[\tfrac{1}{G}\!\sum_i \min\!\big(\rho_i \hat{A}_i,\\
\text{clip}(\rho_i, 1{-}\epsilon, 1{+}\epsilon)\hat{A}_i\big)\Big] - \beta\, D_{\text{KL}}[\pi_\theta \| \pi_{\text{ref}}]
\label{eq:grpo}
\end{multline}
\endgroup
with importance ratio $\rho_i = \pi_\theta(\br_i|\bx) / \pi_{\theta_{\text{old}}}(\br_i|\bx)$, and $\varepsilon$ a small constant. When all rollouts in a group share the same prediction, $\mathrm{std}(\{R_j\}) = 0$ and every $\hat{A}_i$ vanishes. To quantify this at the start of RL, we run $G{=}8$ rollouts per training example with the Stage~II R-SFT model and bucket each example by its number of correct rollouts. Taking ICLR Review as a representative example (Table~\ref{tab:hardness_bucket}; analogous patterns hold on the other two datasets, Apdx.~\ref{app:rldyn}), easy examples are mostly already mastered and a majority of hard examples are unsolved, leaving only a thin informative band of $21.9\%$ where rollouts within a group disagree; implying that most of a \textit{random} GRPO batch contributes no gradient.

\noindent\textbf{\textit{Balanced Dynamic GRPO (\textsc{BD-GRPO}).}}
We introduce \textsc{BD-GRPO}, a classification-aware variant of GRPO that resolves rollout homogeneity through two coupled mechanisms operating in a specific order. First, training pool is partitioned by class; for each class $c$ with per-class quota $n_c$, we draw a candidate batch $\widetilde{\mathcal{B}}_c$ of size $\kappa n_c$ from $\mathcal{D}_{\text{train}}^{(c)}$ ($\kappa{>}1$ the oversample factor; value in Apdx.~\ref{app:config}) and run $G$ rollouts per candidate. Second, in the same spirit as DAPO's dynamic sampling \cite{DBLP:journals/corr/abs-2503-14476}, we discard groups with $\mathrm{std}(\{R_j\}) = 0$ and take the first $n_c$ survivors per class:
\begingroup
\setlength{\abovedisplayskip}{3pt}\setlength{\belowdisplayskip}{3pt}
\begin{equation}\small
\mathcal{B} = \bigcup_{c \in \mathcal{Y}}\, \text{first}_{n_c}\!\left\{\bx \in \widetilde{\mathcal{B}}_c \,:\, \mathrm{std}(\{R_j(\bx)\}_{j=1}^{G}) > 0\right\},
\label{eq:bdgrpo}
\end{equation}
\endgroup
producing a class-balanced batch composed entirely of informative groups, on which the GRPO update (Eq.~\ref{eq:grpo}) is then computed. If a class yields fewer than $n_c$ informative survivors, we top up its quota with random draws from the same class. Performing dynamic sampling \emph{after} per-class oversampling is critical under severe label imbalance: replacing it with class-agnostic oversampling (\textsc{D-GRPO}) causes RL to over-predict the majority class and collapse on MIMIC (Fig.~\ref{fig:rl_wallclock}).

\begin{figure}[h]
  \vspace{-0.05in}\centering\includegraphics[width=\linewidth]{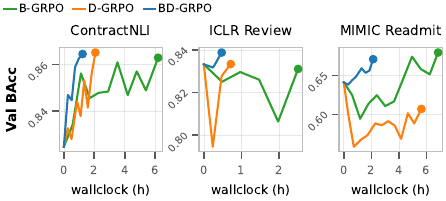}
   \vspace{-0.3in}
  \caption{Bal-acc vs. time, truncated at each run's RL peak. \textsc{BD-GRPO} hits peak $3$-$5\times$ faster than \textsc{B-GRPO}; \textsc{D-GRPO} collapses on MIMIC w/out class balance.}
  \label{fig:rl_wallclock}
\vspace{-0.075in}
\end{figure}

\begin{figure}[h]
  \centering
   \vspace{-0.1in}
  \begin{tabular}{l}
\includegraphics[width=0.7\linewidth]{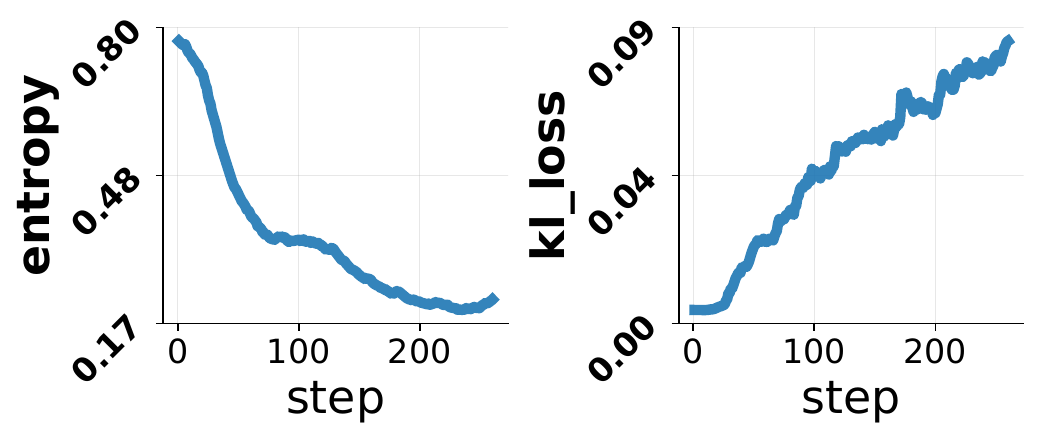} \\
\includegraphics[width=0.7\linewidth]{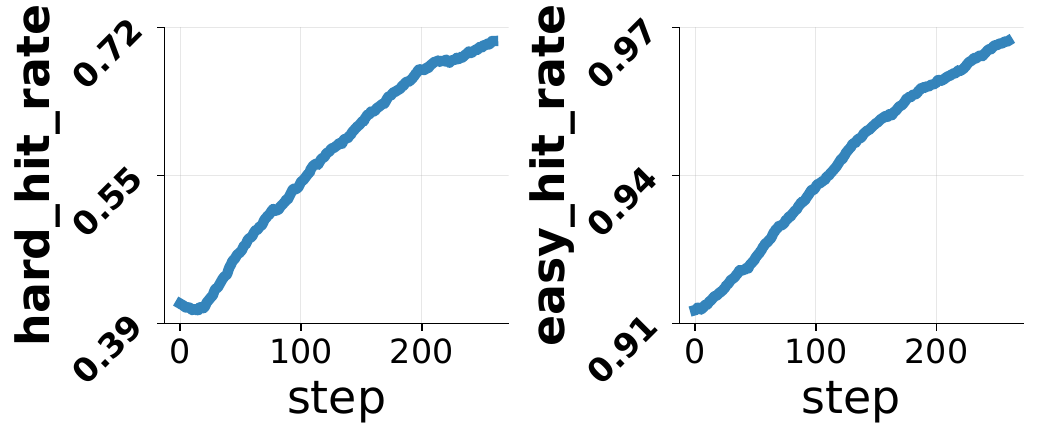}
\end{tabular}
 \vspace{-0.2in}
  \caption{RL training progression on ContractNLI (other datasets in Apdx.~\ref{app:rldyn}). 
  \textbf{Take-aways:} Entropy stays bounded (no collapse), KL remains small, \textit{hard hit rate steadily increases}, and easy hit rate remains high.
  }
  \label{fig:rl_main}
\end{figure}

\begin{table*}[!t]
    \centering
    \vspace{-0.1in}
   \setlength{\tabcolsep}{3pt} \caption{Dataset Overview and Summary Statistics}
    \vspace{-0.1in}
\label{tab:dataset}
  \scalebox{0.72}{
    \begin{tabular}{l c c c c l l l}
        \toprule
        \textbf{Dataset} & \textbf{Split (Tr/V/Te)} & \textbf{Avg Tok.} & \textbf{Min Class (\%)} & \textbf{Class} & \textbf{Domain} & \textbf{Task Description} & \textbf{Evidence Type} \\
        \midrule
       \textsf{ContractNLI} & 7191/1037/2091 & 2185 & 11 & 3 & Legal & Contract  NLI & Contract Spans \\ 
        \midrule
      \textsf{ICLR Review}  & 8552/1000/655 & 2242 & 37 & 2 & Peer Review &  Paper Acceptance  & Meta-Review \\ 
        \midrule
      \textsf{MIMIC Readmit}  & 8000/1000/1000 & 3209 & 22 & 2 & Clinical & Patient Readmission  & LACE Risk Factors \\ 
        \bottomrule
    \end{tabular}
    }
    \vspace{-0.2in}
\end{table*}

\vspace{-0.05in}
\noindent\textbf{\textit{Self-guided curriculum and wall-clock time.}}
Together, the two mechanisms implement an \emph{implicit self-guided curriculum}: at every step, the model trains on examples currently in its informative band rather than on random batches. The informative band is itself hard-heavy: as illustrated in Table~\ref{tab:hardness_bucket} for ICLR Review, hard examples occupy a far larger share than in the natural training set (similarly for other datasets), so a \textsc{BD-GRPO} over-represents hard examples relative to a random batch without any hand-crafted schedule.
Fig.~\ref{fig:rl_main} shows that training is healthy throughout RL: entropy stays bounded, KL-to-reference remains small, hard-set accuracy rises substantially while easy-set accuracy is retained.

The upsampling strategy appears to multiply per-step rollout cost, but the wall-clock comparison in Fig.~\ref{fig:rl_wallclock} demonstrates the opposite: \textsc{BD-GRPO} reaches its peak performance $3$-$5\times$ faster than vanilla \textsc{B-GRPO} across all datasets while matching or exceeding its peak. The speedup comes from two sources: (i) every retained group provides a high-quality gradient signal, yielding more stable update directions than \textsc{B-GRPO}'s noisy mix of informative and zero-advantage groups; so \textsc{BD-GRPO} needs far fewer total optimizer steps to peak; and (ii) rollouts within a step are independent and dispatched concurrently across GPUs via vLLM's batched inference, whereas optimizer steps must run sequentially. In addition, \textsc{B-GRPO}'s class-balanced batches do not translate to a balanced \emph{effective} signal: zero-advantage groups are unevenly distributed across classes and batches, causing the model to oscillate between over-predicting each class, further slowing convergence. Apdx.~\ref{app:config} gives RL 
setting details.

RL learnings on hard cases are incorporated back into the SOP, deferred to Apdx. \ref{app:rl2rules} for brevity.


\vspace{-0.075in}
\section{Experiments}
\label{sec:experiments}

\vspace{-0.075in}
\subsection{Experiment Setup}
\vspace{-0.05in}

\noindent
\noindent\textbf{Datasets:~}
We use publicly available
datasets from three domains: legal contracts, academic peer reviews, and hospital readmissions. All datasets exhibit ``evidence'' for each label which we use for evaluating explanations.
Table \ref{tab:dataset}
provides a summary while Apdx. \ref{app:data} gives details.




\noindent
\noindent\textbf{Models and Configurations:~}
We organize the baselines into two: prompt-based and fine-tuned.
\cotho employs zero-shot Chain-of-Thought prompting \cite{DBLP:conf/nips/Wei0SBIXCLZ22}, while \mipro \cite{opsahl2024optimizing} and \gepa \cite{agrawal2026gepa} are two hard prompt optimization (PO) alternatives to our \spo, producing discrete but unstructured PO.
SFT uses a classification head
to fine-tune on the labels (only). In contrast, our Reasoning Distillation (Stage II) fine-tunes using both SOP-grounded teacher reasoning and the labels.

Student backbone is Qwen3-4B~\cite{DBLP:journals/corr/abs-2505-09388}, a compact LM used for all SFT and RL methods. gpt-4.1-mini is the classifier for all prompt-based methods (zero-shot CoT, MIPROv2, GEPA, \spo) and the teacher generating R-SFT reasoning traces.
All fine-tuning runs were carried out on 4$\times$ NVIDIA H100 80GB GPUs. Prompt-optimization methods are API-only through OpenAI.
For Cls-head SFT with no native reasoning, we extract post-hoc gradient-based input saliency via Captum \cite{kokhlikyan2020captum}.
Setup for PO, SFT, and RL are detailed in Apdx.~\ref{app:config}.

\noindent
\noindent\textbf{Metrics:~}
We evaluate both classification performance and explanation quality.
The former is
 measured by \textbf{macro-F1} and \textbf{balanced accuracy}.
Explanations are compared against ground-truth evidence using \textbf{NLI} and \textbf{LLM-judge} scores. NLI scores use MiniCheck-Flan-T5-Large \cite{tang2024minicheck}, 
a fact-checking model that scores per-sentence entailment of an explanation against the evidence. LLM judge (gemini-2.5-flash-lite) follows G-Eval \cite{liu2023geval}: for each (evidence, explanation) pair, it returns a 1--5 Likert score weighted by token-level output probabilities.
Further details are given in
Apdx. \ref{app:metric}.

\begin{table*}[!t]
\vspace{-0.1in}
\centering
\caption{Classification performance (macro-F1, Balanced Accuracy) and Explanation performance (NLI and LLM Judge scores) across datasets. Top: prompt-based; Bottom: fine-tuned methods. The \hlbest{best} and \hlrunup{runner-up} are highlighted in green and blue, resp.
\textbf{Take-aways:} 
\spo substantially outperforms CoT; SFT improves classification, but falls short in explanation; SPO-grounded R-SFT significantly boosts explanation quality  over standard SFT, and \method achieves the best overall performance across datasets and the majority of metrics.
}
\label{tab:main}
\vspace{-0.1in}
\small
\setlength{\fboxsep}{2pt}%
\resizebox{\linewidth}{!}{%
\begin{tabular}{llrr|rr||rr|rr||rr|rr}
\toprule
 &  & \multicolumn{4}{c}{ContractNLI} & \multicolumn{4}{c}{ICLR Review} & \multicolumn{4}{c}{MIMIC Readmission} \\
\cmidrule(lr){3-6}\cmidrule(lr){7-10}\cmidrule(lr){11-14}
 Method &  Family& mF1 & BAcc & NLI & Judge & mF1 & BAcc & NLI & Judge & mF1 & BAcc & NLI & Judge \\
\midrule
CoT prompt & Zero-shot & 0.743 & 0.757 & 0.576 & 4.31 & 0.789 & 0.786 & 0.160 & 3.18 & 0.517 & 0.528 & 0.045 & 3.58 \\
\textbf{SPO4SOP (I)} & SPO & 0.790 & 0.789 & \hlbest{\textbf{0.578}} & \hlrunup{\textbf{4.33}} & 0.795 & 0.815 & \hlrunup{\textbf{0.200}} & 3.32 & 0.597 & 0.593 & \hlbest{\textbf{0.178}} & 3.65 \\
\cmidrule(lr){1-14}
SFT & Cls. Head & \hlrunup{\textbf{0.847}} & \hlrunup{\textbf{0.865}} & 0.243 & 3.43 & \hlbest{\textbf{0.825}} & \hlbest{\textbf{0.832}} & 0.112 & 2.97 & 0.609 & \hlbest{\textbf{0.726}} & 0.051 & 3.36 \\
\textbf{Reas. Distill. (I+II)} & SPO+R-SFT & 0.802 & 0.811 & \hlrunup{\textbf{0.576}} & 4.24 & 0.821 & 0.821 & 0.196 & \hlrunup{\textbf{3.40}} & \hlrunup{\textbf{0.620}} & 0.623 & \hlrunup{\textbf{0.057}} & \hlrunup{\textbf{4.29}} \\
\textbf{eXTC (I+II+III)} & SPO+R-SFT+RL & \hlbest{\textbf{0.849}} & \hlbest{\textbf{0.882}} & 0.554 & \hlbest{\textbf{4.37}} & \hlbest{\textbf{0.825}} & \hlrunup{\textbf{0.829}} & \hlbest{\textbf{0.217}} & \hlbest{\textbf{3.44}} & \hlbest{\textbf{0.652}} & \hlrunup{\textbf{0.642}} & \hlrunup{\textbf{0.057}} & \hlbest{\textbf{4.31}} \\
\bottomrule
\end{tabular}
}
\vspace{-0.175in}
\end{table*}

\vspace{-0.1in}
\subsection{Main Results}
\label{ssec:main_results}
\vspace{-0.05in}

Table \ref{tab:main} presents results across datasets and metrics.
Basic zero-shot CoT prompting baseline considerably underperforms across tasks, demonstrating the necessity of training the language models for complex text classification. Our \textbf{\spo substantially outperforms CoT} in terms of both classification and explanation performance, despite being trained only on a small \textit{subset} of training data.

\textbf{SFT improves classification over prompting} by using \textit{all} available training data for fine-tuning, but \textbf{it deteriorates notably in explanation quality}, even against basic CoT, because saliency-extracted explanations are fragmented and confined to input spans, underscoring the necessity for reasoning-based fine-tuning.
In fact, our \textbf{SPO-grounded reasoning distillation significantly boosts explanation quality} over end-to-end SFT, by combining Stage I (learned SOP) and Stage II (R-SFT), while maintaining robust classification performance.

\textbf{\method achieves the best overall performance across datasets and majority of metrics} by further boosting R-SFT model through Stage III (RL with self-guided curriculum).
The improvements in classification and explanation performance justify \method's modular design, where SPO grounds R-SFT and RL further boosts the result.

\begin{table}[t]
\centering
\caption{\textbf{PO ablation}. Top: zero-shot CoT baseline; Bottom: hard-PO methods.  \textbf{Take-aways:} \spo outperforms zero-shot CoT and matches or exceeds both hard-PO baselines, MIPROv2 and GEPA,  producing a structured, decision set of rules.}
\label{tab:po_ablation}
\setlength{\tabcolsep}{4pt}
\vspace{-0.1in}
\scalebox{0.8}{
\begin{tabular}{lrrrrrr}
\toprule
 & \multicolumn{2}{c}{ContractNLI} & \multicolumn{2}{c}{ICLR} & \multicolumn{2}{c}{MIMIC} \\
\cmidrule(lr){2-3}\cmidrule(lr){4-5}\cmidrule(lr){6-7}
Method & mF1 & BAcc & mF1 & BAcc & mF1 & BAcc \\
\midrule
CoT & 0.743 & 0.757 & 0.789 & 0.786 & 0.517 & 0.528 \\
\cmidrule(lr){1-7}
MIPROv2 & 0.753 & 0.767 & 0.779 & 0.808 & 0.541 & 0.568 \\
GEPA & 0.790 & 0.786 & 0.790 & 0.783 & 0.450 & 0.499 \\
SPO4SOP & \textbf{0.790} & \textbf{0.789} & \textbf{0.795} & \textbf{0.815} & \textbf{0.597} & \textbf{0.593} \\
\bottomrule
\end{tabular}
}
\vspace{0.025in}
\end{table}

\vspace{-0.05in}
\subsection{Ablation Analysis}
\label{ssec:ablation}
\vspace{-0.05in}

 Table \ref{tab:po_ablation} compares \spo, one of our main contributions, with two existing hard PO approaches that do not produce structured SOPs with stand-alone rules. \spo consistently performs better across datasets, with especially large gains over MIPROv2 on ContractNLI and GEPA on MIMIC.

\begin{table}[t]
\centering
\caption{\textbf{RL ablation}. Top: R-SFT in II (as reference); Bottom: RL variants in III. \textsc{B}/\textsc{D} prefixes denote class-\textsc{B}alanced batching and \textsc{D}ynamic upsampling-and-filter; \textsc{BD-GRPO} (proposed) combines both; \texttt{+aux} adds an LLM-judge auxiliary reasoning reward on top.
Column \textbf{best} in {bold}, \underline{runner-up} underlined. \textbf{Take-aways:} RL variants (Stage III) often improve R-SFT (Stage II); class-balancing is beneficial; upsample-then-filter boosts performance; auxiliary reward can be beneficial.}
\label{tab:rl_ablation}
\setlength{\tabcolsep}{3pt}
\vspace{-0.1in}
\scalebox{0.8}{
\begin{tabular}{lrrrrrr}
\toprule
 & \multicolumn{2}{c}{ContractNLI} & \multicolumn{2}{c}{ICLR} & \multicolumn{2}{c}{MIMIC} \\
\cmidrule(lr){2-3}\cmidrule(lr){4-5}\cmidrule(lr){6-7}
Method & mF1 & BAcc & mF1 & BAcc & mF1 & BAcc \\
\midrule
R-SFT \textbf{(II)} & 0.802 & 0.811 & 0.821 & 0.821 & 0.620 & 0.623 \\
\cmidrule(lr){1-7}
\textsc{B-GRPO} & 0.841 & \underline{0.877} & 0.817 & 0.808 & 0.645 & 0.642 \\
\textsc{D-GRPO} & \textbf{0.857} & 0.871 & 0.823 & 0.818 & 0.586 & 0.578 \\
\textsc{BD-GRPO} \textbf{(III)} & \underline{0.849} & \textbf{0.882} & \underline{0.825} & \underline{0.829} & \textbf{0.652} & \underline{0.642} \\
\cmidrule(lr){1-7}
\textsc{BD-GRPO}+aux & 0.829 & 0.868 & \textbf{0.827} & \textbf{0.834} & \underline{0.646} & \textbf{0.644} \\
\bottomrule
\end{tabular}
}
\end{table}

Table \ref{tab:rl_ablation} compares Stage III RL variants and Stage II.
 Combined class (B)alancing  and (D)ynamic  batch oversampling  consistently improve performance. Extending RL's binary label reward with LLM-judge reasoning reward (Apdx.~\ref{app:judge}) yields further gains on some, though not all, datasets, suggesting judge quality may vary across domains.

\vspace{-0.075in}
\subsection{Case Study}
\label{ssec:study}
\vspace{-0.05in}

Fig. \ref{fig:crownjewel} presents a case from MIMIC, where (a) shows one of the rules in the learned SOP (see full SOP in Apdx.~\ref{app:rules-spo4sop}) that predicts \texttt{readmit} under the trigger pattern ``pending biopsy'' \textit{with the exception} of ``no confirmed disease''.
In (b) right, \method predicts \texttt{no\_readmit} for a test input, based on the reasoning ``biopsy with results pending'' but with ``no indication of malignancy''. In contrast, in (b) left, CoT latches on surface information of ``pending biopsy'' to incorrectly predict \texttt{readmit}.
For brevity, Apdx. \ref{app:cases} gives additional case studies.

Importantly, after teacher distillation, \method \textbf{no longer} includes the SOP from Stage I in its prompt. Its continued adherence to the learned rules indicates that this knowledge has been internalized through distillation (II), and that RL (III) remains faithful to those rules during exploration.



\vspace{-0.065in}
\section{Conclusion}
\label{sec:conclusion}
\vspace{-0.065in}

We introduce \method, an effective text classifier that provides both local reasoning and a global, modular rulebook. By strategically uniting \textit{structured} prompt optimization, \textit{reasoning-grounded} distillation, and \textit{targeted} reinforcement learning, \method captures the best of both worlds: it overall outperforms traditional fine-tuning and prompt-optimization baselines while simultaneously offering high-quality, auditable interpretability.

\section*{Limitations}


Our work focused on general text corpora, leveraging and further enhancing LLM capabilities for structured prompt optimization, reasoning, and reinforcement learning. However, we recognize that many real-world corpora are inherently multimodal and contain information beyond plain text, such as images, charts, and structured numerical tables, which were not explicitly addressed in this work.
Stages I and II rely on proprietary LLMs; reproducing our results requires comparable access. We used temperature 0 where possible to encourage deterministic outputs, but environmental factors may introduce minor variation.

\section*{Ethical Considerations}
Our approach inherits ethical considerations associated with LLMs, including potential biases in pretrained models, privacy concerns, and risk of misleading or unfaithful generated rationales. Although structured reasoning is designed to facilitate interpretability, in high-stakes applications, outputs should  be used with appropriate human oversight and careful evaluation across diverse populations.

\noindent
\textbf{AI assistance:} We used Claude and ChatGPT for code drafting, debugging, and editorial polishing of this manuscript, with author oversight. All scientific design, experimental protocols, and analyses were performed by the authors.

\bibliography{custom,ref}

\clearpage
\newpage

\appendix
\section*{Appendix}
\label{sec:appendix}

\section{Extended Related Work}
\label{app:related}


\begin{table*}[!tb]
\centering
\footnotesize
\setlength{\tabcolsep}{3pt}
\renewcommand{\arraystretch}{1.15}
\caption{Related work comparison. \textbf{Columns}: SPO = structured PO; RL$\rightarrow$R = rule mining from RL rollouts; \textbf{Global} \cmark{} requires a decomposable structured artifact (rule list / decision tree / program) inspectable without running the model; \textbf{Instance} \cmark{} requires a per-example reasoning trace grounded in that artifact (CoT / applied-rule trace / decision path). \textbf{Symbols}: \cmark{} = property fully satisfied; blank = absent; \pmark{} (SPO column only) marks methods whose rules are structured but concatenated into one prompt or routed per input rather than applied as an independently-composable rule set.
}
\vspace{-0.1in}
\resizebox{\textwidth}{!}{%
\begin{tabular}{@{}lllccccccccc@{}}
\toprule
\multirow{2}{*}{\textbf{Paper}} & \multirow{2}{*}{\textbf{Task}} & \multirow{2}{*}{\textbf{Venue}} & \multicolumn{5}{c}{\textbf{Method}} & \multicolumn{3}{c}{\textbf{Capability}} & \multirow{2}{*}{\textbf{Total}} \\
\cmidrule(lr){4-8} \cmidrule(lr){9-11}
 & & & PO & SPO & SFT & RL & RL$\rightarrow$R & Global & Instance & Infer. eff. & \\
\midrule
\textbf{Ours}                        & Class.   & ---            & \cmark & \cmark & \cmark & \cmark & \cmark & \cmark & \cmark & \cmark & \textbf{8/8} \\
\midrule
EvolveR~\cite{DBLP:journals/corr/abs-2510-16079}         & Agent    & arXiv'25-10    & \cmark & \pmark & \cmark & \cmark & \cmark &  &  & \cmark & 6/8 \\
MAC~\cite{thareja2026mac}            & Class.   & arXiv'26-03    & \cmark & \pmark & \cmark &  &  & \cmark &  & \cmark & 5/8 \\
DeepSeek-R1~\cite{DBLP:journals/nature/GuoYZSWZXZMBZY025} & Reason.  & Nature'25      &  &  & \cmark & \cmark &  &  & \cmark & \cmark & 4/8 \\
LSP~\cite{DBLP:conf/iclr/WangSYRHD25}               & Class.   & ICLR'25        & \cmark & \pmark &  &  &  & \cmark & \cmark &  & 4/8 \\
SkillRL~\cite{xia2026skillrl}        & Agent    & arXiv'26-02    &  &  &  & \cmark & \cmark &  &  & \cmark & 3/8 \\
RIMRULE~\cite{gao2026rimrule}        & Agent    & arXiv'26-01    & \cmark & \pmark &  &  &  & \cmark &  &  & 3/8 \\
SALMON~\cite{sun2024salmon}          & Align.   & ICLR'24        &  &  & \cmark & \cmark &  &  &  &  & 2/8 \\
GEPA~\cite{agrawal2026gepa}          & General  & ICLR'26        & \cmark &  &  &  &  &  &  &  & 1/8 \\
MIPROv2~\cite{opsahl2024optimizing}  & General  & NeurIPS'24     & \cmark &  &  &  &  &  &  &  & 1/8 \\
\bottomrule
\end{tabular}%
}
\label{tab:related-work}

\end{table*}

Much of the related work is already referenced in the main text, and here we provide an extended discussion. Table~\ref{tab:related-work} outlines directly-relevant prior work along five \emph{methodology} axes (PO, SPO, SFT, RL, RL$\rightarrow$R rule mining) and three \emph{capability} axes (global/model-level explanation, local/instance-level explanation, inference efficiency). Each existing work covers only a strict subset of these axes; \emph{no} prior system jointly delivers global and instance-level interpretability for complex text classification with a deployment-efficient student.

\paragraph{Structured prompt optimization.}
Prompt optimization has evolved from token-level rewriting~\cite{DBLP:conf/emnlp/PryzantI0L0023} to text-as-parameter approaches that treat the prompt as a learnable artifact under verbalized feedback~\cite{xiao2024verbalized,DBLP:journals/nature/YuksekgonulBBLLHGZ25}. Representative methods MIPROv2~\cite{opsahl2024optimizing} and GEPA~\cite{agrawal2026gepa} optimize over unstructured instruction strings; a parallel line targets \emph{structured} rule-sets, including LSP~\cite{DBLP:conf/iclr/WangSYRHD25} and MAC~\cite{thareja2026mac}. These structured baselines still select a per-input rule \emph{subset} and concatenate the selected rules during inference (tree traversal in LSP, agentic re-selection in MAC). In contrast, our SOP is a \emph{composable} rule set, applied uniformly at inference, and atomic, stand-alone, and fully present---so the rule set itself could serve as a global explanation.

\paragraph{Reasoning distillation and RL on reasoning.}
Chain-of-thought supervision~\cite{DBLP:conf/nips/Wei0SBIXCLZ22} underpins reasoning-distillation pipelines that fine-tune students on teacher rationales~\cite{henrichsen2025reasoning,deng2024adapt}. Beyond distillation, group-based policy optimization (GRPO, \citealp{DBLP:journals/corr/abs-2402-03300}; DAPO, \citealp{DBLP:journals/corr/abs-2503-14476}) and verifiable-reward RL (DeepSeek-R1, \citealp{DBLP:journals/nature/GuoYZSWZXZMBZY025}) improve reasoning quality beyond the teacher; a separate line uses learned reward models to align LLMs with high-level principles (SALMON, \citealp{sun2024salmon}), targeting alignment rather than task-specific reasoning. Our Stage~II conditions the teacher on the SOP so that the student inherits a rule-grounded chain of thought; Stage~III then uses \textsc{BD-GRPO} to concentrate learning on the hard examples the SOP alone cannot resolve.

\paragraph{Interpretable text classification.}
Traditional interpretability is post-hoc: feature-attribution methods such as LIME~\cite{DBLP:conf/kdd/Ribeiro0G16} and SHAP~\cite{lundberg2017shap}, and gradient/saliency-style scoring~\cite{lyu2024towards} explain a black-box decision token-by-token but offer no global model-level account~\cite{DBLP:journals/natmi/Rudin19}. A newer line pursues either global rulebooks (LSP, MAC; see above) or instance-level rationales such as prototype surrogates~\cite{wei2025protosure}; few provide both global and local explanations within a single deployment-efficient student model.

\paragraph{Rule mining from RL rollouts.}
A nascent line recovers interpretable artifacts from trained policies: distilling neural agents into programs~\cite{kohler2025distillpolicy} and tool-using agents that consolidate failure traces via MDL-guided rules~\cite{gao2026rimrule}. Most closely related are agent-side rulebook learners EvolveR~\cite{DBLP:journals/corr/abs-2510-16079} and SkillRL~\cite{xia2026skillrl}. In contrast, eXTC loops RL-mined rules back into the global SOP for text classification.

\section{\method Details}

This appendix expands on \method's Stage~I structured prompt optimization with \spo (Apdx.~\ref{app:spo}) and Stage~III reinforcement learning with \textsc{BD-GRPO} (Apdx.~\ref{app:rl}).

\subsection{Proposed Structured Prompt Optimization: Details}
\label{app:spo}

We elaborate on \spo (Stage~I): an overall flow diagram of the algorithm (Apdx.~\ref{app:flow}), the rule-subset selection step that picks the inference-time working set (Apdx.~\ref{app:subselect}), and a scalability analysis showing the procedure scales linearly in the number of candidate rules (Apdx.~\ref{app:efficiency}).

\subsubsection{Flow Diagram}
\label{app:flow}

Fig.~\ref{fig:flow} illustrates the main components and the workflow of \spo algorithm designed for \method's Stage I.

\begin{figure*}[!th]
 \centering 
\includegraphics[width=0.95\textwidth]{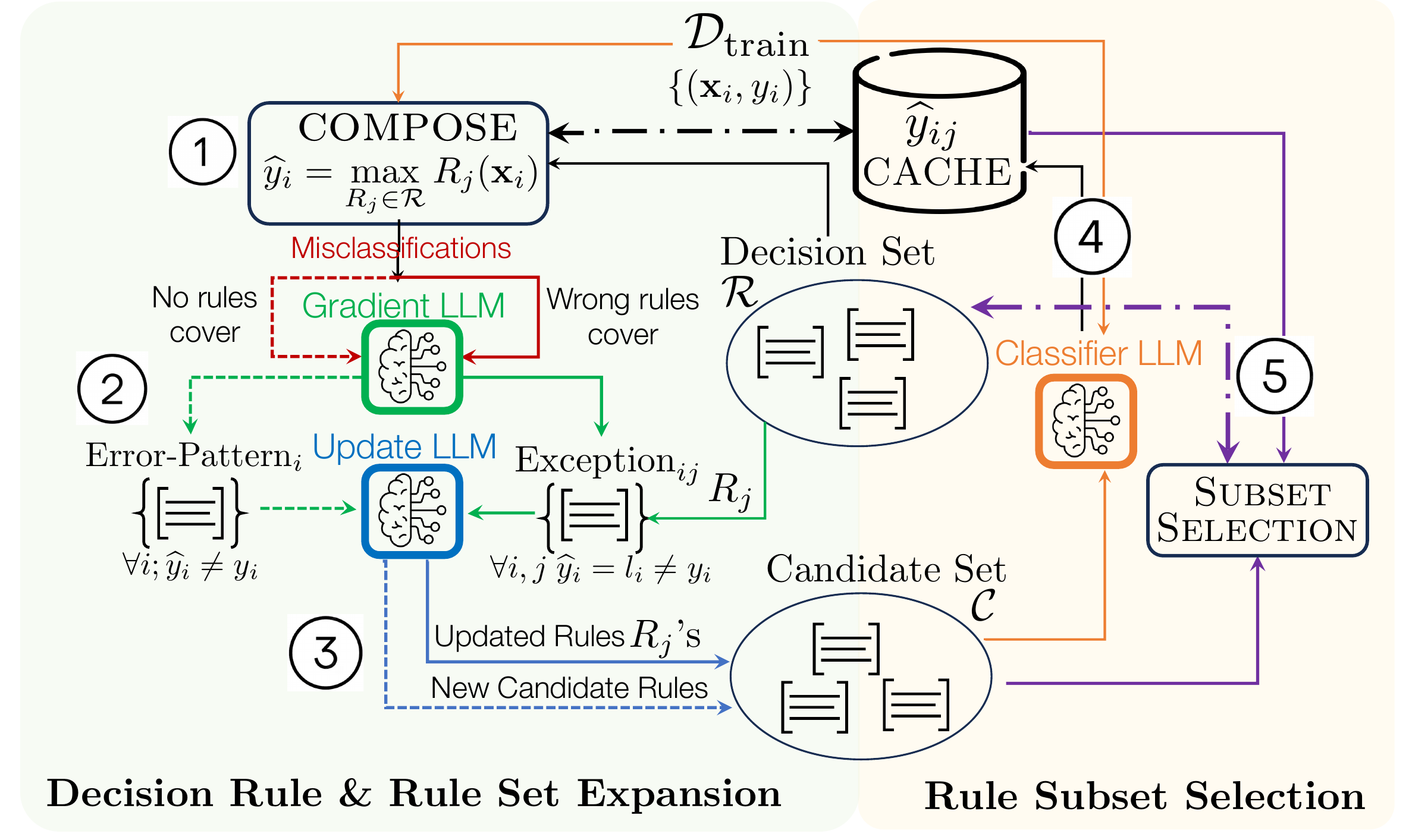}
 \caption{\label{fig:flow} \textbf{\method's Stage I:} \spo for structured prompt (i.e. decision (rule) set $\mR$) optimization consists of two main primitives: \textbf{(1) Decision Rule \& Rule Set Expansion} employs Gradient and Update LLMs on misclassifications to obtain updated and new candidate rules, and \textbf{(2) Rule Subset Selection} employs Beam-Search over the persistent pool $\mP_t$ of all accumulated candidate rules to maximize macro-F1 on the validation set. Thanks to composable decisions, individual rule predictions by Classifier LLM are cached and composed efficiently to evaluate potential rule subsets---
 allowing search over a combinatorial space while keeping the number of LLM calls linear in the number of rules, rather than in the exponentially many possible rule \textit{subsets}.}
 \end{figure*}

\subsubsection{Rule Subset Selection: Details}
\label{app:subselect}


Given the persistent pool $\mP_t := \mP_{t-1} \cup \mC$ accumulating all candidate rules generated through iteration $t$ (including the previous pool $\mP_{t-1}$ and the (updated as well as new) candidates $\mC$ from this iteration), the next step is to select a small subset $\mS \subseteq \mP_t$ that achieves low classification error on the held-out validation data $\mD_{\text{val}}$.

To compute classification error, we
first obtain individual predictions $\widehat{y}_{ij} := R_j(\bx_i)$ of each rule on each sample. Importantly, these standalone \textit{predictions are composed easily and efficiently for any subset of rules} via max-pooling to reach the final prediction---thanks to the \textit{composability} of decision sets (Definition \ref{def:composition}).
Decision composition is a key notion in our design that fosters efficiency; we simply obtain predictions for the newly added candidate rules $\mC$ \textit{only}, as they are generated, and \textbf{cache} these to be used in composition during subset search (Line~\ref{lin:classify}), while the $\widehat{y}_{ij}$ cache readily contains predictions for the rules in $\mP_{t-1}$ from previous iterations.

The persistent pool $\mP_t := \mP_{t-1} \cup \mC$ (Line~\ref{lin:expand}), which accumulates all candidates generated through iteration $t$, is used within a beam search for subset selection (Line~\ref{lin:subselect}). Letting $|\mP_t|=m$ and the binary \textit{variable} $z_j\in \{0,1\}$ be the indicator for the $j$'th rule being selected, Eq.~\ref{eq:opt} depicts the optimization model.
In the objective function, $l(\widehat{\mathbf{y}}, \mathbf{y})$ denotes the task-specific loss (macro-$F_1$ in our experiments), where $\mathbf{y}$ and $\widehat{\mathbf{y}}$ denote the label vector and the composed prediction vector over the dataset, respectively.
The second term encourages the sparsity of rule selection, where $\lambda$ controls the trade-off between the classification loss and the subset size. A small set promotes interpretability and encourages semantic diversity among the rules.

\begin{equation}\label{eq:opt}
\begin{aligned}
    & \min_{z_j\in\{0,1\}, j \in [m]}\; l(\widehat{\mathbf{y}}, \mathbf{y}) + \lambda \sum_j z_j \quad \text{s.t.}\\
    &  \quad \widehat{y}_i = \begin{cases}
    \displaystyle \max_{j\in[m]}\{z_j\cdot \widehat{y}_{ij}\}, & \text{\small binary},\\[1ex]
    \displaystyle \arg\max_{j\in[m]}\{z_j\cdot p(\widehat{y}_{ij})\}, & \text{\small multi-class}
    \end{cases}
\end{aligned}
\end{equation}

We remark that the beam search does not necessarily return the optimal subset if the beam size is not large enough. Arguably, we do not need the global optimum in practice as it might lead to overfitting (especially given that the validation set is typically small in prompt-optimization settings).

For the optimal subset returned after the last iteration (Line~\ref{lin:return}), one can in principle linearize the formulation in Eq.~\ref{eq:opt} and solve it exactly. In practice, we use the beam-search approximation described above throughout.

\subsubsection{Scalability Analysis}
\label{app:efficiency}

The key computational bottleneck of \spo is the rule subset search primitive, not the rule \& rule set expansion primitive (see Fig.~\ref{fig:flow}).
The latter's LLM calls for the ``gradient'' ``updates'' are negligible since those ($i$) are over a mini-batch of samples $\Dbatch$, and ($ii$) involve \grad calls only for misclassified samples in the batch, while ($iii$) \up is called once per rule in Line~\ref{lin:fup-rev} and only once in Line~\ref{lin:fup-new}. 

This section shows that rule subset search via beam search is scalable, in fact, linear in the number of candidates (as opposed to exponential in the number of possible subsets). We restate the lemma from \S\ref{ssec:spo} and prove it here.

\begin{lemma}[Scalable subset search, restated]
\spo's rule subset search scales linearly with the number of candidate rules $m$, rather than with the exponential number of possible rule subsets. This is enabled by decision composition (Definition~\ref{def:composition}), which requires only a linear number of LLM calls in $m$ to obtain decisions for arbitrary rule subsets.
\end{lemma}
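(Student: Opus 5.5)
The plan is to count LLM calls in two separate phases: a \emph{precomputation} phase that queries the Classifier LLM on individual rules, and a \emph{search} phase that makes no LLM calls at all.

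\textbf{Precomputation.} For each candidate rule $R_j$ in the pool $\mP_t$, with $|\mP_t| = m$, and each validation sample $\bx_i \in \mD_{\text{val}}$, the Classifier LLM is called once. The call returns whether $R_j$ fires on $\bx_i$; the cached value is $\widehat{y}_{ij} := R_j(\bx_i)$, which equals $y_{\text{def}}$ when $R_j$ abstains. By Line~\ref{lin:classify}, these calls are issued only for the new candidates $\mC$ at each iteration, and the cached values for $\mP_{t-1}$ are reused. Summed over all iterations, the total is exactly $m\cdot|\mD_{\text{val}}|$ calls. This is $O(m)$ for a fixed validation set.

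\textbf{Composition.} I would then show that for any subset $\mS \subseteq \mP_t$, the prediction $\fmodel(\bx_i;\mS)$ is a deterministic function of the cached column entries $\{\widehat{y}_{ij}\}_{j\in\mS}$ alone. This follows directly from Definition~\ref{def:composition}. We have $\fmodel(\bx_i;\mS) = \widehat{y}_{i j^*}$ with $j^* = \arg\max_{j\in\mS} p(\widehat{y}_{ij})$, and in the binary case this is simply $\max_{j\in\mS}\widehat{y}_{ij}$. The point is that rules in a decision set are stand-alone: no rule's decision depends on any other rule in the subset. So no new LLM query is needed to evaluate any subset, and the objective $M(\mS;\mD_{\text{val}}) - \lambda|\mS|/|\mD_{\text{val}}|$ is computable from the cache. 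The cost is $O(|\mS|\cdot|\mD_{\text{val}}|)$ arithmetic operations.

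\textbf{Beam search.} Finally, I would bound the beam search with beam width $B$ and maximum subset size $K$. At each of at most $K$ depths, every beam state is extended by at most $m$ rules, so the search scores at most $O(KBm)$ subsets. Each subset score costs $O(K|\mD_{\text{val}}|)$ non-LLM operations. The total is therefore $O(K^2 B m |\mD_{\text{val}}|)$ operations and zero LLM calls, which is linear in $m$. By contrast, scoring every subset without composition would require LLM calls proportional to $\sum_{k\le K}\binom{m}{k}$.

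\textbf{Main obstacle.} The hard part is justifying that composition is exact. The Classifier LLM is queried with a single rule in context, but at inference the SOP presents all rules together. I would handle this by taking Definition~\ref{def:composition} as the operative semantics of $\fmodel(\cdot;\mR)$, so that a per-rule fire/abstain decision followed by priority composition \emph{is} the model's prediction. I would also note that deterministic decoding (temperature $0$) makes the cached values well defined.
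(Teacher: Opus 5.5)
Your proposal is correct and follows essentially the same route as the paper. The paper first makes $nm$ independent single-rule Classifier LLM calls, issued only for the new candidates $\mathcal{C}$ at each iteration, and caches them; beam search then scores $O(bKm)$ subsets purely by composing cached predictions, with no further LLM calls. The only minor difference is that the paper caches each beam state's composed predictions and updates them in $O(1)$ per sample when a rule is added, giving $O(bKmn)$ non-LLM work instead of your from-scratch $O(K^2Bmn)$; this does not affect the linear-in-$m$ conclusion.
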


\begin{proof}
As a decision set, our structured prompt design enables \textit{composable} label assignment and consequently, scalable search.
In particular, our decision set
 consists of {standalone} rules, which allows us to predict each sample by each rule in the set \textit{independently}. This allows independent LLM calls, using a prompt composed of a single rule at a time (meanwhile reducing context length), to obtain predictions for $n$ samples by $m'$ additional candidate rules in $\mC$, which we \textit{cache}. Note that the predictions by the existing $(m$$-$$m')$ rules in $\mP_{t-1}$ are already cached in previous iterations. Overall, subset selection necessitates $nm$ \texttt{Classifier LLM} calls \textit{prior to} beam search (Line~\ref{lin:classify}, Algorithm~\ref{algo:spo4sop}).

For beam search with width $b$, expansion at depth $d$ yields $b(m-d)$ subset candidates to evaluate (the objective in Eq.~\ref{eq:opt}). Having cached existing $b$ subsets' predictions of $n$ samples, adding a new rule to a subset takes $O(1)$ time per sample to update its prediction---by picking the maximum of the current/cached prediction and the new rule's prediction (also cached).
Over $K$ steps of the beam search, $O(bKm)$ subsets are evaluated, \textit{without any additional LLM calls}, saving $O(bKmn)$ LLM calls over the course of the search.


In effect, decision composition allows us to search in a combinatorially large space while necessitating
the number of LLM calls to be only \textbf{linear} in the number of rules $m$---and \textbf{not} in the exponential number of rule subsets.
\end{proof}

In comparison, prior prompt optimization algorithms \cite{DBLP:conf/emnlp/PryzantI0L0023,DBLP:conf/iclr/WangLW0LZJXH24,xiao2024verbalized,DBLP:journals/nature/YuksekgonulBBLLHGZ25} can only evaluate entire prompts, as they lack compositionality. In theory, searching the same prompt space would require these existing approaches $O(2^m n)$ LLM calls, compared to $O(mn)$ by \spo.


\subsection{Reinforcement Learning: Details}
\label{app:rl}

This section provides additional analysis of the RL stage: per-dataset training dynamics (Apdx.~\ref{app:rldyn}) and the procedure for folding RL-derived findings back into the SOP (Apdx.~\ref{app:rl2rules}).

\subsubsection{RL Dynamics}
\label{app:rldyn}

At the start of RL, Table~\ref{tab:hardness_bucket_full} extends the hardness $\times$ rollout-correctness cross-tab of Table~\ref{tab:hardness_bucket} (ICLR Review in the main body) to all three datasets, characterizing the informative band before any training step is taken.

\begin{table}[!t]
\centering
\caption{Full version of Table~\ref{tab:hardness_bucket}: rollout-outcome distribution per hard/easy group at the start of RL on all three datasets ($G$=8).}
\label{tab:hardness_bucket_full}
\setlength{\tabcolsep}{4pt}
\setlength{\fboxsep}{2pt}%
\small
\begin{tabular}{llrrr|r}
\toprule
 &  & \multicolumn{3}{c}{Rollouts correct (\%)} & \\
\cmidrule(lr){3-5}
Dataset & Diff. & 0/G & \colorbox{yellow!30}{$1$\textendash $7/G$} & G/G & \% of band \\
\midrule
ContractNLI & hard & 32.9 & \colorbox{yellow!30}{63.6} & 3.5 & \colorbox{yellow!30}{30.5} \\
 & easy & 0.4 & \colorbox{yellow!30}{19.0} & 80.6 & \colorbox{yellow!30}{69.5} \\
 & all & 4.2 & \colorbox{yellow!30}{24.1} & 71.7 & (11.6\% nat.) \\
\midrule
ICLR Review & hard & 54.0 & \colorbox{yellow!30}{43.6} & 2.4 & \colorbox{yellow!30}{29.5} \\
 & easy & 0.7 & \colorbox{yellow!30}{18.1} & 81.2 & \colorbox{yellow!30}{70.5} \\
 & all & 8.6 & \colorbox{yellow!30}{21.9} & 69.5 & (14.8\% nat.) \\
\midrule
MIMIC Readmit & hard & 65.3 & \colorbox{yellow!30}{33.7} & 1.0 & \colorbox{yellow!30}{24.4} \\
 & easy & 1.4 & \colorbox{yellow!30}{27.1} & 71.5 & \colorbox{yellow!30}{75.6} \\
 & all & 14.6 & \colorbox{yellow!30}{28.5} & 57.0 & (20.6\% nat.) \\
\bottomrule
\end{tabular}
\end{table}

Over the course of training, \S\ref{ssec:rl} (Fig.~\ref{fig:rl_main}) shows ContractNLI as a representative example; Figs.~\ref{fig:rl-extra-iclr} and~\ref{fig:rl-extra-mimic} report the same four training-dynamics curves (actor entropy, KL to reference, hard-pool correct rate, easy-pool correct rate) over rollout rounds for the other two datasets. All RL training curves are exponentially-moving-average (EMA) smoothed with $\alpha{=}0.99$ for the hit-rate metrics and $\alpha{=}0.95$ for entropy and KL.

\begin{figure}[!t]
  \centering
  \includegraphics[width=\linewidth]{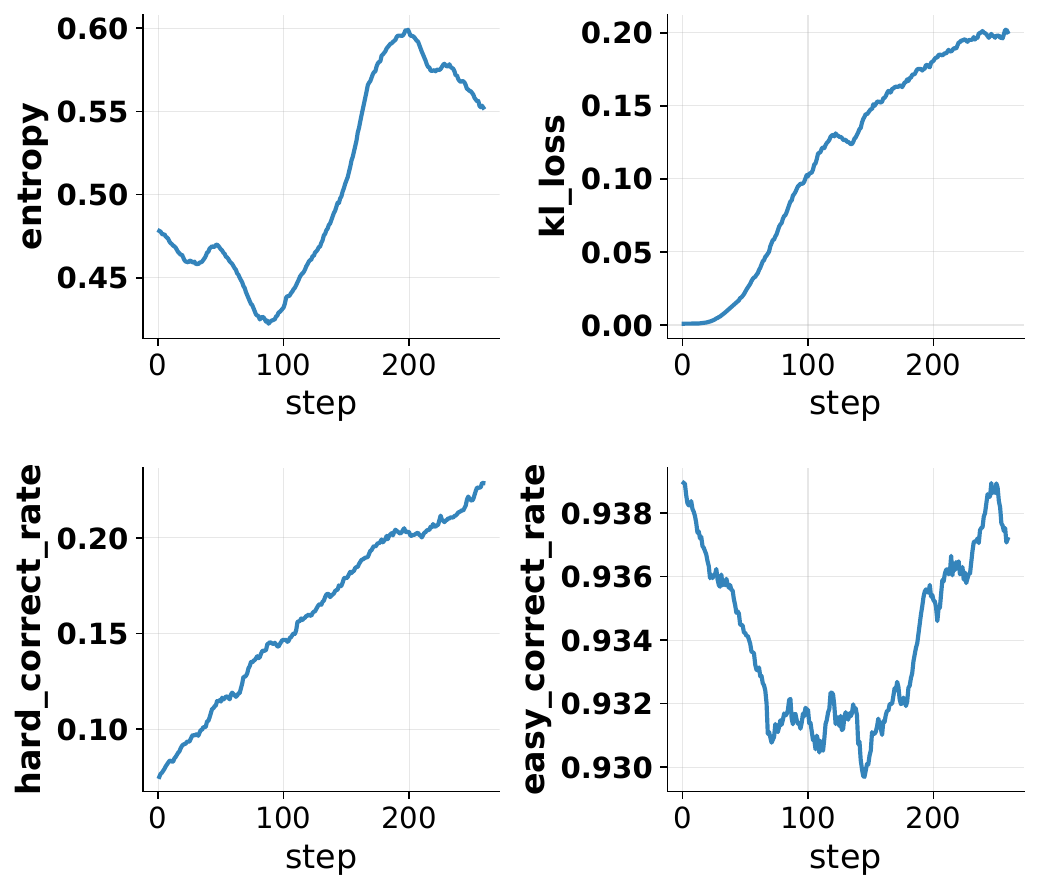}
  \caption{RL training dynamics on ICLR Review, parallel to Fig.~\ref{fig:rl_main}. Entropy stays bounded (briefly dips around step 80 then rises to $\sim$0.60, no collapse), KL grows but remains small ($\leq$0.20), \textit{hard-pool correct rate rises steadily from $\sim$0.08 to $\sim$0.23 ($\sim$3$\times$)}, easy-pool correct rate preserved ($\sim$0.93).}
  \label{fig:rl-extra-iclr}
\end{figure}

\begin{figure}[!t]
  \centering
  \includegraphics[width=\linewidth]{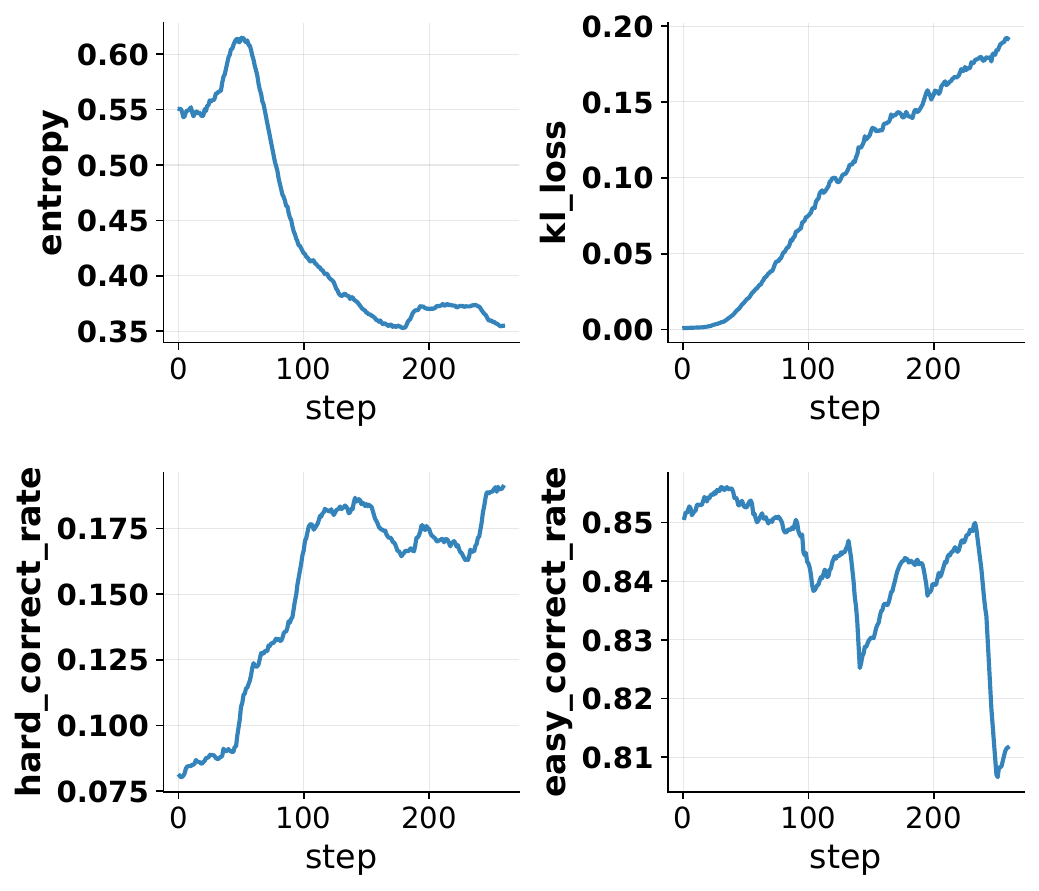}
  \caption{RL training dynamics on MIMIC Readmission, parallel to Fig.~\ref{fig:rl_main}. Entropy decreases from $\sim$0.55 to $\sim$0.36 but stays well above collapse, KL remains small ($\leq$0.20), \textit{hard-pool correct rate roughly doubles from $\sim$0.08 to $\sim$0.19} then plateaus, easy-pool correct rate largely retained ($\sim$0.81--0.85) with brief late-stage dips---consistent with the same hard-band learning pattern.}
  \label{fig:rl-extra-mimic}
\end{figure}

\subsubsection{Rulebook Revision}
\label{app:rl2rules}

The SOP learned by \spo (Apdx.~\ref{app:spo}) covers most inputs but leaves a residual error band of \emph{hard} inputs where the post-RL model reasons correctly while a teacher applying the current SOP does not. The Rulebook Revision pipeline distills these residual strategies into candidate SOP rules in five stages: (1)~collect correct RL rollouts on hard inputs; (2)~cluster them by reasoning strategy via three independent LLM rounds followed by a merge pass; (3)~assign each rollout to one cluster; (4)~synthesize one candidate rule per (cluster,~target~label); (5)~deduplicate near-equivalent candidates within each cluster.

All five stages are LLM-driven; the per-stage prompts are listed in Apdx.~\ref{ssec:rl2r-prompts}. Stage~2 asks the LLM to describe an analytical \emph{strategy} (the reasoning method, not the input topic): three independent rounds plus a merge pass reduce single-round variance. Stage~3 bins each rollout into one strategy cluster. Stage~4 is the core synthesis call: per cluster, the LLM is shown \emph{paired} traces (teacher-incorrect vs.\ RL-correct on the same inputs) with a stratified mix of positive (gold $=$ target label) and negative (gold $\neq$ target label) examples; a SKIP gate suppresses candidates with insufficient signal for the target label. Stage~5 is a pairwise LLM equivalence judge that collapses near-duplicate candidates within a cluster.

\paragraph{Selection on \emph{val\_hard}.} From the candidate rule set, we apply the same beam-search SubsetSelect logic used by \spo (Apdx.~\ref{app:subselect}) to choose the final additions, but with the validation pool restricted to \emph{val\_hard} --- the val examples on which the teacher applying the existing SOP fails all $M{=}4$ attempts. Restricting selection to \emph{val\_hard} avoids reusing the train-hard signal that the RL model already exploited during training, and isolates the marginal value of the new rules on examples the existing SOP cannot solve.

\paragraph{Evaluation on \emph{test\_hard}.} We evaluate the selected rules (max-pool composed with the existing SOP, per Eq.~\ref{eq:opt}) on the analogously-defined \emph{test\_hard} subset, against the CoT baseline (zero-shot teacher, no rules). For ICLR Review, the teacher with the existing SOP over-predicts \texttt{reject} on hard inputs (91\% of val\_hard has gold $=$ \texttt{accept}); we therefore synthesize label-\texttt{accept} rules to push toward the under-served direction. Table~\ref{tab:rl2r-testhard} shows that the new rules consistently improve both macro-F1 and balanced accuracy over CoT across all three datasets.

\begin{table}[t]
\centering
\small
\setlength{\tabcolsep}{3pt}
\resizebox{\linewidth}{!}{%
\begin{tabular}{lcccccc}
\toprule
 & \multicolumn{2}{c}{ContractNLI} & \multicolumn{2}{c}{ICLR Review} & \multicolumn{2}{c}{MIMIC Readmit} \\
\cmidrule(lr){2-3}\cmidrule(lr){4-5}\cmidrule(lr){6-7}
Method & mF1 & BAcc & mF1 & BAcc & mF1 & BAcc \\
\midrule
CoT (no rules)      & 0.143 & 0.150 & 0.312 & 0.255 & 0.321 & 0.437 \\
RL$\to$R\,rules  & \textbf{0.255} & \textbf{0.335} & \textbf{0.468} & \textbf{0.495} & \textbf{0.355} & \textbf{0.505} \\
\bottomrule
\end{tabular}
}
\caption{Evaluation of the RL$\to$R-selected rules on \emph{test\_hard} (test examples on which the teacher with the existing SOP fails all $M{=}4$ attempts). Per-dataset sizes are $n{=}235$ / $195$ / $108$ for ContractNLI / ICLR Review / MIMIC Readmit. CoT = zero-shot teacher (no rules).}
\label{tab:rl2r-testhard}
\end{table}

\section{Prompt Templates and Instructions}
\label{app:prompts}
%

\subsection{Generic prompt templates}

These templates are shared across all three datasets, with per-dataset
placeholders (\texttt{\{task\_framing\}}, \texttt{\{input\_tag\}}, \texttt{\{input\_noun\}},
\texttt{\{labels\}}) substituted from Table~\ref{tab:appendix-placeholders}. The
\emph{Reasoning Prompt with Rules} drives the R-SFT teacher (uses the learned
SOP); the \emph{Reasoning Prompt without Rules} is used by the zero-shot CoT
baseline, the R-SFT student, and during RL; and the \emph{Classification (Label-Only) Prompt} is used when
SFT runs with a classification head instead of generative decoding.

\Needspace{20\baselineskip}
\begin{prompttemplatebox}{Reasoning Prompt with Rules (R-SFT teacher)}
\footnotesize
Below is a rulebook of patterns relevant to the task. Treat the rulebook
as internal guidance --- it shapes what to look for in the
\texttt{\{input\_noun\}}, but should not be cited in your reasoning.

\texttt{<RULES>}\\
\texttt{\{rulebook\}}\\
\texttt{</RULES>}

\texttt{\{input\_tag\}}\\
\texttt{\{text\}}\\
\texttt{</\ldots>}

When writing your reasoning, analyze the \texttt{\{input\_noun\}} directly.
Do not name or enumerate rules in your reasoning.

Return exactly in this format:\\
\texttt{REASONING:}\\
your reasoning

\texttt{LABEL: <one of \{labels\}>}

Please think step by step.
\end{prompttemplatebox}

\Needspace{15\baselineskip}
\begin{prompttemplatebox}{Reasoning Prompt without Rules (CoT baseline / R-SFT student / RL)}
\footnotesize
\texttt{\{task\_framing\}}

\texttt{\{input\_tag\}}\\
\texttt{\{text\}}\\
\texttt{</\ldots>}

Analyze the \texttt{\{input\_noun\}} directly to decide the label.

Return exactly in this format:\\
\texttt{REASONING:}\\
your reasoning

\texttt{LABEL: <one of \{labels\}>}

Please think step by step.
\end{prompttemplatebox}
\Needspace{10\baselineskip}
\begin{prompttemplatebox}{Classification (Label-Only) Prompt (SFT classification head)}
\footnotesize
\texttt{\{task\_framing\}}

\texttt{\{input\_tag\}}\\
\texttt{\{text\}}\\
\texttt{</\ldots>}

Return exactly:\\
\texttt{LABEL: <one of \{labels\}>}
\end{prompttemplatebox}
\subsection{\spo meta-prompt: Classifier LLM}
\label{ssec:fmodel}

The Classifier LLM is invoked once per (sample, rule) pair: it sees a single
rule at a time and returns whether the rule fires (label \texttt{\{RULE\_LABEL\}})
or abstains (label \texttt{\{ABSTAIN\}}). Task framing is supplied via the
system prompt (the same one used by the SFT teacher above). For multi-class
tasks (e.g.\ \textsc{ContractNLI}, 3 classes), the last sentence of the output
spec asks the model to first conclude any class, then output the rule's target
label only if its conclusion matches it.

\Needspace{18\baselineskip}
\begin{prompttemplatebox}{Per-Rule Classifier (Classifier LLM)}
\footnotesize
Here is the rule you want to check:\\
\texttt{<RULE>}\\
\texttt{\{rule\_text\}}\\
\texttt{</RULE>}

\texttt{<REPORT>}\\
\texttt{\{report\}}\\
\texttt{</REPORT>}

Provide your detailed reasoning under a header exactly written as
\texttt{REASONING:}.
Then provide your final prediction under a header exactly written as
\texttt{FINAL PREDICTION:}.
Use only one of these values for the final prediction:
\texttt{\{RULE\_LABEL\}} (the rule applies) or \texttt{\{ABSTAIN\}} (the rule does not apply
or you cannot tell).
Please think step by step:
\end{prompttemplatebox}

\subsection{\spo meta-prompts: Gradient LLM}
\label{ssec:fgrad}

The Gradient LLM is invoked in two modes, depending on the error type. Both
share the structured-field output convention; the bullet-pointed
\texttt{exceptions} / \texttt{points} are then forwarded to the Update LLM.

\Needspace{22\baselineskip}
\begin{prompttemplatebox}{Exception Generator --- False Coverage (Gradient LLM)}
\footnotesize
You are an expert at \texttt{\{CLASSIFICATION\_TASK\}} and error analysis.

Review the rule and report where the existing rule may be too broad;
propose exceptions to restrict it.

\texttt{<RULE>}\\
\texttt{\{RULE\}}\\
\texttt{</RULE>}

\texttt{<REPORT>}\\
\texttt{\{REPORT\}}\\
\texttt{</REPORT>}

The incorrect prediction made by the model: \texttt{\{PREDICTION\}}

The correct label for this instance: \texttt{\{LABEL\}}

In your response, provide a detailed explanation in the analysis field,
then list the exceptions in the exceptions field. Explain why the model's
prediction was incorrect and what type of error or misunderstanding may
have occurred.
\end{prompttemplatebox}

\Needspace{22\baselineskip}
\begin{prompttemplatebox}{Error-Pattern Generator --- Blind Spot (Gradient LLM)}
\footnotesize
You are an expert at \texttt{\{CLASSIFICATION\_TASK\}} and error pattern recognition.

There are errors on the analysis below. Identify the underlying error
pattern that the current rules fail to capture.

\texttt{<REPORT>}\\
\texttt{\{REPORT\}}\\
\texttt{</REPORT>}

\texttt{<RELEVANT\_RULES>}\\
\texttt{\{MATCHING\_RULES\}}\\
\texttt{</RELEVANT\_RULES>}

The incorrect prediction made by the model: \texttt{\{PREDICTION\}}

The correct label for this instance: \texttt{\{LABEL\}}

In your response, provide a diagnostic summary in the summary field and
key points in the points field. Explain why the model's prediction was
incorrect and what type of error or misunderstanding may have occurred.
\end{prompttemplatebox}

\subsection{\spo meta-prompt: Update LLM rule revision (false coverage)}
\label{ssec:fup1}

The Update LLM consumes the Gradient LLM's exception bullets and emits a
revised rule, formatted with a \texttt{<RULE\_NAME>} /
\texttt{<RULE\_DESCRIPTION>} schema used elsewhere in the pipeline.

\Needspace{35\baselineskip}
\begin{prompttemplatebox}[breakable]{Rule Update --- False Coverage (Update LLM)}
\footnotesize
Your task is to generate a new rule to avoid overly broad coverage mistakes
by augmenting the exceptions and examples in an existing rule. Follow
these steps:

Review the provided existing rule:

\texttt{<EXISTING\_RULE>}\\
\texttt{\{RULE\}}\\
\texttt{</EXISTING\_RULE>}

\texttt{<EXCEPTION\_NOTES>}\\
\texttt{\{EXCEPTIONS\}}\\
\texttt{</EXCEPTION\_NOTES>}

Keep the rule label unchanged as \texttt{\{RULE\_LABEL\}}.

Identify the core pattern or principle that led to the mistake described
in the existing rule.

Determine any exceptions where the rule should not apply.

Formulate a new rule using the following structure:

\texttt{<RULE\_NAME>[Concise, descriptive new name]</RULE\_NAME>}

\texttt{<RULE\_DESCRIPTION>}\\
\texttt{Trigger Pattern:}\\
\texttt{[Keep original pattern in the existing rule, clarify if needed]}

\texttt{Exceptions:}\\
\texttt{[Keep original exceptions]}\\
\texttt{[Add new exceptions to restrict over-broad coverage]}

\texttt{Examples}\\
\texttt{[Keep existing examples]}\\
\texttt{[Add new examples if needed]:}\\
\texttt{Source text: [Relevant source text if any]}\\
\texttt{Wrong:~~~[Example of violation]}\\
\texttt{Correct: [Example of compliance]}\\
\texttt{</RULE\_DESCRIPTION>}

Ensure the rule has a new name.
\end{prompttemplatebox}

\subsection{\spo meta-prompt: Update LLM new rule synthesis (blind spot)}
\label{ssec:fup2}

When no rule fires on a misclassified example, the Update LLM is prompted to
synthesize one or more brand-new rules from the Gradient LLM's error
patterns, grouped by target label.

\Needspace{40\baselineskip}
\begin{prompttemplatebox}[breakable]{New Rule Synthesis --- Blind Spot (Update LLM)}
\footnotesize
You are an expert at analyzing patterns and developing precise rules for
\texttt{\{CLASSIFICATION\_TASK\}}. Your task is to systematically evaluate error
patterns and create a clear, actionable and strict rule to prevent
similar mistakes in future classifications.

The goal is to identify the truly exceptional instances, not just average
ones.

Generate rules with label \texttt{\{TARGET\_LABEL\}}.

Return at most \texttt{\{MAX\_NEW\_RULES\}} new rule(s).

Review the existing rules:

\texttt{<EXISTING\_RULES>}\\
\texttt{\{RULES\}}\\
\texttt{</EXISTING\_RULES>}

Error patterns from misclassifications:

\texttt{<ERROR\_PATTERNS>}\\
\texttt{\{ERROR\_PATTERNS\}}\\
\texttt{</ERROR\_PATTERNS>}

First, conduct a structured error analysis:\\
\texttt{<ERROR\_ANALYSIS>}\\
\texttt{1.~Context Assessment}\\
\texttt{~~~What was the intended behavior?}\\
\texttt{~~~What actually happened?}\\
\texttt{~~~What key factors contributed?}\\
\texttt{2.~Rule Evaluation}\\
\texttt{~~~Do existing rules partially cover this?}\\
\texttt{~~~What aspects are unique?}\\
\texttt{~~~How can we make the rule robust?}\\
\texttt{</ERROR\_ANALYSIS>}

Provide the error analysis in the error\_analysis field.

Then, formulate a new rule using this structure:

\texttt{<RULE\_NAME>[Concise, descriptive name that clearly identifies the pattern]</RULE\_NAME>}

\texttt{<RULE\_DESCRIPTION>}\\
\texttt{Trigger Pattern: [Clear description of when this rule applies, with 2-3}\\
\texttt{~~~~~~~~~~~~~~~~~~specific indicators that should trigger the rule]}

\texttt{Exceptions: [Specific cases where the rule should not be applied, even if}\\
\texttt{~~~~~~~~~~~~~the pattern appears to match]}

\texttt{Example [One clear example]:}\\
\texttt{Source text: [Relevant source text if any]}\\
\texttt{Wrong:~~~[Example of violation]}\\
\texttt{Correct: [Example of compliance]}\\
\texttt{</RULE\_DESCRIPTION>}

Please analyze the error examples thoroughly and formulate a new rule that
would prevent similar classification errors in the future.
\end{prompttemplatebox}

\subsection{Rulebook Revision meta-prompts}
\label{ssec:rl2r-prompts}

The Rulebook Revision pipeline (Apdx.~\ref{app:rl2rules}) distills new
candidate SOP rules from post-RL behavior on hard inputs. Five LLM-driven
stages are used; the per-stage prompts follow.

\Needspace{28\baselineskip}
\begin{prompttemplatebox}[breakable]{Rulebook Revision --- Strategy Taxonomy Discovery (Stage 2)}
\label{prompt:rl2r-taxonomy-discovery}
\footnotesize
You are analyzing reasoning traces from a model trained with reinforcement
learning to improve on hard examples.

\texttt{<TASK>}\\
\texttt{\{TASK\_DESCRIPTION\}}\\
\texttt{</TASK>}

Below are \texttt{\{NUM\_SAMPLES\}} correct reasoning traces from different input
categories:

\texttt{\{ROLLOUT\_ENTRIES\}}

Identify the distinct REASONING STRATEGIES used across these traces.

A reasoning strategy is a reusable analytical METHOD --- how the model reasons,
not what specific input topic it reasons about. Group by HOW the model reasons
(analytical method), not by WHAT topic or input content it reasons about.
Traces on different topics that use the same analytical approach belong to the
same strategy. Two traces that reach the same conclusion via different
analytical paths count as different strategies.

For each strategy, output in this exact format:

\texttt{<STRATEGY id="N">}\\
\texttt{Analysis: [Describe the shared analytical method across traces}\\
\texttt{~~~~~~~~~~that use this strategy. What steps does the reasoning}\\
\texttt{~~~~~~~~~~follow? What makes it distinct from other strategies?}\\
\texttt{~~~~~~~~~~Do not reference specific input topics --- describe the}\\
\texttt{~~~~~~~~~~abstract pattern.]}\\
\texttt{Label: [3--6 word name for this strategy]}\\
\texttt{</STRATEGY>}

Identify as many genuinely distinct strategies as you find. Do not force a
predetermined number.
\end{prompttemplatebox}

\Needspace{25\baselineskip}
\begin{prompttemplatebox}[breakable]{Rulebook Revision --- Strategy Taxonomy Merge (Stage 2)}
\label{prompt:rl2r-taxonomy-merge}
\footnotesize
Below are reasoning strategy taxonomies discovered in three independent rounds
of analysis on the same set of model reasoning traces. Many strategies across
rounds describe the same underlying analytical pattern in different words.

Round 1:\\
\texttt{\{ROUND\_1\_STRATEGIES\}}

Round 2:\\
\texttt{\{ROUND\_2\_STRATEGIES\}}

Round 3:\\
\texttt{\{ROUND\_3\_STRATEGIES\}}

Merge these into a single deduplicated taxonomy.
\begin{itemize}\setlength{\itemsep}{0pt}\setlength{\topsep}{0pt}
\item Merge strategies that describe the SAME analytical method, even if worded
  differently. Keep the clearest analysis.
\item Keep strategies that are genuinely distinct --- do not force-merge
  strategies that differ in analytical approach just because they sound
  vaguely similar.
\item If a strategy from one round is a sub-case of a broader one, keep the
  broader one and note the sub-case in its analysis.
\end{itemize}

Output the merged taxonomy:

\texttt{<STRATEGY id="N">}\\
\texttt{Analysis: [Merged analysis. Note which round entries were merged.]}\\
\texttt{Label: [3--6 word name]}\\
\texttt{</STRATEGY>}

Output as many strategies as the data warrants.
\end{prompttemplatebox}

\Needspace{14\baselineskip}
\begin{prompttemplatebox}{Rulebook Revision --- Rollout Classification (Stage 3)}
\label{prompt:rl2r-classify}
\footnotesize
\texttt{<TASK>}\\
\texttt{\{TASK\_DESCRIPTION\}}\\
\texttt{</TASK>}

Reasoning strategies:

\texttt{\{TAXONOMY\}}

Reasoning trace to classify:

\texttt{\{REASONING\}}

Which strategy does this trace primarily use? Output ONLY the strategy id
(a number), or \texttt{"OTHER"} if none fits well.
\end{prompttemplatebox}

\Needspace{30\baselineskip}
\begin{prompttemplatebox}[breakable]{Rulebook Revision --- Per-Cluster Rule Synthesis, paired traces (Stage 4)}
\label{prompt:rl2r-synthesize}
\footnotesize
You are an expert at \texttt{\{CLASSIFICATION\_TASK\}}.

The rulebook below works well on most inputs but misses some hard examples ---
a teacher applying it reasons incorrectly, while an RL-trained model reasons
correctly on the same inputs. Your task: propose ONE new rule that complements
the rulebook.

\texttt{<EXISTING\_RULES>}\\
\texttt{\{EXISTING\_RULES\}}\\
\texttt{</EXISTING\_RULES>}

Paired traces below show (teacher-incorrect, RL-correct) on the same inputs,
grouped because the correct reasoning shares a strategy.

\texttt{<PAIRED\_REASONING\_TRACES>}\\
\texttt{\{ROLLOUT\_ENTRIES\}}\\
\texttt{</PAIRED\_REASONING\_TRACES>}

Treat rollouts whose gold label is \texttt{\{RULE\_LABEL\}} as positive evidence
(rule should fire) and rollouts whose gold label is anything else as negative
evidence (rule should NOT fire). If you cannot identify a coherent pattern
that supports label=\texttt{\{RULE\_LABEL\}} --- including the case where the
pattern actually points to a different label --- output ONLY:
\texttt{SKIP: insufficient signal for label=\{RULE\_LABEL\}}.

Otherwise, generate ONE compact rule predicting label=\texttt{\{RULE\_LABEL\}},
with:
\begin{itemize}\setlength{\itemsep}{0pt}\setlength{\topsep}{0pt}
\item Trigger: the specific features the correct reasoning used.
\item Exceptions: the specific mistakes the incorrect reasoning made
  (2--3 bullets).
\item One concrete source-text example.
\end{itemize}

Output (the first line inside \texttt{<RULE\_DESCRIPTION>} MUST be exactly
\texttt{Rule Label: \{RULE\_LABEL\}} --- numeric, no text alias; downstream
tooling parses this line):

\texttt{<RULE\_NAME>[name]</RULE\_NAME>}

\texttt{<RULE\_DESCRIPTION>}\\
\texttt{Rule Label: \{RULE\_LABEL\}}\\
\texttt{[body]}\\
\texttt{</RULE\_DESCRIPTION>}
\end{prompttemplatebox}

\Needspace{22\baselineskip}
\begin{prompttemplatebox}{Rulebook Revision --- Rule Equivalence Judge (Stage 5)}
\label{prompt:rl2r-dedup}
\footnotesize
Two candidate classification rules are shown below. Both are intended for the
same task: \texttt{\{CLASSIFICATION\_TASK\}}.

\texttt{<RULE\_1>}\\
\texttt{\{RULE\_1\_BODY\}}\\
\texttt{</RULE\_1>}

\texttt{<RULE\_2>}\\
\texttt{\{RULE\_2\_BODY\}}\\
\texttt{</RULE\_2>}

Question 1: Are these two rules semantically equivalent --- would they fire
on the same inputs and produce the same label? Answer strictly ``YES'' or
``NO'' on its own line.

Question 2 (ONLY if YES): Which rule is clearer, more general, or otherwise
preferable to keep? Answer strictly one of ``RULE\_1'', ``RULE\_2'', or
``EITHER'' on its own line.

Output format (exactly two lines for YES, one line for NO):\\
\texttt{LINE1: YES or NO}\\
\texttt{LINE2: RULE\_1 or RULE\_2 or EITHER \ \ (only when LINE1 is YES)}
\end{prompttemplatebox}

\subsection{Per-dataset placeholders}
\begin{table*}[!tbp]
\centering\small
\caption{Per-dataset values for the placeholders used in the generic prompt templates above.}
\label{tab:appendix-placeholders}
\begin{tabular}{@{}llll@{}}
\toprule
Placeholder & ContractNLI & ICLR Review & MIMIC Readmission \\
\midrule
\texttt{\{input\_tag\}} & \texttt{<CONTRACT\_HYPOTHESIS\_PAIR>} & \texttt{<REVIEWER\_COMMENTS>} & \texttt{<DISCHARGE\_SUMMARY>} \\
\texttt{\{input\_noun\}} & contract text and the hypothesis & reviewer comments & discharge summary \\
\texttt{\{labels\}} & not\_mentioned / entailment / contradiction & accept / reject & readmitted / not\_readmitted \\
\bottomrule
\end{tabular}
\end{table*}

\section{Experiment Setup Details}
\label{app:setup}

This appendix gives the full setup behind \S\ref{sec:experiments}: dataset construction and splits (Apdx.~\ref{app:data}), model and training configurations for all baselines and \method variants (Apdx.~\ref{app:config}), evaluation metrics (Apdx.~\ref{app:metric}), and LLM-judge details with full prompts (Apdx.~\ref{app:judge}).

\subsection{Dataset Description}
\label{app:data}




\textbf{\textsf{ContractNLI}}~\cite{DBLP:conf/emnlp/KoreedaM21}.\footnote{\url{https://stanfordnlp.github.io/contract-nli/}} We use the Stanford ContractNLI corpus, comprising 10{,}319 (contract, hypothesis) pairs over 607 non-disclosure agreements. Each instance carries a human-labeled evidence span---the contiguous portion of the contract that supports the label---used as ground truth for our NLI- and judge-based explanation metrics. We use the official train/dev/test split (7{,}191 / 1{,}037 / 2{,}091).

\textbf{\textsf{ICLR\_Reviews}.}\footnote{\url{https://huggingface.co/datasets/Daoze/ReviewRebuttal}} Sourced from the Re$^2$ corpus \cite{zhang2025re2}, which aggregates peer-review records across 24 conferences and 21 workshops on OpenReview (including NeurIPS, ACL, and others). We restrict to ICLR main-conference rows (workshops, tiny-papers, and blog tracks are excluded) and keep all venue-matched papers in the official Re$^2$ train/test split, additionally carving out an in-distribution validation set from train. The resulting split is 8{,}552 / 1{,}000 / 655 for the binary \emph{accept}/\emph{reject} task. Year coverage is inherited from Re$^2$: train/val span ICLR 2017, 2018, 2020--2023; test spans ICLR 2020--2023 only (no test rows from 2017/2018; ICLR 2019 appears in Re$^2$ as workshops only and is filtered out; ICLR 2024 is absent from Re$^2$). Each paper's evidence is its official meta-review summary; reviewer texts (input) and meta-review (evidence) are disjoint.

\textbf{\textsf{MIMIC\_Readmission}.}\footnote{\url{https://physionet.org/content/mimiciv/3.1/}} Built from MIMIC-IV~v3.1 \cite{Johnson_2023} discharge summaries. We subsample 10{,}000 admissions via patient-level iterative multi-label stratification \cite{sechidis2011stratification} jointly balancing the binary readmission label and the discharge service, yielding an 8{,}000 / 1{,}000 / 1{,}000 train/val/test split with no patient overlap across splits. Evidence is a structured per-admission summary aligned to the LACE \cite{walraven2010lace} schema; comorbidities are mapped from \texttt{diagnoses\_icd} via Quan-2005 Charlson coding \cite{quan2005coding}, and the ED component is proxied by 6-month inpatient admission counts (MIMIC-IV's ED module is unaligned with our cohort). Output format and per-field formulas follow LACE conventions; see the released code for exact mapping.

\paragraph{Licenses, intended use, and de-identification.}
All datasets and tools are used in compliance with their licenses, for non-commercial research only, consistent with their intended use. \textbf{MIMIC-IV} requires PhysioNet credentialed access under the PhysioNet Credentialed Health Data License v1.5.0; we obtained this access via the standard CITI training and Data Use Agreement and do not redistribute any record-level data. MIMIC-IV is de-identified per HIPAA Safe Harbor by the dataset providers; we did no additional re-identification or aggregation steps. \textbf{ContractNLI} uses publicly redacted NDAs and \textbf{ICLR Review} uses public OpenReview metadata, neither of which contains personal identifiers beyond the original public release. \textbf{Qwen3-4B} is released under Apache-2.0; Captum, MiniCheck, and DSPy under permissive open-source licenses (BSD/MIT). Closed-source LLMs (gpt-4.1-mini, gpt-4o-mini, gemini-2.5-flash-lite) are accessed via OpenAI / Google APIs in compliance with their terms of service.


\paragraph{Train/val splits per method.} \spo and PO baselines (MIPROv2, GEPA) were run on the same 180-row class-balanced PO-train and 400-row natural-ratio PO-val, across each dataset. Both PO-train and PO-val are subsets of the training set; this is to prevent pollution of the validation set in stages II and III. R-SFT and RL use the full training and validation sets (see Table~\ref{tab:dataset}).

\subsection{Model Configuration Details}
\label{app:config}

Tables~\ref{tab:hparams_ft} and~\ref{tab:hparams_po} summarize the hyperparameters of all fine-tuning (FT) and prompt-optimization (PO) methods, respectively. All FT runs share the same student backbone (Qwen3-4B) and hardware (4$\times$~H100 80GB). SFT (Cls.~Head) is trained full-parameter with the HuggingFace \texttt{Trainer} (\texttt{AutoModelForSequenceClassification}); R-SFT uses LlamaFactory \cite{DBLP:conf/acl/ZhengZZYL24}; RL uses verl \cite{sheng2024hybridflow} with our custom sampler that supports upsampling and group-decoupled auxiliary reward. For all FT methods we retain the checkpoint with the best validation macro-$F_1$ and report all numbers on the held-out test set under this selection. All FT methods use the AdamW optimizer with mixed-precision (bf16) training and a single random seed (42). For \textsc{BD-GRPO}, the per-class quota $n_c$ is set to $\lfloor B/C \rfloor$ with a rotating remainder when $B \nmid C$, where $B{=}16$ is the effective batch size and $C \in \{2, 3\}$ is the number of classes (e.g., $8$/$8$ for binary, $6$/$5$/$5$ for 3-class).

\begin{table*}[!tbp]
\centering
\caption{Fine-tuning method hyperparameters. RL uses verl GRPO with dynamic oversample-then-filter on top of an R-SFT-merged initialization. RL ablation variants (class balance, dynamic filter, auxiliary reward) and their specific configurations are detailed in Table~\ref{tab:hparams_rl_ablation}.}
\label{tab:hparams_ft}
\setlength{\tabcolsep}{3pt}
\small
\vspace{-0.05in}
\begin{tabular}{lccc}
\toprule
 & \textbf{SFT (Cls. Head)} & \textbf{R-SFT} & \textbf{RL (\textsc{eXTC})} \\
\midrule
Backbone        & Qwen3-4B & Qwen3-4B & Qwen3-4B (R-SFT init) \\
Fine-tuning     & Full-parameter & LoRA $r{=}64, \alpha{=}128$ & Full-parameter \\
LoRA target     & --- & all linear (q/k/v/o + MLP) & --- \\
Epochs / Steps  & 4 epochs & 2 epochs & 260 steps \\
Learning rate   & $5\mathrm{e}{-}6$ (10\% warmup) & $2\mathrm{e}{-}4 \to 2\mathrm{e}{-}5$ (cos) & $1\mathrm{e}{-}6$ (10\% warmup) \\
Eff.\ batch     & 32 & 32 & 16 \\
Max input len.  & 8192 & 8192 & 8192 \\
Max output len. & --- & --- & 1024 \\
\bottomrule
\end{tabular}
\end{table*}

\begin{table*}[!tbp]
\centering
\caption{Prompt-optimization method hyperparameters. All PO methods use the same 180/400 PO-train/PO-val subset (Apdx.~\ref{app:data}); CoT requires no optimization.}
\label{tab:hparams_po}
\setlength{\tabcolsep}{3pt}
\small
\vspace{-0.05in}
\begin{tabular}{lcccc}
\toprule
 & \textbf{CoT} & \textbf{MIPROv2} & \textbf{GEPA} & \textbf{\textsc{SPO4SOP}} \\
\midrule
Task model        & gpt-4.1-mini & gpt-4.1-mini & gpt-4.1-mini & gpt-4.1-mini \\
\quad temperature & 0 & 0 & 0 & 0 \\
Optimizer model   & --- & gpt-5.4-mini & gpt-5.4-mini & gpt-5.4-mini \\
\quad temperature & --- & 1 & 1 & 1 \\
\quad reasoning   & --- & low & low & low \\
Train / Val       & --- & 180 / 400 & 180 / 400 & 180 / 400 \\
Budget            & --- & auto=medium & auto=heavy & $T{=}6$, batch$=$30, beam$=$15 \\
Subset constraint & --- & --- & --- & $K{=}8$, $\lambda{=}1.0$ \\
Selection         & --- & Bayesian & Pareto + Reflection & Reflection + Beam \\
\bottomrule
\end{tabular}
\end{table*}

\paragraph{PO baseline implementation and metric choice.}
We invoke MIPROv2 and GEPA through the official DSPy framework \cite{journals/corr/abs-2310-03714}. Both methods are designed around per-sample scalar feedback and cannot natively optimize aggregate metrics such as macro-$F_1$ or balanced accuracy. We therefore use the binary \texttt{correctness} signal (1 if the predicted label matches the gold label, 0 otherwise), equivalent to optimizing accuracy. Neither baseline is aware of class imbalance during prompt search, partially explaining their weaker performance on the most label-imbalanced dataset, MIMIC Readmission, reported in Table~\ref{tab:po_ablation}.

\paragraph{Cls-head SFT saliency extraction.}
For the Cls-head SFT baseline (Qwen3-4B with a classification head, no native reasoning), we generate a token-level pseudo-explanation post-hoc with Captum's \texttt{LayerGradientXActivation} \cite{kokhlikyan2020captum} on the input-embedding layer, targeting the predicted-class logit. This attribution gives a signed, magnitude-aware contribution per token, isolating the input tokens that pull the model toward its prediction---the natural candidates for an evidence-like span set. Per-token scores are summed over the hidden dimension and ReLU-clamped (keeping only tokens that support the predicted label); high-saliency spans are then picked by greedy non-maximum suppression over $W{=}10$-token windows until $400$ tokens are accumulated. Selected spans are decoded back to text, yielding the pseudo-reasoning string scored by the NLI and LLM-judge metrics.

\begin{table*}[!tbp]
\centering
\caption{RL ablation variant hyperparameters, matching the rows of Table~\ref{tab:rl_ablation} (performance). $^\dagger$ RL runs on ContractNLI override the entropy coefficient to $0.001$ for all variants to encourage exploration; ICLR Review and MIMIC use the defaults shown. $^\ddagger$ \textsc{B-GRPO} is allotted $3\times$ more optimizer steps because, lacking dynamic filtering, each of its steps consumes the same nominal rollout budget but yields a much noisier (mostly zero-advantage) gradient; the longer schedule gives it the wall-clock budget needed to peak (see Fig.~\ref{fig:rl_wallclock}).}
\label{tab:hparams_rl_ablation}
\setlength{\tabcolsep}{3pt}
\small
\vspace{-0.05in}
\begin{tabular}{lcccc}
\toprule
 & \textbf{\textsc{B-GRPO}} & \textbf{\textsc{D-GRPO}} & \textbf{\textsc{BD-GRPO} (III)} & \textbf{\textsc{BD-GRPO}+aux} \\
\midrule
Init                  & R-SFT merged & R-SFT merged & R-SFT merged & R-SFT merged \\
Fine-tuning           & Full-param & Full-param & Full-param & Full-param \\
Learning rate         & $1\mathrm{e}{-}6$ & $1\mathrm{e}{-}6$ & $1\mathrm{e}{-}6$ & $1\mathrm{e}{-}6$ \\
LR warmup             & 10\% & 10\% & 10\% & 10\% \\
KL coef               & 0.001 & 0.001 & 0.001 & 0.001 \\
Rollout temp.         & 1.0 & 1.0 & 1.0 & 1.0 \\
Eff.\ batch           & 16 & 16 & 16 & 16 \\
Rollouts ($n$)        & 8 & 8 & 8 & 8 \\
Entropy coef$^\dagger$ & 0 & 0 & 0 & 0 \\
Training steps$^\ddagger$ & 780 & 260 & 260 & 260 \\
\midrule
Clip (low / high)     & 0.2 / 0.2 & 0.2 / 0.28 & 0.2 / 0.28 & 0.2 / 0.28 \\
Upsampling batch size & --- & 6$\times$ batch & 6$\times$ batch & 6$\times$ batch \\
Class-balanced batch  & \ding{51} & \ding{55} & \ding{51} & \ding{51} \\
Aux reward            & --- & --- & --- & LLM-judge ($\bx \to \br$), $\lambda{=}0.2$ \\
\bottomrule
\end{tabular}
\end{table*}

\subsection{Metrics}
\label{app:metric}

\noindent\textbf{Classification performance metrics.~}
Let $C$ be the number of classes and let $\mathrm{recall}_c$, $\mathrm{precision}_c$, $F_{1,c}$ denote the per-class recall, precision, and $F_1$. We report:
\begin{align*}
\text{macro-}F_1 &= \tfrac{1}{C} \textstyle\sum_{c=1}^{C} F_{1,c}, \\
\text{BAcc} &= \tfrac{1}{C} \textstyle\sum_{c=1}^{C} \mathrm{recall}_c.
\end{align*}
Both are unweighted means over classes, robust to label imbalance.

Predictions that fail to parse into a valid label (e.g., format violations, missing \texttt{LABEL:} field) are counted as wrong for the gold class, so an unparseable output contributes one false negative to its true class and zero true positives anywhere. Parse-failure rates are below 1\% across all reported runs.

\noindent\textbf{Explanation quality metrics.~}
For each test example we evaluate the model's reasoning trace $\br$ against the dataset-provided evidence $\bx_{\text{ev}}$ with two complementary metrics:
\begin{itemize}[noitemsep,topsep=0pt,leftmargin=1em]
\item \emph{NLI score}: per-sentence entailment with $\bx_{\text{ev}}$ as the premise and each sentence of $\br$ as the hypothesis, scored by MiniCheck-Flan-T5-Large \cite{tang2024minicheck}. We average the per-sentence entailment probability over sentences in $\br$.
\item \emph{LLM-judge score}: a G-Eval-style 1--5 Likert rating \cite{liu2023geval} of how well $\br$ is grounded in $\bx_{\text{ev}}$, returned by gemini-2.5-flash-lite.
\end{itemize}
For all metrics higher is better. NLI and LLM-judge are computed only on examples for which evidence is available; coverage is 100\% for ContractNLI / ICLR\_Reviews and~$\sim$95\% for MIMIC\_Readmission (admissions without LACE-derived evidence are excluded).

\subsection{LLM-judge details}
\label{app:judge}

Two G-Eval-style LLM-judges appear in our pipeline with distinct roles. \textbf{(i)~Evaluation judge (Groundedness):} scores reasoning $\br$ against the held-out evidence span $\bx_{\text{ev}}$; powers the explanation-quality metric reported in our results tables. \textbf{(ii)~Auxiliary reward judge (Faithfulness):} scores reasoning $\br$ against the model input $\bx$ (not $\bx_{\text{ev}}$) as a dense secondary reward during RL training in BD-GRPO + aux ablation (Table~\ref{tab:hparams_rl_ablation}). We describe each in turn below, then list the per-dataset prompt substitutions shared by both judges in Table~\ref{tab:judge_per_dataset} at the end of this section.

\paragraph{Evaluation judge: model.}
We use gemini-2.5-flash-lite \cite{DBLP:journals/corr/abs-2507-06261} accessed through the Google Enterprise Agent Platform (formerly Google Vertex AI), which exposes per-token log-probabilities required for G-Eval-style probability-weighted scoring. gemini-2.5-flash-lite is from a different model family than the task classifier (gpt-4.1-mini) and the student LLM (Qwen3-4B), which could avoid the same-family preference bias of LLM judges~\cite{panickssery2024llm}.

\paragraph{Evaluation judge: scoring.}
We extract a continuous score from the judge's output as the expected value $s = \sum_{i=1}^{5} i \cdot p(i)$ over the renormalized top-$k$ token-level probabilities of $\{$\texttt{1},\texttt{2},\texttt{3},\texttt{4},\texttt{5}$\}$ at the answer position. The same expected-value extraction is reused by the auxiliary judge below.

\paragraph{Evaluation judge: prompt.}
The Groundedness prompt template (shared across the three datasets up to two task-specific phrases substituted via \textsc{Python} \texttt{.format()}) is:

\Needspace{45\baselineskip}
\begin{prompttemplatebox}{Groundedness Prompt (Evaluation Judge)}
\footnotesize\setlength{\parskip}{2pt}
You will be given an evidence excerpt. You will then be given a reasoning trace generated by an AI model that \texttt{\{task\_description\}}.

Your task is to rate the reasoning on one metric. Please make sure you read and understand these instructions carefully. Please keep this document open while reviewing, and refer to it as needed.

\textbf{Evaluation Criteria.}
Groundedness (1--5) --- the degree to which the reasoning's CONCLUSION (its predicted label / verdict) is directly tied to specific cited evidence from \texttt{\{evidence\_phrase\}}.\\
Anchored scale:\\
\texttt{~~5 =} Conclusion explicitly named AND directly tied to specific cited evidence (``entailment because Section~3.1 prohibits X'', ``recommend acceptance because Reviewer~2 raised score to 6'', ``low risk because no recent admissions, no malignancy''). Multi-step reasoning chain to verdict.\\
\texttt{~~4 =} Conclusion named and tied to at least one specific citation (section \#, named reviewer/score, clinical value, named rule).\\
\texttt{~~3 =} Conclusion stated but justified by general paraphrase or ``broadly consistent'' framing --- no specific citation linking evidence to verdict.\\
\texttt{~~2 =} Conclusion stated but weakly supported; reasoning relies on unstated assumptions.\\
\texttt{~~1 =} At least one claim is refuted by \texttt{\{evidence\_phrase\}}, OR the text contains no analytical claim leading to a verdict (bare lists, boilerplate, anonymization markers).

CALIBRATION (apply STRICTLY):
\begin{itemize}[noitemsep, topsep=0pt, leftmargin=1.2em]
\item Generic verdict-justification (``the contract supports X'', ``the patient is low risk'') with no specific citation is anchor~3.
\item Rule-attribution paired with concrete textual evidence (``Per R5: novelty weak because incremental contribution'') tied to verdict counts as specific citation.
\item Exclusion enumeration tied to verdict (``low risk because no malignancy, no recent chemo'') = anchor 4--5.
\item Fragmented text is a 1.
\end{itemize}
APPLY THE RUBRIC STRICTLY.

\textbf{Evaluation Steps.}
\begin{enumerate}[noitemsep, topsep=0pt, leftmargin=1.5em]
\item Read \texttt{\{evidence\_phrase\}} carefully and identify the specific facts.
\item Find the reasoning's CONCLUSION (predicted label / verdict).
\item Check whether the conclusion is DIRECTLY tied to specific cited evidence (section \#, reviewer \#/score, clinical value, named rule + evidence, or exclusion enumeration). General verdict-justification without specific citation = anchor 3.
\item Apply the CALIBRATION strictly.
\end{enumerate}

\textbf{Example.}\\
Evidence: \texttt{\{source\}}\\
Reasoning: \texttt{\{reasoning\}}\\
Evaluation Form (scores ONLY):\\
\texttt{- Groundedness:}
\end{prompttemplatebox}

\paragraph{Auxiliary judge: motivation and model.}
We use gpt-4o-mini~\cite{DBLP:journals/corr/abs-2410-21276} with a G-Eval-style \emph{Faithfulness} rubric \cite{liu2023geval} on the input--reasoning pair $(\bx, \br)$, \emph{not} the evidence--reasoning pair $(\bx_{\text{ev}}, \br)$: $\bx_{\text{ev}}$ is by construction a near-sufficient determinant of the label (e.g., the ICLR Review meta-review summary is strongly related to the final acceptance decision). Rewarding alignment with $\bx_{\text{ev}}$ would leak label-revealing text unavailable to baselines, whereas $\bx$ is already the model's inference-time context, so faithfulness to $\bx$ is leak-free. The auxiliary judge reuses the same expected-value extraction as the evaluation judge.

\paragraph{Auxiliary judge: prompt.}
The Faithfulness prompt is shared across the three datasets up to two placeholders, \texttt{input} and \texttt{task\_description}:

\Needspace{25\baselineskip}
\begin{prompttemplatebox}[breakable]{Faithfulness Prompt (Auxiliary Judge)}
\footnotesize
You will be given \texttt{\{input\}}. You will then be given a reasoning trace generated by an AI model that \texttt{\{task\_description\}}.

Your task is to rate the reasoning on one metric.

Please make sure you read and understand these instructions carefully. Please keep this document open while reviewing, and refer to it as needed.

Evaluation Criteria:

Faithfulness (1--5) --- the factual alignment between the reasoning and \texttt{\{input\}}. A faithful reasoning's claims are all directly traceable to \texttt{\{input\}}, without going beyond what \texttt{\{input\}} establishes. Penalize reasonings that misread \texttt{\{input\}}, fabricate details, or rely on external knowledge not present in \texttt{\{input\}}.

Evaluation Steps:
\begin{enumerate}[noitemsep, topsep=0pt, leftmargin=1.5em]
\item Read \texttt{\{input\}} carefully and identify the facts it presents.
\item Read the reasoning and compare it to \texttt{\{input\}}. Check if the reasoning makes claims that go beyond, contradict, or misread \texttt{\{input\}}.
\item Assign a score for faithfulness based on the Evaluation Criteria.
\end{enumerate}

Example:

Input:

\texttt{\{source\}}

Reasoning:

\texttt{\{reasoning\}}

Evaluation Form (scores ONLY):

\texttt{- Faithfulness:}
\end{prompttemplatebox}

\paragraph{Auxiliary judge: call cost.}
With train batch $B{=}16$ prompts per step, rollout group size $G{=}8$, and $T{=}260$ training steps per run, the auxiliary judge is called $B \cdot G \cdot T = 16 \cdot 8 \cdot 260 = 33{,}280$ times per run. Across the three +aux runs (ContractNLI / ICLR Review / MIMIC), this totals $\sim$100k gpt-4o-mini calls.

\paragraph{Auxiliary judge: group-decoupled normalization in \textsc{BD-GRPO}.}
A naive addition of the auxiliary reward to the binary correctness reward $R^{\text{cor}}_i \in \{-1, +1\}$ would dilute the per-group sparsity that the \textsc{BD-GRPO} oversample-then-filter step explicitly preserves. \textsc{BD-GRPO} drops any rollout group whose correctness reward has $\mathrm{std}(\{R^{\text{cor}}_j\}_{j=1}^{G}) = 0$ because that group carries no contrastive learning signal; pre-mixing a dense 1--5 auxiliary score into the reward would silently let the filter keep groups that have no correctness signal at all. To avoid this we follow the GDPO group-reward-decoupled formulation~\cite{liu2026gdpo}: the auxiliary judge is invoked \emph{after} the \textsc{BD-GRPO} filter (so only correctness-signal-bearing groups are scored, halving API cost vs.\ pre-filter scoring), and the two reward streams are z-scored independently within each surviving group before being summed into a single advantage:
\begingroup
\setlength{\abovedisplayskip}{2pt}\setlength{\belowdisplayskip}{2pt}
\begin{align*}
\hat{A}^{\bullet}_i &= \frac{R^{\bullet}_i - \mathrm{mean}(\{R^{\bullet}_j\}_{j=1}^{G})}{\mathrm{std}(\{R^{\bullet}_j\}_{j=1}^{G}) + \varepsilon}, \quad \bullet \in \{\text{cor}, \text{aux}\}, \\
\hat{A}_i &= \hat{A}^{\text{cor}}_i + \lambda \cdot \hat{A}^{\text{aux}}_i, \qquad \lambda{=}0.2.
\end{align*}
\endgroup
Here mean and std are computed over the $G{=}8$ rollouts of the same prompt that survive the \textsc{BD-GRPO} filter; rollouts inapplicable to the auxiliary judge prompt (empty reasoning or empty input) receive $\hat{A}^{\text{aux}}_i{=}0$ and pass through unchanged. Because the two advantages are z-scored independently, the auxiliary contribution stays in the same unit-variance scale as the correctness advantage regardless of how the raw 1--5 Faithfulness scores cluster, and the $\lambda{=}0.2$ weight keeps its magnitude at roughly one-fifth of the main signal. This is small enough that \textsc{BD-GRPO}'s filter decision and the overall optimization direction remain driven by correctness, while still giving the model a smooth, dense secondary signal that shapes which correct rollouts (and which incorrect-but-faithful rollouts) are pushed up or down.

\paragraph{Per-dataset substitutions (both judges).}
The dataset-specific phrases substituted via \textsc{Python} \texttt{.format()} into both prompt templates are listed in Table~\ref{tab:judge_per_dataset}. All other prompt content is identical across the three datasets.

\begin{table}[!t]
\centering
\small
\caption{Per-dataset substitutions for both judge prompt templates. \texttt{task\_description} is shared between the evaluation judge (Groundedness) and the auxiliary judge (Faithfulness); \texttt{evidence\_phrase} appears only in the evaluation-judge template; \texttt{input} appears only in the auxiliary-judge template.}
\label{tab:judge_per_dataset}
\begin{tabular}{p{0.18\linewidth} p{0.74\linewidth}}
\toprule
Dataset & Substitutions \\
\midrule
ContractNLI &
\texttt{task\_description}: ``decides whether the contract entails, contradicts, or does not mention the hypothesis'' \newline
\texttt{evidence\_phrase}: ``the contract clause excerpt'' \newline
\texttt{input}: ``the contract and hypothesis'' \\
\addlinespace
ICLR Review &
\texttt{task\_description}: ``predicts whether the paper should be accepted or rejected'' \newline
\texttt{evidence\_phrase}: ``the meta-review summary'' \newline
\texttt{input}: ``the reviewer comments'' \\
\addlinespace
MIMIC Readmission &
\texttt{task\_description}: ``predicts whether the patient would be readmitted within 30 days'' \newline
\texttt{evidence\_phrase}: ``the patient evidence summary'' \newline
\texttt{input}: ``the discharge summary'' \\
\bottomrule
\end{tabular}
\end{table}

\section{Additional Experiment Results}
\label{app:results}

This appendix presents results complementing \S\ref{sec:experiments}: the \spo validation trajectory across iterations, showing rule-quality progression on all three datasets, and additional qualitative case studies on ICLR Review and ContractNLI (Apdx.~\ref{app:cases}).

\subsection{SPO4SOP Validation Trajectory}\label{app:spoprogress}

Fig.~\ref{fig:spo-val-traj} shows val macro-$F_1$ over SPO4SOP iterations on all three datasets. Because each iteration's subset search ranges over the entire accumulated rule pool and ranks candidates by validation macro-$F_1$, the previously selected subset always remains a feasible candidate; the trajectory is therefore non-decreasing, up to small fluctuations induced by the beam-search approximation.

\begin{figure}[!t]
  \centering
  \includegraphics[width=\linewidth]{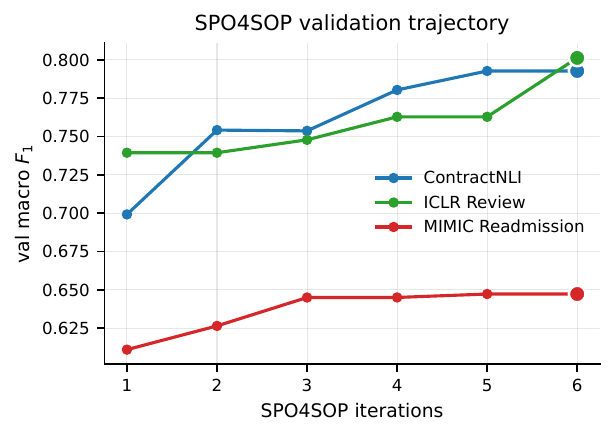}
  \caption{Val macro-$F_1$ across SPO4SOP iterations, per dataset.}
  \label{fig:spo-val-traj}
\end{figure}

\subsection{Additional Case Studies}
\label{app:cases}

Beyond the MIMIC Readmission teaser (Fig.~\ref{fig:crownjewel}), Figs.~\ref{fig:case_review} and~\ref{fig:case_contractnli} illustrate the same global-rule / local-reasoning interplay on the other two datasets. Each figure pairs an excerpt of a learned SOP rule with a real test case and side-by-side reasoning traces from zero-shot CoT and \method.

\paragraph{ICLR Review (Fig.~\ref{fig:case_review}).}
Panel~(a) shows Rule~1 from the learned SOP (full rule in Table~\ref{tab:iclr-rule1}): predict \texttt{reject} when reviewer comments contain rejection cues such as ``limited novelty'' or ``missing comparisons'', \emph{with the exception} that no reviewer explicitly recommends rejection and the raised concerns are fixable or exploratory. Panel~(b) presents a test case where three reviewers raise concerns about ``limited novelty'' and missing comparisons but none recommends outright rejection, and the criticisms remain fixable. In (b) right, \method correctly invokes the rule's exception---identifying the absence of an explicit reject signal and the fixable nature of the concerns---and predicts \texttt{accept}. In contrast, (b) left shows zero-shot CoT latching onto the surface critique phrases (``limited novelty,'' ``missing comparisons'') and mispredicting \texttt{reject}, without weighing that the aggregate reviewer signal is mixed-to-positive.

\paragraph{ContractNLI (Fig.~\ref{fig:case_contractnli}).}
Panel~(a) shows Rule~5 from the learned SOP (full rule in Table~\ref{tab:contractnli-rule5}): predict \texttt{contradiction} when a contract's cleanup clause requires the receiving party to return or destroy \emph{all} Confidential Information but the hypothesis claims the party may retain some of it, \emph{with the exception} that the contract expressly allows retention of specified categories (archival, legal, backup, etc.). Panel~(b) presents a test case where the contract requires ``recipient shall return all Confidential Information (including any copies thereof) or certify the destruction thereof''---an all-encompassing return/destroy obligation that contradicts the hypothesis ``Receiving Party may retain some Confidential Information.'' In (b) right, \method recognises the rule's trigger pattern (return\ldots or certify destruction of \emph{all} CI) and correctly outputs \texttt{contradiction}. In contrast, (b) left shows zero-shot CoT noting the strict return/destroy language but then misjudging the contract as ``silent on retention,'' incorrectly outputting \texttt{not\_mentioned}.

In both cases the failure mode mirrors the MIMIC example in \S\ref{ssec:study}: zero-shot CoT over-fires on a surface trigger or under-fires by missing an applicable rule, while \method invokes the relevant rule together with its exception. As in MIMIC, \method produces its reasoning \emph{without} the SOP in-context at inference time, indicating that the rules have been internalised through distillation and preserved through RL.

\begin{figure*}[!t]
  \centering
  {\setlength{\fboxsep}{2pt}%
  \fbox{\parbox[t]{\dimexpr\linewidth-4pt\relax}{\small\sloppy\emergencystretch=2em
    \textbf{(a) Example rule from the SOP (excerpt):} \hfill \textcolor{purple!70!black}{\textsf{[Global Explanation]}}\\
    \textbf{Trigger Pattern:} Predict \texttt{reject} when reviewer comments contain rejection cues (\hly{limited novelty / missing comparisons}). \ldots\\
    \textbf{Exceptions:} \hlg{no reviewer explicitly recommends rejection} and the criticisms are fixable / exploratory. \ldots\vspace*{1pt}%
  }}\\[1pt]
  \fcolorbox{gray!50}{gray!8}{\parbox[t]{\dimexpr\linewidth-5pt\relax}{\small\sloppy\emergencystretch=2em
    \textbf{(b) ICLR Review (excerpt).}\,\,
    \textit{Reviewer 1:} ``novelty is \emph{limited}; missing comparisons to Chen et al.\ (2018), nn-RNN, unitary-RNNs.'' \,\,\textit{Reviewer 2:} ``interesting exploration of non-normal RNN initialization; plots small; missing reference to Henaff (2016).'' \,\,\textit{Reviewer 3:} ``\emph{method is promising} and paper well written, but comparison with prior work missing.''
  }}\\[1pt]
  \begin{minipage}[c]{0.46\linewidth}
    \fbox{\parbox[t][2.0cm][c]{\dimexpr\linewidth-4pt\relax}{\small\setlength{\fboxsep}{1pt}%
      \textbf{Zero-shot CoT reasoning:}\\
      ``Several missing comparisons (Chen 2018, Henaff 2016, nn-RNN) and \hly{limited novelty} weaken the claim; the paper is not yet ready for acceptance.''\\
      \textit{Predicted:} \texttt{reject}\,\,\ding{55}
    }}%
  \end{minipage}\hfill
  \textbf{\large v.s.}\hfill
  \begin{minipage}[c]{0.46\linewidth}
    \fbox{\parbox[t][2.0cm][c]{\dimexpr\linewidth-4pt\relax}{\small\setlength{\fboxsep}{1pt}%
      \textbf{\method (RL) reasoning:} \hfill \textcolor{purple!70!black}{\textsf{[Local Explanation]}}\\
      ``Critiques cite \hly{limited novelty / missing comparisons}, but \hlg{no reviewer explicitly rejects} and concerns are fixable --- net mixed-to-positive.''\\
      \textit{Predicted:} \texttt{accept} (\checkmark)
    }}%
  \end{minipage}}%
  \caption{\textbf{\method at a glance on ICLR Review.}
  \textbf{(a)} a rule excerpt from our \textit{learned} rulebook (full rule in Apdx.~\ref{app:rules-spo4sop}, Rule~1);
  \textbf{(b)} on a test case: Zero-shot CoT overfires on the surface critique phrases (``limited novelty,'' ``missing comparisons''), whereas \method\ correctly invokes the rule's \emph{exception} (no explicit reject + fixable concerns).}
  \label{fig:case_review}
\end{figure*}

\begin{figure*}[!t]
  \centering
  {\setlength{\fboxsep}{2pt}%
  \fbox{\parbox[t]{\dimexpr\linewidth-4pt\relax}{\small\sloppy\emergencystretch=2em
    \textbf{(a) Example rule from the SOP (excerpt):} \hfill \textcolor{purple!70!black}{\textsf{[Global Explanation]}}\\
    \textbf{Trigger Pattern:} Predict \texttt{contradiction} when the cleanup clause requires \hly{return\ldots or destroy all} Confidential Information and the hypothesis claims the party may retain some. \ldots\\
    \textbf{Exceptions:} Do not apply if the contract \hlg{allows retention of specified categories} (archival / legal / backup). \ldots\vspace*{1pt}%
  }}\\[1pt]
  \fcolorbox{gray!50}{gray!8}{\parbox[t]{\dimexpr\linewidth-5pt\relax}{\small\sloppy\emergencystretch=2em
    \textbf{(b) ContractNLI test case.}\,\,
    \textit{Hypothesis:} ``Receiving Party may \emph{retain some} Confidential Information even after the return or destruction of Confidential Information.''\,\,
    \textit{Contract:} ``Recipient shall return all Confidential Information (including any copies thereof) \emph{or certify the destruction} thereof.''
  }}\\[1pt]
  \begin{minipage}[c]{0.46\linewidth}
    \fbox{\parbox[t][2.0cm][c]{\dimexpr\linewidth-4pt\relax}{\small\setlength{\fboxsep}{1pt}%
      \textbf{Zero-shot CoT reasoning:}\\
      ``Recipient must \hly{return or destroy all CI} on request, but \hlr{the contract is silent on retention} afterwards --- neither entails nor contradicts.''\\
      \textit{Predicted:} \texttt{not\_mentioned}\,\,\ding{55}
    }}%
  \end{minipage}\hfill
  \textbf{\large v.s.}\hfill
  \begin{minipage}[c]{0.46\linewidth}
    \fbox{\parbox[t][2.0cm][c]{\dimexpr\linewidth-4pt\relax}{\small\setlength{\fboxsep}{1pt}%
      \textbf{\method (RL) reasoning:} \hfill \textcolor{purple!70!black}{\textsf{[Local Explanation]}}\\
      ``Cleanup clause requires \hly{return\ldots or certify destruction of all} CI --- an all-encompassing obligation that contradicts `may retain some'.''\\
      \textit{Predicted:} \texttt{contradiction} (\checkmark)
    }}%
  \end{minipage}}%
  \caption{\textbf{\method at a glance on ContractNLI.}
  \textbf{(a)} a rule excerpt from our \textit{learned} rulebook (full rule in Apdx.~\ref{app:rules-spo4sop}, Rule~5);
  \textbf{(b)} on a test case: Zero-shot CoT reads the cleanup clause but wrongly calls the contract ``silent'' on retention, whereas \method\ applies Rule~5 (return/destroy-all contradicts ``may retain some'').}
  \label{fig:case_contractnli}
\end{figure*}

\section{\spo Learned Rules}
\label{app:rules-spo4sop}

\subsection{ContractNLI (5 rules)}
Tables~\ref{tab:contractnli-rule1}--\ref{tab:contractnli-rule5} list the five rules learned by \spo for ContractNLI.

\begin{table*}[!tb]
\centering
\begin{prompttemplatebox}{ContractNLI Rule 1 (fires $\to$ entailment): Paraphrased NDA Duty or Definition Still Entails}
\footnotesize
\textbf{Trigger Pattern:} Apply label 1 when the hypothesis is a semantic paraphrase of an express contractual duty or definition, even if the wording is different. Look for 2--3 indicators such as:\\
-- the contract uses different but equivalent legal phrasing, e.g.\ ``oral or written form'' = verbally conveyed;\\
-- party-role equivalents like ``Recipient'' / ``Receiving Party'';\\
-- broader wording that clearly covers the same fact, such as ``existence and nature of this Agreement'' covering disclosure that the agreement was agreed or negotiated.

\textbf{Exceptions:} Do not apply this rule if:\\
-- the contract merely discusses a related topic but does not clearly cover the hypothesis;\\
-- the hypothesis requires a stronger or different obligation than the source text;\\
-- the contract language is only tangentially related and not a true semantic match.

\textbf{Example.}\\
\textit{Source text:} ``Recipient agrees to keep the existence and nature of this Agreement confidential.''\\
\textit{Wrong:} Label 0 (NotMentioned) for ``The Receiving Party shall not disclose the fact that the Agreement was agreed or negotiated.''\\
\textit{Correct:} Label 1 (Entailment) because confidentiality of the agreement's existence and nature supports nondisclosure of the fact it was negotiated.
\end{prompttemplatebox}
\caption{ContractNLI learned rule 1 (SPO4SOP output).}
\label{tab:contractnli-rule1}
\end{table*}
\begin{table*}[!tb]
\centering
\begin{prompttemplatebox}{ContractNLI Rule 2 (fires $\to$ contradiction): Broad Confidentiality Scope Does Not Imply Granted Rights}
\footnotesize
\textbf{Trigger Pattern:} Apply label 2 when the hypothesis makes an exclusive or universal claim about the scope of Confidential Information, but the contract defines Confidential Information more broadly or allows confidentiality to arise without express marking/identification. Common indicators include:\\
-- the hypothesis says Confidential Information ``only includes'' a narrow category, while the contract lists additional categories such as business, financial, legal, operational, customer, marketing, proprietary, or trade secret information;\\
-- the hypothesis says all Confidential Information must be expressly identified or marked, but the contract covers information that is confidential by nature, known or reasonably known to be confidential, orally identified, or otherwise covered without marking;\\
-- the contract uses broad scope language such as ``including, without limitation,'' ``any information,'' ``all information and documents,'' or an expansive list that clearly exceeds the hypothesis's narrow limit.

\textbf{Exceptions:}\\
-- Do not apply this rule when the hypothesis is about whether the agreement grants a right, license, privilege, or affirmative entitlement to Confidential Information. Broad confidentiality restrictions do not by themselves contradict a statement that no such right is granted; if the contract is silent on granting rights, use NotMentioned.\\
-- Do not apply if the contract merely gives examples that do not clearly expand the scope beyond the hypothesis.\\
-- Do not apply if the text is genuinely silent on the scope issue and does not include broader definitional language or non-marking mechanisms.\\
-- Do not apply if the hypothesis is phrased as a possibility, limitation, or example rather than an exclusivity or universality claim.\\
-- Do not infer contradiction from confidentiality duties alone (e.g., nondisclosure, return, or use restrictions) when the hypothesis concerns the absence of an express grant of rights.

\textbf{Examples.}\\
\textit{Source text:} ``Confidential Information means any information and documents disclosed by a party, including, without limitation, financial, business, legal, technical, operational, and customer information.''\\
\textit{Wrong:} Label 0 for ``Confidential Information shall only include technical information.''\\
\textit{Correct:} Label 2, because the contract expressly includes many non-technical categories and therefore contradicts the exclusive technical-only limitation.

\textit{Source text:} ``Receiving Party shall not disclose or use any Confidential Information and shall return all documents containing Confidential Information upon request.''\\
\textit{Wrong:} Label 2 for ``Agreement shall not grant Receiving Party any right to Confidential Information.''\\
\textit{Correct:} Label 0, because the agreement does not expressly grant any right; it is silent on the specific rights issue.
\end{prompttemplatebox}
\caption{ContractNLI learned rule 2 (SPO4SOP output).}
\label{tab:contractnli-rule2}
\end{table*}
\begin{table*}[!tb]
\centering
\begin{prompttemplatebox}{ContractNLI Rule 3 (fires $\to$ contradiction): Explicit No-Record Or No-Copy Ban Contradicts Any Claimed Copy Permission}
\footnotesize
\textbf{Trigger Pattern:} Apply label 2 when the hypothesis asserts that the Receiving Party may copy, duplicate, reproduce, photograph, videotape, record, or preserve some Confidential Information, but the contract expressly prohibits any such activity. Common indicators include:\\
-- phrases like ``shall not photograph,'' ``shall not videotape,'' ``shall not make any record of,'' ``shall not copy,'' ``shall not reproduce,'' or ``shall not preserve'' Confidential Information;\\
-- an absolute prohibition on creating or retaining records, images, duplicates, or reproductions of Confidential Information;\\
-- broad anti-recording language that covers all forms of reproduction, even if the hypothesis describes only a limited or conditional copy right.

\textbf{Exceptions:} Do not apply this rule if:\\
-- the contract expressly allows copies for a stated purpose, archival use, legal compliance, or with consent;\\
-- the contract only restricts disclosure but does not restrict copying, recording, or preservation;\\
-- the hypothesis concerns retention after return/destruction rather than the act of copying itself;\\
-- the text is silent or ambiguous about recording/copying rights.

\textbf{Example.}\\
\textit{Source text:} ``Visitor shall not photograph, videotape, or otherwise make any record of or preserve any Confidential Information.''\\
\textit{Wrong:} Label 0 for ``The Visitor may create a copy of some Confidential Information in some circumstances.''\\
\textit{Correct:} Label 2, because the contract expressly forbids making records or preserving Confidential Information, which directly contradicts any asserted right to copy it.
\end{prompttemplatebox}
\caption{ContractNLI learned rule 3 (SPO4SOP output).}
\label{tab:contractnli-rule3}
\end{table*}
\begin{table*}[!tb]
\centering
\begin{prompttemplatebox}{ContractNLI Rule 4 (fires $\to$ contradiction): Broad ``Authorized Persons'' or Similar Internal-Carveout Exceptions Prevent Blanket Third-Party Ban Contradictions}
\footnotesize
\textbf{Trigger Pattern:} Apply label 2 only when the hypothesis says the Receiving Party may disclose/share Confidential Information with third parties, and the contract truly imposes a blanket ban on third-party disclosure. Use this rule when the contract bars disclosure to any third party in absolute terms, or only allows disclosure through a narrow written-authorization carveout, while the hypothesis claims a general right to share with outsiders.

\textbf{Exceptions:} Do not apply label 2 if the contract:\\
1. Expressly permits disclosure to a defined third-party category or representative group, including broad terms like ``Authorized Persons,'' consultants, agents, contractors, professional advisors, legal/financial advisors, or accountants.\\
2. Restricts disclosure to third parties ``except as expressly permitted'' or otherwise contains an internal carveout that makes the ban non-absolute.\\
3. Allows disclosure with the prior written consent or other written consent of the Disclosing Party, and the hypothesis is consistent with that consent-based permission.\\
4. Is merely silent, vague, or ambiguous about third-party disclosure rather than expressly prohibiting it.\\
5. Uses a broad permitted-recipient definition that can reasonably encompass the hypothesis's examples (e.g., consultants, agents, professional advisors), even if not listed verbatim.

\textbf{Examples.}\\
\textit{Source text:} ``Receiving Party shall not divulge Confidential Information to any third party for any purpose unless expressly authorized in writing by Disclosing Party.''\\
\textit{Wrong:} Label 0 for ``The Receiving Party may share some Confidential Information with consultants, agents, and professional advisors.''\\
\textit{Correct:} Label 2, because the contract prohibits third-party disclosure unless written authorization is given, and it does not create a general consultant/agent/advisor exception.

\textit{Source text:} ``Section 3(c) prohibits disclosure to third parties except as expressly permitted under Paragraph 4. Paragraph 4 authorizes disclosure to legal advisors, financial advisors, and accountants. Disclosure may also be made with the prior written consent of the Disclosing Party.''\\
\textit{Wrong:} Label 2 for ``The Receiving Party may disclose Confidential Information to some third parties.''\\
\textit{Correct:} Label 1, because the contract expressly allows disclosure to certain third-party representatives and by prior written consent.

\textit{Source text:} ``The Receiving Party shall not disclose Confidential Information to any third party, except to the Receiving Party's own Authorized Persons on a need-to-know basis. `Authorized Persons' means any person or entity authorized to access the Confidential Information for the Purpose and bound by confidentiality obligations at least as protective as this Agreement.''\\
\textit{Wrong:} Label 2 for ``The Receiving Party may share Confidential Information with consultants, agents, and professional advisors.''\\
\textit{Correct:} Label 1, because the agreement contains an express third-party exception for broadly defined Authorized Persons, which can include those categories.
\end{prompttemplatebox}
\caption{ContractNLI learned rule 4 (SPO4SOP output).}
\label{tab:contractnli-rule4}
\end{table*}
\begin{table*}[!tb]
\centering
\begin{prompttemplatebox}{ContractNLI Rule 5 (fires $\to$ contradiction): Return/Destroy Clauses Do Not Imply a Right to Make Copies}
\footnotesize
\textbf{Trigger Pattern:} Apply label 2 when the hypothesis claims the Receiving Party may retain, keep, possess, or make/use copies of Confidential Information after return, destruction, or termination, but the contract requires one or more of the following:\\
-- return all confidential or proprietary materials upon request or termination;\\
-- destroy all such materials or copies;\\
-- not keep copies, duplicates, or retained versions of the information;\\
-- surrender or return all documents/copies without any retention right.\\
Use this rule when the clause shows a full return/destruction framework and the hypothesis tries to derive a right to retain or possess copies from language that only addresses returning/destroying existing copies.

\textbf{Exceptions:} Do not apply this rule if:\\
1. The contract expressly allows retention of specific archival, legal, regulatory, or backup copies.\\
2. The contract is only about return/destruction of existing materials and is silent on post-termination retention; if the hypothesis is about retaining copies, that may be NotMentioned unless retention is prohibited.\\
3. The hypothesis is about the right to create copies in some circumstances, rather than to keep existing copies after termination; that is a different topic.\\
4. The contract never addresses copies at all, and the only relevant issue is general copying permission.\\
5. The hypothesis concerns a different subject unrelated to post-termination possession, return, destruction, or copy retention.

\textbf{Examples.}\\
\textit{Source text:} ``Upon request or termination, Receiving Party shall return all Information and destroy all materials containing Information and shall not keep copies or duplicates thereof.''\\
\textit{Wrong:} Label 0 for ``The Receiving Party may retain some Confidential Information even after return or destruction.''\\
\textit{Correct:} Label 2, because the contract expressly requires return/destruction and forbids keeping copies or duplicates, which conflicts with any post-return retention right.

\textit{Source text:} ``Upon termination, Recipient shall promptly return or destroy all Confidential Information and all copies thereof.''\\
\textit{Wrong:} Label 2 for ``Recipient is permitted to make copies of Confidential Information for use during the term.''\\
\textit{Correct:} Label 0, because the clause only requires returning or destroying existing copies after termination; it does not address whether copying is allowed in the first place.
\end{prompttemplatebox}
\caption{ContractNLI learned rule 5 (SPO4SOP output).}
\label{tab:contractnli-rule5}
\end{table*}
\subsection{ICLR Review (2 rules)}
Tables~\ref{tab:iclr-rule1} and~\ref{tab:iclr-rule2} list the two rules learned by \spo for ICLR Review.

\begin{table*}[!tb]
\centering
\begin{prompttemplatebox}{ICLR Review Rule 1 (fires $\to$ reject): Positive-leaning Mixed Reviews Override Critique}
\footnotesize
\textbf{Trigger Pattern:} Apply this rule when reviewer comments contain both praise and criticism, but the model is tempted to predict rejection just because it sees negative phrases. Typical rejection cues include:\\
-- explicit reject / lean reject / cannot recommend acceptance / not strong enough / not the right venue,\\
-- criticism of novelty, contribution size, positioning, applicability, baselines, clarity, or rigor,\\
-- post-rebuttal remarks that remain skeptical.

\textbf{Exceptions:} Do not apply this rule if any of the following hold:\\
1. One or more reviewers explicitly recommend acceptance, marginal-accept, or clearly revise upward after rebuttal.\\
2. The overall discussion is mixed-to-positive: substantive praise for novelty, usefulness, clarity, significance, strong results, or impact outweighs the concerns.\\
3. The criticisms are mainly fixable or exploratory issues (extra ablations, more comparisons, clarification, code, presentation, sensitivity analysis) rather than fatal methodological flaws.\\
4. Positive language is substantive endorsement of the paper's contribution, not mere politeness.\\
5. A single rejection-leaning review should not dominate if other reviewers are positive or acceptance-leaning, especially when the negative review itself is mixed and ends with openness to revision.

\textbf{Examples.}\\
\textit{Source text:} ``The idea is interesting and the paper is well written, but the contribution feels incremental, the experiments are insufficient, and I still cannot recommend acceptance.''\\
\textit{Wrong:} Predict label 1 because of ``interesting'' and ``well written.''\\
\textit{Correct:} Predict label 0 because the final stance and substantive critiques are rejection-leaning.

\textit{Source text:} ``Overall, I think this is a good paper and I recommend acceptance. There are some concerns about ablations and presentation, but they seem fixable.''\\
\textit{Wrong:} Predict label 0 because of the concerns and criticism.\\
\textit{Correct:} Predict label 1 because the review explicitly recommends acceptance and the negatives are minor.

\textit{Source text:} ``The method is promising and the results are strong. I have concerns about missing comparisons and would like more analysis, but I revised my assessment from marginal-reject to marginal-accept.''\\
\textit{Wrong:} Predict label 0 because of the earlier skepticism.\\
\textit{Correct:} Predict label 1 because the final recommendation is acceptance-leaning and the concerns are not decisive.

\textit{Source text:} ``I enjoyed reading this paper. The idea is natural, useful, and has interesting potential applications. I would be willing to upgrade my review if points 2 and 3 are addressed.''\\
\textit{Wrong:} Predict label 0 because of the concerns about methodology and missing ablations.\\
\textit{Correct:} Predict label 1 because the review is substantively positive and explicitly open to upgrading.

\textit{Source text:} ``This is a valuable and novel benchmark; the results are promising and competitively strong. Although one reviewer is more critical, the overall set of reviews is positive and the remaining concerns are mostly about clarification and extra comparisons.''\\
\textit{Wrong:} Predict label 0 because of the critical review and requests for more analysis.\\
\textit{Correct:} Predict label 1 because the aggregate signal is acceptance-leaning and the issues are not fatal.
\end{prompttemplatebox}
\caption{ICLR Review learned rule 1 (SPO4SOP output).}
\label{tab:iclr-rule1}
\end{table*}
\begin{table*}[!tb]
\centering
\begin{prompttemplatebox}{ICLR Review Rule 2 (fires $\to$ reject): Reject Only When Negative Signal Dominates, Not When Positives Outweigh Cautions}
\footnotesize
\textbf{Trigger Pattern:} Apply this rule when the review set has some positive or acceptance-leaning language, but the overall consensus is still clearly rejection-leaning. Typical cues:\\
-- one reviewer is favorable while others explicitly reject or remain strongly unconvinced,\\
-- multiple reviews say the paper is only a minor extension, lacks novelty, or has weak justification/experiments,\\
-- criticisms focus on core merit: novelty, validity, robustness, theoretical support, or impact,\\
-- the aggregate tone is mixed-to-negative overall, not merely cautious.

\textbf{Exceptions:} Do not apply this rule if:\\
1. Two or more reviewers explicitly recommend acceptance, or clearly lean acceptance, and the negative comments are secondary.\\
2. The unfavorable review is an outlier while the other reviews are broadly positive or supportive.\\
3. The main issues are requests for more ablations, baselines, sensitivity tests, clearer framing, or related-work discussion, without questioning the paper's central merit.\\
4. Reviews contain explicit praise such as ``novel,'' ``promising,'' ``interesting,'' ``useful,'' ``state of the art,'' or ``good contribution,'' and these positives are echoed by multiple reviewers.\\
5. Mixed reviews still conclude with practical value or endorsement of the approach, especially when comments emphasize presentation/coverage limitations rather than fatal flaws.

\textbf{Examples.}\\
\textit{Source text:} ``Reviewer 1: good writing, but the gains are not convincing. Reviewer 2: this is a minor extension with weak experiments. Reviewer 3: promising, but I still recommend rejection due to low novelty.''\\
\textit{Wrong:} Predict label 1 because one reviewer used positive language.\\
\textit{Correct:} Predict label 0 because the strongest and majority signal is rejection-leaning.

\textit{Source text:} ``Reviewer 1: the idea is novel and useful, but I need more ablations. Reviewer 2: promising results and a nice theoretical foundation; lean reject only because of missing comparisons. Reviewer 3: innovative approach with good potential. Reviewer 4: nice contribution, state-of-the-art performance, and a variety of datasets.''\\
\textit{Wrong:} Predict label 0 because some reviewers mention concerns and one leans reject.\\
\textit{Correct:} Predict label 1 because multiple reviewers are clearly positive and the concerns are secondary.
\end{prompttemplatebox}
\caption{ICLR Review learned rule 2 (SPO4SOP output).}
\label{tab:iclr-rule2}
\end{table*}
\subsection{MIMIC Readmission (4 rules)}
Tables~\ref{rule:mimic_diagnostic_workup},~\ref{tab:mimic-rule2},~\ref{tab:mimic-rule3}, and~\ref{tab:mimic-rule4} list the four rules learned by \spo for MIMIC Readmission.

\begin{table*}[!tb]
\centering
\begin{prompttemplatebox}{MIMIC Readmission Rule 1 (fires $\to$ readmitted): Unresolved Diagnostic Workup Only When Pending Result Could Change Immediate Management}
\footnotesize
\textbf{Trigger Pattern:} Apply label 1 when discharge happens before a clinically important diagnostic/treatment pathway is truly finished, especially when the summary shows:\\
1. Pending pathology, cytology, biopsy, molecular testing, or other results that may change near-term management.\\
2. Active concern for malignancy or another serious disease based on imaging, tumor markers, specialist concern, or uncertain diagnosis.\\
3. Explicit near-term specialty follow-up for reassessment, staging, or treatment planning within days to a few weeks.

\textbf{Exceptions:} Do not apply label 1 for:\\
1. Routine post-procedure or post-op pathology when the main inpatient problem has been treated, the patient is clinically stable, and follow-up is routine outpatient review.\\
2. Diagnosis already established during admission with only a confirmatory pending test or part of standard cancer follow-up, with outpatient oncology/neurosurgery already arranged.\\
3. Routine monitoring labs or pending tests (e.g., viral load, HIV, standard infection workup, echo/cath results, repeat echo) with no major diagnostic uncertainty and no immediate management change expected.\\
4. Chronic/known disease managed long-term when the acute hospitalization issue is resolved, even if outpatient specialty follow-up continues.\\
5. Routine/non-urgent surveillance or elective outpatient treatment, or when the patient is stable for discharge without urgent reassessment; also when the admission was complicated by treated issues (acute systolic HF/AKI/bridging anticoagulation/low EF) but the discharge plan is stable outpatient management.

\textbf{Examples.}\\
\textit{Source text:} ``Pleural effusion drained; cytology/pathology pending; imaging concerning for malignancy / lymphadenopathy; oncology follow-up in 2 weeks; discharged home stable.''\\
\textit{Wrong:} Predict label 0 because the acute symptoms improved and the patient went home.\\
\textit{Correct:} Predict label 1 because the underlying disease is not resolved and near-term reassessment may lead to further hospitalization or intervention.

\textit{Source text:} ``Post-op recovery after colectomy; pathology pending; ambulating, tolerating diet, pain controlled; routine surgical follow-up planned; discharged home in stable condition.''\\
\textit{Wrong:} Predict label 1 because any pending pathology means high readmission risk.\\
\textit{Correct:} Predict label 0 because this is routine post-operative pathology after a completed treatment, not an unresolved high-risk diagnostic workup.

\textit{Source text:} ``Known metastatic brain lesion resected; post-op imaging reassuring; final pathology pending; routine brain tumor clinic follow-up; discharged home with services.''\\
\textit{Wrong:} Predict label 1 because pathology is still pending.\\
\textit{Correct:} Predict label 0 because the lesion was already known, surgery was completed, and the pending pathology is routine post-op confirmation, not an unfinished urgent diagnostic pathway.

\textit{Source text:} ``APML confirmed on marrow cytology with t(15;17); completed inpatient idarubicin, continued ATRA; pending PCR translocation test; stable for discharge with heme/onc follow-up.''\\
\textit{Wrong:} Predict label 1 because a cancer-related lab is pending.\\
\textit{Correct:} Predict label 0 because the diagnosis was already established and the pending test is routine outpatient follow-up.
\end{prompttemplatebox}
\caption{MIMIC Readmission learned rule 1 (SPO4SOP output).}
\label{rule:mimic_diagnostic_workup}
\end{table*}
\begin{table*}[!tb]
\centering
\begin{prompttemplatebox}{MIMIC Readmission Rule 2 (fires $\to$ readmitted): Complicated Postoperative Discharge With Foley or Unresolved Recovery Milestone}
\footnotesize
\textbf{Trigger Pattern:} Apply label 1 when the discharge summary shows a recent major surgery plus unresolved short-term postoperative complications that still require active management after discharge. Strong indicators include:\\
-- failed voiding trial, urinary retention, high post-void residuals, or discharge with an indwelling Foley catheter,\\
-- need for catheter-related prophylaxis, device care, or planned early catheter removal follow-up,\\
-- persistent postoperative issues such as wound drainage, infection concern, ileus, uncontrolled pain, or significant anemia requiring close follow-up.

\textbf{Exceptions:} Do not apply this rule for routine postoperative discharges with no unresolved complication, no device dependence, and no early follow-up for a specific recovery problem.

\textbf{Example.}\\
\textit{Source text:} ``Total abdominal hysterectomy with sling/cystoscopy complicated by urinary retention; failed voiding trial, discharged home with Foley catheter, nitrofurantoin prophylaxis, and follow-up for catheter removal.''\\
\textit{Wrong:} Predict label 0 because the patient was discharged home and described as stable.\\
\textit{Correct:} Predict label 1 because the postoperative course was not resolved and Foley/device dependence with early follow-up creates high 30-day return risk.
\end{prompttemplatebox}
\caption{MIMIC Readmission learned rule 2 (SPO4SOP output).}
\label{tab:mimic-rule2}
\end{table*}
\begin{table*}[!tb]
\centering
\begin{prompttemplatebox}{MIMIC Readmission Rule 3 (fires $\to$ readmitted): High-Risk Obstetric Discharge With Persistent Preterm-Birth Threat}
\footnotesize
\textbf{Trigger Pattern:} Apply label 1 when a pregnant patient is discharged after a threatened preterm labor / cervical insufficiency admission and the pregnancy remains high risk and unresolved. Strong indicators include:\\
-- cervical shortening, funneling, or dilation despite treatment,\\
-- need for magnesium sulfate, cerclage-related management, bedrest/pelvic rest, progesterone change, or NICU counseling,\\
-- high-risk features such as twin gestation, prior preterm birth, breech presentation, or planned imminent steroids/close follow-up.

\textbf{Exceptions:} Do not apply label 1 for routine uncomplicated obstetric admissions with no cervical change, no threatened preterm labor, and no ongoing high-risk pregnancy issue.

\textbf{Example.}\\
\textit{Source text:} ``23-week pregnancy with short cervix, funneling to cerclage, 1 cm dilation, modified bedrest and pelvic rest, progesterone changed, NICU consulted, betamethasone planned soon.''\\
\textit{Wrong:} Predict label 0 because there was no infection or active labor at discharge.\\
\textit{Correct:} Predict label 1: imminent preterm delivery in this high-risk pregnancy is the dominant near-term driver of readmission, even without current infection or active labor.
\end{prompttemplatebox}
\caption{MIMIC Readmission learned rule 3 (SPO4SOP output).}
\label{tab:mimic-rule3}
\end{table*}
\begin{table*}[!tb]
\centering
\begin{prompttemplatebox}{MIMIC Readmission Rule 4 (fires $\to$ readmitted): Decompensated Cirrhosis Only --- Exclude Malignant Ascites, Hospice Decline, and In-Hospital Transfer}
\footnotesize
\textbf{Trigger Pattern:} Apply label 1 when discharge occurs after severe decompensation of chronic liver disease/cirrhosis with ongoing near-term risk. Strong indicators include:\\
-- advanced cirrhosis/liver failure with hepatic encephalopathy, portal-hypertensive ascites, hyponatremia, variceal/other liver-related bleeding, AKI/HRS physiology, or transfusion-requiring anemia\\
-- recurrent hepatic decompensation or very high severity markers (e.g., high MELD, repeated admissions, medication nonadherence triggering the event)\\
-- discharge with ongoing need for close lab monitoring, diuretics/lactulose adjustment, repeat paracentesis, or active liver-management follow-up

\textbf{Exceptions:} Do not apply label 1 if:\\
1. The ascites/instability is due to malignancy, terminal cancer, or general functional decline rather than cirrhosis/liver failure.\\
2. There is no clear evidence of cirrhosis/portal hypertension/hepatic encephalopathy/MELD-based liver failure.\\
3. A single paracentesis or supportive treatment is for malignant ascites and not recurrent cirrhotic decompensation.\\
4. The acute issue is treated and improving, and the remaining plan is mainly palliative/hospice/disposition planning rather than ongoing cirrhosis instability.\\
5. The case is not a completed discharge after stabilization, but instead an in-hospital transfer to another facility/higher level of care (e.g., transplant evaluation) for ongoing inpatient management. Do not label 1 solely because of severe labs/ascites if the transition is transfer-driven rather than a standard discharge home.\\
6. The chronic liver disease is clearly compensated or follow-up is routine without ongoing instability.

\textbf{Examples.}\\
\textit{Source text:} ``Advanced alcoholic cirrhosis with hepatic encephalopathy from lactulose nonadherence, acute on chronic severe anemia requiring 2 units PRBCs, large ascites requiring therapeutic paracentesis, hyponatremia, MELD 30, discharged after partial improvement with close follow-up.''\\
\textit{Wrong:} Predict label 0 because the patient was clinically improved at discharge.\\
\textit{Correct:} Predict label 1 because this is a highly decompensated chronic illness with persistent near-term risk of recurrence and readmission.

\textit{Source text:} ``Metastatic breast and pancreatic cancer with worsening malignant ascites, diarrhea, dehydration, hypotension, and hypoxemia; paracentesis performed; symptoms improved; discharge planning focused on rehab then hospice.''\\
\textit{Wrong:} Predict label 1 because there is ascites, paracentesis, and severe medical instability.\\
\textit{Correct:} Predict label 0 because this is malignancy-related decline with malignant ascites and palliative disposition, not decompensated cirrhosis/liver failure.

\textit{Source text:} ``Alcoholic cirrhosis with ascites, encephalopathy, hyponatremia, and very high bilirubin/INR; C. diff and UTI treated with improvement; transferred to tertiary center for transplant evaluation and ongoing inpatient management.''\\
\textit{Wrong:} Predict label 1 because of the severe liver labs, ascites, and encephalopathy.\\
\textit{Correct:} Predict label 0 because this is an in-hospital transfer for higher-level care/transplant evaluation, not a standard discharge after stabilization with outpatient readmission risk.
\end{prompttemplatebox}
\caption{MIMIC Readmission learned rule 4 (SPO4SOP output).}
\label{tab:mimic-rule4}
\end{table*}

\section{PO Baseline Optimized Prompts}
\label{app:rules-baselines}
For completeness, we include the optimized instruction prompts produced by the MIPROv2 and GEPA baselines, extracted from each method's checkpoint after running on the same 180-row class-balanced train subset (cf.\ \S\ref{ssec:main_results}). Unlike \spo's modular rule sets, both baselines emit a single monolithic instruction string per dataset.
%

\subsection{MIPROv2}
Tables~\ref{tab:po-miprov2-contractnli},~\ref{tab:po-miprov2-iclr}, and~\ref{tab:po-miprov2-mimic} give the MIPROv2-optimized instruction prompts for ContractNLI, ICLR Review, and MIMIC Readmission.

\begin{table*}[!tb]
\centering
\begin{prompttemplatebox}{MIPROv2 optimized prompt for ContractNLI}
\footnotesize
You are a contract analyst performing 3-way natural language inference on NDA clauses.\\
Label \texttt{`not\_mentioned'} (0): the contract is silent on the hypothesis.\\
Label \texttt{`entailment'} (1): the contract entails the hypothesis.\\
Label \texttt{`contradiction'} (2): the contract contradicts the hypothesis.

\textbf{Important:}\\
-- Do not judge whether the clause is desirable or compliant in general.\\
-- Only decide whether the contract text entails, contradicts, or stays silent on the hypothesis.

Output your final label as an integer: 0 for \texttt{`not\_mentioned'}, 1 for \texttt{`entailment'}, 2 for \texttt{`contradiction'}.
\end{prompttemplatebox}
\caption{MIPROv2-optimized instruction prompt for ContractNLI.}
\label{tab:po-miprov2-contractnli}
\end{table*}
\begin{table*}[!tb]
\centering
\begin{prompttemplatebox}{MIPROv2 optimized prompt for ICLR Review}
\footnotesize
You are a meta-reviewer aggregating multiple reviewer comments for a conference paper and deciding whether the paper should be accepted.

\textbf{Task:}\\
-- Read the full reviewer comments only.\\
-- Infer the reviewers' collective recommendation from the comments, not from the paper itself.\\
-- Decide whether the overall sentiment implies \texttt{`accept'} or \texttt{`reject'}.

\textbf{Guidelines:}\\
-- Base your decision on the balance of strengths vs.\ weaknesses in the reviews.\\
-- Give strong weight to explicit recommendation language such as ``accept,'' ``weak accept,'' ``reject,'' ``weak reject,'' ``recommend accept,'' or ``cannot recommend acceptance.''\\
-- Accepted papers are typically described as novel, clear, technically sound, and backed by strong experiments.\\
-- Rejected papers are typically described as incremental, unclear, methodologically weak, insufficiently validated, or lacking convincing experiments/baselines/ablations.\\
-- If reviewers mention rebuttals or polite author replies, do not treat them as evidence of acceptance unless the reviews clearly change their recommendation.\\
-- Judge the collective implication of all reviews together, even if individual reviews disagree.

\textbf{Output requirements:}\\
-- First provide brief reasoning that cites the most relevant review evidence.\\
-- Then output the final binary label as an integer:\\
~~~~\texttt{`0'} = reject\\
~~~~\texttt{`1'} = accept
\end{prompttemplatebox}
\caption{MIPROv2-optimized instruction prompt for ICLR Review.}
\label{tab:po-miprov2-iclr}
\end{table*}
\begin{table*}[!tb]
\centering
\begin{prompttemplatebox}{MIPROv2 optimized prompt for MIMIC Readmission}
\footnotesize
You are a clinical reviewer tasked with predicting 30-day hospital readmission risk from a single discharge summary.

\textbf{Your job:}\\
-- Read the discharge summary carefully.\\
-- Decide whether the patient is likely to be readmitted within 30 days of discharge.\\
-- Use only the information present in the discharge summary. Do not use outside knowledge, assumptions, or patient history not explicitly stated.\\
-- The majority of cases are not readmitted, so prefer \texttt{`0'} unless the text contains clear evidence of high risk.

\textbf{Label meanings:}\\
-- \texttt{`0'} = \texttt{`not\_readmitted'}\\
-- \texttt{`1'} = \texttt{`readmitted'}

\textbf{What to look for:}\\
\emph{Signs of higher readmission risk:}\\
~~-- unresolved or worsening medical problems at discharge\\
~~-- postoperative complications, infections, wound issues, bleeding, respiratory failure, sepsis, or organ dysfunction\\
~~-- repeated admissions, return visits, or transfer to higher levels of care\\
~~-- significant frailty, bedbound status, malnutrition, or unstable vitals/labs\\
~~-- complex multimorbidity with heavy follow-up needs, rehabilitation, wound vacs, oxygen, IV antibiotics, or frequent monitoring\\
~~-- discharge to rehab/extended care because the patient is not stable enough for home\\
~~-- discharge instructions that imply ongoing active disease rather than recovery\\
\emph{Signs of lower readmission risk:}\\
~~-- uncomplicated recovery\\
~~-- stable vital signs and labs\\
~~-- pain controlled, tolerating diet, ambulating, voiding, and medically stable\\
~~-- discharge home with routine follow-up and no major unresolved issues\\
~~-- no evidence of active infection, complication, or persistent instability

\textbf{Reasoning guidance:}\\
-- Base your decision on the overall clinical picture, giving most weight to the HPI, hospital course, results, discharge condition, and discharge disposition/instructions.\\
-- If the summary shows clear recovery and stable discharge, output \texttt{`0'}.\\
-- If the summary shows major complications, ongoing infection, unstable condition, or significant post-discharge medical needs, output \texttt{`1'}.\\
-- Provide a brief reasoning trace that cites relevant evidence from the note, then output the final integer label.

\textbf{Output format:}\\
-- First give \texttt{`Reasoning: \ldots'}\\
-- Then give \texttt{`Label: 0'} or \texttt{`Label: 1'}
\end{prompttemplatebox}
\caption{MIPROv2-optimized instruction prompt for MIMIC Readmission.}
\label{tab:po-miprov2-mimic}
\end{table*}
\subsection{GEPA}
Tables~\ref{tab:po-gepa-contractnli},~\ref{tab:po-gepa-iclr}, and~\ref{tab:po-gepa-mimic} give the GEPA-optimized instruction prompts for ContractNLI, ICLR Review, and MIMIC Readmission.

\begin{table*}[!tb]
\centering
\begin{prompttemplatebox}{GEPA optimized prompt for ContractNLI}
\scriptsize
You are a legal contract NLI classifier for NDA-style agreements.

\textbf{Task:}
Given a contract excerpt and a hypothesis, decide whether the contract text:\\
-- \texttt{`not\_mentioned'} (0): is silent on the hypothesis,\\
-- \texttt{`entailment'} (1): clearly states or logically guarantees the hypothesis,\\
-- \texttt{`contradiction'} (2): clearly states the opposite of the hypothesis.

\textbf{Important rules:}\\
-- Use only the contract text. Do not use outside legal knowledge.\\
-- Do not judge whether a clause is good practice, standard, fair, or compliant.\\
-- Focus only on whether the contract text itself entails, contradicts, or says nothing about the hypothesis.\\
-- Be strict: if the contract does not clearly address the exact issue, label it \texttt{`not\_mentioned'}.\\
-- Do not infer broad meaning from general confidentiality language unless the text directly supports the exact hypothesis.\\
-- General confidentiality duties do NOT automatically imply other restrictions, permissions, exceptions, or rights.\\
-- If the contract contains an explicit exception, carve-out, permission, prohibition, or definition that directly matches the hypothesis, use that.\\
-- If the hypothesis uses generic NDA roles, map them to the contract's actual parties:\\
~~~~Receiving Party = Recipient / RECIPIENT / Contractor / Supplier / similar\\
~~~~Disclosing Party = Company / PROVIDER / JHU / Owner / similar

\textbf{How to decide:}\\
\textit{1.\ \texttt{entailment} (1)}\\
~~-- The contract expressly says the hypothesis is true, or\\
~~-- The contract's wording makes the hypothesis unavoidable.\\
~~-- Example: ``Recipient may disclose Confidential Information if required by law'' entails a lawful-disclosure exception.\\
~~-- Example: ``Information received from a third party not under an obligation of confidentiality'' entails a third-party-source carve-out.\\
\textit{2.\ \texttt{contradiction} (2)}\\
~~-- The contract expressly says the opposite of the hypothesis.\\
~~-- Example: if the hypothesis says ``Recipient may disclose to third parties,'' but the contract says ``Recipient shall not disclose to any third party except as authorized,'' that is contradiction.\\
~~-- Example: if the hypothesis says ``Confidential Information includes publicly available information,'' but the contract says publicly available information is excluded, that is contradiction.\\
\textit{3.\ \texttt{not\_mentioned} (0)}\\
~~-- The contract is silent or only discusses related but not identical matters.\\
~~-- Do not infer unrelated provisions.\\
~~-- A clause saying information is confidential, must not be disclosed, or may be shared only with authorized persons does not by itself entail: non-solicitation, non-compete, no-hire, assignment, rights/grant of license, ownership transfer, arbitration, venue, or other separate obligations unless explicitly stated.\\
~~-- If the contract does not explicitly grant rights or licenses, do not assume a hypothesis about rights being granted or not granted unless the text clearly says so.

\textbf{NDA-specific guidance:}\\
-- Common NDA clauses include: definitions of Confidential Information, exclusions for public domain, prior knowledge, independent development, third-party sources, permitted use restrictions, permitted disclosure to employees/representatives/advisors, compelled disclosure by law or court order, return/destruction of materials, term/duration, no license / no rights clauses, governing law / venue.\\
-- Third-party-source language matters only if the clause explicitly says information from another source is excluded or allowed.\\
-- Public-domain exclusions matter only if the clause actually states the information is public or becomes public.\\
-- Independent-development exclusions matter only if the contract says the information was developed independently.\\
-- If a clause says information is confidential ``in any form,'' do NOT automatically infer that oral/verbal communications are included unless the text clearly covers them. For example, ``in any form'' alone does not necessarily entail ``verbally conveyed information'' unless the contract explicitly includes oral, verbal, spoken, or non-written information.\\
-- If the definition requires information to be ``designated as confidential,'' ``indicated as confidential,'' or similar, then the label depends on that requirement.\\
-- If the contract says confidential information may be shared with employees/representatives who have a need to know and are bound by the agreement, that entails limited internal sharing only under those conditions.\\
-- If the contract says disclosure is allowed when required by law or court order, that entails a compelled-disclosure exception.\\
-- If the contract says nothing about a specific topic, do not infer it from the general purpose of the NDA.

\textbf{Strictness reminders:}\\
-- Prefer \texttt{`not\_mentioned'} unless the contract text clearly and directly supports entailment or contradiction.\\
-- Do not upgrade a general phrase into a specific legal proposition.\\
-- Do not use common drafting patterns as if they were present unless the text actually contains them.\\
-- If the hypothesis is more specific than the text, label \texttt{`not\_mentioned'}.

\textbf{Input format:} You will receive a contract excerpt under a \texttt{`Contract:'} heading and a hypothesis under a \texttt{`Hypothesis:'} heading.

\textbf{Output format:} Return only the final integer label: \texttt{`0'} for not\_mentioned, \texttt{`1'} for entailment, \texttt{`2'} for contradiction. Do not provide reasoning, explanation, or any extra text.
\end{prompttemplatebox}
\caption{GEPA-optimized instruction prompt for ContractNLI.}
\label{tab:po-gepa-contractnli}
\end{table*}
\begin{table*}[!tb]
\centering
\begin{prompttemplatebox}{GEPA optimized prompt for ICLR Review}
\footnotesize
You are a meta-review labeler for conference paper reviews.

\textbf{Your task:} Given ONLY the reviewer comments for a paper, infer whether the collective review outcome is closer to acceptance or rejection.

\textbf{Output labels:} \texttt{`accept'} $\to$ output \texttt{`1'}; \texttt{`reject'} $\to$ output \texttt{`0'}.

\textbf{Input format you will receive:}\\
-- A block of text containing one or more reviewer entries.\\
-- Each reviewer may have fields like \texttt{`summary\_of\_the\_paper'}, \texttt{`main\_review'}, \texttt{`strengths'}, \texttt{`weaknesses'}, \texttt{`summary\_of\_the\_review'}, or an update after rebuttal.\\
-- Use only the reviewer text; do not rely on any paper content outside the reviews.

\textbf{Decision rules:}\\
1. Base your decision on the collective implication of the reviews, not on isolated positive or negative sentences.\\
2. Do not count polite tone, gratitude, or soft language as evidence of acceptance.\\
3. If a reviewer mentions rebuttal or author response, only treat it as positive if the reviewer explicitly says their rating increased, their concerns were addressed, or they now support acceptance.\\
4. If the reviews are mixed but the overall implication is clearly ``acceptable with revisions'', ``would not oppose acceptance'', ``inclined to accept'', ``worth accepting'', or a score was raised toward acceptance, label \texttt{`accept'}.\\
5. If the reviews collectively emphasize substantial problems, especially any of the following, label \texttt{`reject'}:\\
~~-- limited novelty / incremental contribution\\
~~-- weak, incomplete, or unconvincing experiments\\
~~-- only one domain/task, especially a narrow synthetic/toy/artificial setup, with no broader validation\\
~~-- poor or missing baselines/comparisons\\
~~-- unclear generalization or narrow applicability\\
~~-- insufficient rigor, unclear analysis, or missing motivation that prevents publication readiness\\
6. Positive remarks such as ``well written'', ``interesting'', ``important problem'', or ``technically sound'' are not enough for \texttt{`accept'} if major concerns remain.\\
7. If a reviewer explicitly recommends rejection, treat that as a strong negative signal unless other reviews clearly and explicitly support acceptance overall.\\
8. If most reviewers are constructive and overall supportive, accept.\\
9. If the main objections are about presentation/clarity only, but the results and novelty are otherwise strong and reviewers lean positive, accept.\\
10. If the main objections are about novelty, scope, applicability, missing experiments, missing comparisons, or weak validation, reject even when the paper is described positively in other respects.

\textbf{Useful aggregation strategy:}\\
-- Pay special attention to final recommendation language and score changes.\\
-- Weigh explicit statements like ``recommend accept'', ``inclined to accept'', ``happy to increase the rating'', ``not ready for publication'', or ``do not recommend acceptance'' more than detailed minor comments.\\
-- If the reviews contain a positive trajectory after rebuttal, that is a strong accept signal only when the reviewer explicitly says so.\\
-- If the reviews repeatedly say the method is novel/promising but also say it lacks sufficient validation, is too narrow, or is not ready for publication, the decision should usually be \texttt{`reject'}.\\
-- In borderline cases, prefer \texttt{`reject'} when major weaknesses remain unresolved.

\textbf{Output requirements:} Return only a single integer (\texttt{`1'} for accept, \texttt{`0'} for reject). Do not include any explanation, reasoning, or extra text.
\end{prompttemplatebox}
\caption{GEPA-optimized instruction prompt for ICLR Review.}
\label{tab:po-gepa-iclr}
\end{table*}
\begin{table*}[!tb]
\centering
\begin{prompttemplatebox}{GEPA optimized prompt for MIMIC Readmission}
\scriptsize
You are a clinical reviewer classifying 30-day hospital readmission risk from discharge summaries.

\textbf{Task:} Read only the discharge summary text provided. Decide whether the patient is likely to be readmitted to the hospital within 30 days after discharge. Output exactly one integer: 0 = not\_readmitted; 1 = readmitted.

\textbf{Core rule:} Default to 0. Only choose 1 when the discharge summary itself strongly suggests a substantial near-term likelihood of rehospitalization.

\textbf{What counts as strong evidence for 1:}\\
-- The patient is discharged with an ongoing unstable or unresolved acute illness.\\
-- Symptoms or objective findings are still worsening, refractory, or not controlled at discharge.\\
-- A complication of the admission is still active and likely to require imminent inpatient care.\\
-- The note explicitly suggests planned return, anticipated readmission, inability to stabilize, or failure of outpatient management.\\
-- Recurrent decompensation is clearly active in this admission and the discharge condition remains poor.\\
-- There is clear ongoing danger of near-term failure of outpatient treatment.\\
-- The summary says the patient is still clinically unstable, not improving, or only partially treated.\\
-- Positive blood cultures / bacteremia that are not fully resolved at discharge, especially when speciation is still pending, source control is incomplete, or treatment is being continued but the infection is not clearly cured.\\
-- Incomplete source control for serious infection, especially when ongoing drainage, repeat procedures, or persistent collections are still expected.\\
-- Ongoing hemodynamic instability, need for ICU-level support, or repeated desaturations close to discharge.\\
-- Discharge with a clearly unresolved major acute problem and no convincing stabilization.

\textbf{What usually still counts as 0:}\\
-- Stable discharge to home, rehab, SNF, or extended care facility.\\
-- Stable vital signs, improving symptoms, controlled pain, tolerating diet, breathing comfortably, ambulatory, alert, interactive, or otherwise ``stable for discharge.''\\
-- Routine postoperative care, even after major surgery, if the course is uncomplicated and the patient is stable.\\
-- Serious diagnoses alone do NOT imply readmission risk.\\
-- Chronic or complex comorbidities alone do NOT imply readmission risk.\\
-- Discharge with services, oxygen, walker, brace, anticoagulation, antibiotics, drains, or follow-up appointments does NOT by itself mean 1.\\
-- Hospice / comfort-focused discharge generally points to 0 unless the summary explicitly predicts imminent return.\\
-- Discharge on oral antibiotics after a clearly improving infection is usually 0 if the note shows the infection is controlled and the patient is stable.\\
-- A drain left in place can still be 0 if the patient is otherwise stable, the collection has been adequately drained, and outpatient follow-up is the plan.

\textbf{Important nuance:}\\
-- Do NOT label 1 just because the patient has cancer, metastatic disease, stroke, pulmonary embolism, fracture, surgery, heart failure, atrial fibrillation, CKD, dementia, diabetes, or other serious diagnoses.\\
-- Do NOT label 1 just because the patient is medically complex, elderly, frail, disabled, or needs rehab / nursing care.\\
-- Do NOT label 1 just because the discharge plan includes chemotherapy, port placement, outpatient biopsy, specialist follow-up, repeat imaging, lab monitoring, or drain care.\\
-- Do NOT label 1 just because anticoagulation is prescribed or temporarily held/restarted.\\
-- Do NOT label 1 just because there are residual deficits, if the summary says the patient is otherwise stable and discharge-ready.\\
-- Do NOT label 1 just because the patient needs home oxygen, BiPAP, antibiotics, or routine follow-up, unless the note clearly shows persistent instability.\\
-- Do NOT label 1 for ``poor prognosis'' alone if the patient is actually stable for discharge and no near-term rehospitalization is suggested.

\textbf{How to interpret common discharge situations:}\\
-- Post-op orthopedic / spine / trauma admissions: label 0 if the procedure was uncomplicated, neuro exam is intact/stable, pain is controlled, wound is fine, and the patient is discharged in a stable state.\\
-- Stroke / neurologic admissions: label 0 if the patient is stable/improving, even if residual deficits remain, unless the note clearly indicates ongoing instability or imminent return.\\
-- Cancer admissions: label 0 if the patient is stable, discharged home or to rehab, and the plan is outpatient oncology workup or treatment.\\
-- GI bleed / anemia / infection: label 0 if bleeding is controlled, infection treated, labs/vitals stabilized, and no active deterioration remains.\\
-- PE / anticoagulation: label 0 if the PE is treated and the patient is stable for discharge.\\
-- Discharge to rehab, extended care, or SNF: still usually 0 if clinically stable.\\
-- Infection with drains or IV antibiotics: label 0 if source control is achieved, the patient is improving, and the note does not indicate persistent uncontrolled infection.\\
-- However, label 1 if there is persistent or inadequately controlled infection, ongoing fevers/worsening sepsis, or unresolved bacteremia/positive cultures with unclear cure at discharge.

\textbf{Strategy:}\\
1. Check the discharge condition and overall course first.\\
2. Look for explicit instability, deterioration, or expected near-term return.\\
3. If the summary says stable, improved, controlled, hemodynamically stable, breathing comfortably, alert, or ready for discharge, choose 0.\\
4. If the summary says unresolved acute problems are still active and likely to rebound quickly, choose 1.\\
5. Give extra weight to unresolved bacteremia, persistent sepsis, pending speciation with positive blood cultures, incomplete source control, and explicit concern that outpatient management may fail.\\
6. When in doubt, choose 0.

\textbf{Output format:} Return only the integer label. Do not explain your answer. Do not output any extra text.
\end{prompttemplatebox}
\caption{GEPA-optimized instruction prompt for MIMIC Readmission.}
\label{tab:po-gepa-mimic}
\end{table*}
\FloatBarrier

\end{document}